\documentclass[twoside,11pt]{article}
\usepackage{amsfonts,amsmath,amssymb}
\usepackage{mathrsfs}
\usepackage{blindtext}
\usepackage{float}

\newcommand{\eqdef}{\ensuremath{\stackrel{\mbox{\upshape\tiny def.}}{=}}}

\usepackage[preprint]{jmlr2e_mod}

\usepackage{subcaption}

\usepackage[ruled,vlined,linesnumbered]{algorithm2e}

\usepackage{bbm}
\usepackage{color}

\newtheorem{assumption}[theorem]{Assumption}

\DeclareMathOperator*{\argmax}{arg\,max}

\usepackage{lastpage}

\ShortHeadings{Certifiably Interpretable Training of ReLU-MLPs for Boolean Tasks}{H.\ Ghoukasian and A.\ Kratsios}
\firstpageno{1}

\begin{document}

\title{Certifiably Interpretable Training of ReLU-MLPs for Boolean Tasks with Guaranteed Truth-Table Generalization}

\author{\name Hrad Ghoukasian
\email hrad.ghoukasian@mail.utoronto.ca \\
       \addr Department of Electrical \& Computer Engineering \\
       University of Toronto\\
       40 St George St, Toronto, Ontario, M5S 2E4, Canada
       \AND
       \name Anastasis Kratsios \email kratsioa@mcmaster.ca \\
       \addr Department of Mathematics \\
       McMaster University and Vector Institute\\
       1280 Main Street West, Hamilton, Ontario, L8S 4K1, Canada}


\editor{My editor}

\maketitle

\begin{abstract}
As compute scales, models evolve, and training algorithms advance, our ability to explain the increasingly powerful AI systems they enable is eroding. 
To help safeguard interpretability, we introduce a specialized training algorithm (\textsc{Macchiato}) that jointly constructs (i) an explicitly structured $\operatorname{ReLU}$-MLP 
from partial truth-table observations and (ii) an explicit Boolean circuit over signed literals with $\{\operatorname{AND},\operatorname{OR},\operatorname{XOR}\}$ gates 
\textit{certifying} what its subnetworks compute and how they compose. 
Intuitively, we \textit{iteratively project the residuals of a Boolean function onto low-dimensional $\{\operatorname{AND},\operatorname{OR},\operatorname{XOR}\}$-circuit classes and exactly compile the resulting circuit into a $\operatorname{ReLU}$-MLP}; we combine $\operatorname{ReLU}$-MLP circuit compilation, \textsc{Espresso} logic minimization, and influence-based variable selection.

Roughly speaking, our interpretability certificate is complemented by a statistical guarantee: under the theorem's influence-recovery conditions, if each of the $m$ stage-wise residuals depends on at most $\log_2(B)$ bits, a sample-splitting variant of our algorithm trained on $T$ observations returns a six-layer $\operatorname{ReLU}$-MLP (counting the input layer) of width $\mathcal{O}(mB)$ with truth-table error
$\mathcal{O}\bigl(\sqrt{m(B+\log(m/\delta))/T}\bigr)$.

On synthetic random-junta tasks, our networks outperform depth- and hidden-width-matched Adam-trained MLPs in several data-sparse or projection-aligned regimes, while the trained ReLU MLPs are stronger in others.
Moreover, in our explicit \texttt{PyEDA} truth-table implementation, the iterative procedure completes in regimes where flat ambient-dimensional \textsc{Espresso} exceeds the three-hour computational budget.
\end{abstract}

\begin{keywords}
AI interpretability; certifiable interpretability; logical reasoning; Boolean function learning; partial truth tables; interpretable neural networks; Boolean circuits; logic minimization.
\end{keywords}

\section{Introduction}
\label{sec:intro}
The remarkable gains in modern AI have been driven by rapid advances in compute~\cite{owens2008gpu}, increasingly expressive model classes~\cite{vaswani2017attention}, and increasingly sophisticated gradient-based training algorithms~\cite{loshchilov2019decoupled}. Nevertheless, the rapid scaling of modern deep learning~\citep{hestness2017deep,kaplan2020scaling,bahri2024explaining} has raised a number of \textit{AI safety} concerns, e.g.~\cite{bengio2024managing,bengio2026international}, partly driven by our limited ability to interpret how increasingly complex models arrive at their predictions.

These concerns are mirrored by recent legislative efforts to establish interpretability and transparency guardrails for AI systems across North America~\cite{canada2026aia,mexico2024ailaw,nist2023airmf}, Europe~\cite{eu2024aiact}, China~\cite{china2022algorithmicrecommendation}, and Russia~\cite{russia2024aistrategy}. Yet, partly because standardized tools for interpreting AI systems remain underdeveloped, such guidelines are often less concrete than regulatory frameworks governing other ``mathematical technologies with widespread social impact.'' For instance, financial risk management has developed standardized quantitative tools, such as expected shortfall~\cite{delbaen1998coherent}, together with comparatively explicit regulatory requirements introduced following the 2008 financial crisis, including Basel~2.5~\cite{bcbs2009market} and the FRTB~\cite{bcbs2012frtb}.

While equally tangible interpretability standards for AI remain a longer-term objective, the goal of this paper is to make a concrete step toward models whose internal computations are interpretable by construction. Rather than developing tools for interpreting models obtained through standard gradient-based training, cf.\ reasoning probes~\cite{alain2017understanding,hewitt2019designing,burns2023discovering} and attribution methods~\cite{ribeiro2016why,lundberg2017unified,sundararajan2017axiomatic}, we introduce a training algorithm that produces \textit{both}:
\begin{enumerate}
    \item[(i)] a predictive neural network
    \item[(ii)] an explicit decomposition of that network into components whose computations can be read off as logical formulas.
\end{enumerate}
We pursue this alternative route in light of growing evidence that gradient-trained neural networks can interpolate training data or exploit statistical shortcuts without necessarily recovering the underlying rule governing the task~\cite{zhang2017understanding,geirhos2020shortcut,mccoy2019right,vershynin2020memory,vardi2022on,hong2024bridging}.

Here, we focus on the simplest realistic version of this road to interpretability; namely, we consider $B$-bit Boolean classifiers $f:\{0,1\}^B\to\{0,1\}$ observed through $T$ samples of their truth table, i.e.,\ partially observed rows of the table.

Our \textbf{objective} is to develop a training procedure that infers a model which is accurate enough to \textit{generalize} beyond the data on which it was trained and flexible enough to reconstruct a broad range of Boolean tasks $f$, while satisfying the \textit{constraint} of belonging to the small ``combinatorially meaningful'' subset of interpretable networks: namely, those that exactly compute compositions of $\operatorname{AND}$, $\operatorname{XOR}$, and $\operatorname{OR}$. We choose these elementary connectives based on the \textit{natural-language} motivations discussed below (cf.~\S~\ref{s:Intro__ss:Whytheseconnectives}).

\subsection{Main Contributions}
\label{sec:intro__ss:MainContributions}

Our main practical contributions are Algorithms~\ref{alg:sample-multistage-residual} and~\ref{alg:relu-realization}, together with their accompanying theoretical guarantees. Algorithm~\ref{alg:sample-multistage-residual}, \textsc{Macchiato}%
\footnote{The name is inspired by \textsc{Espresso}: \textsc{Macchiato} builds on repeated low-dimensional \textsc{Espresso} calls to produce a more structured learning procedure.}%
, infers an explicit $\{\operatorname{AND},\operatorname{OR},\operatorname{XOR}\}$-circuit representation of a partially observed Boolean function $f:\{0,1\}^B\to\{0,1\}$ from $T$ labeled truth-table observations. It proceeds iteratively by identifying a small set of the most influential coordinates of the current residual, marginalizing over the remaining coordinates, and applying \textsc{Espresso}~\cite{brayton1982espresso}%
\footnote{For a GPU implementation of \textsc{Espresso}, see~\cite{kanakia2021espresso}.}~%
to infer a low-dimensional Boolean correction, which is then integrated into the accumulated circuit by $\operatorname{XOR}$.

A critical practical feature of Algorithm~\ref{alg:sample-multistage-residual} is that it never passes the full ambient-dimensional truth table to \textsc{Espresso}: each call involves at most $K\ll B$ coordinates. This avoids the exponential ambient representation of size $2^B$ and substantially extends the regime in which \textsc{Espresso}-based learning remains computationally feasible; in our largest experiments, direct ambient-dimensional \textsc{Espresso} fails to return within the computational budget, while Algorithm~\ref{alg:sample-multistage-residual} completes successfully (Tables~\ref{tab:alg3-flat-results}--\ref{tab:alg3-flat-runtime} and Section~\ref{subsec:flat-espresso-comparison}).

\paragraph{Algorithm~\ref{alg:sample-multistage-residual} (\textsc{Macchiato -- Inference}): Scalable Iterative Circuit Inference.}
Under a residual influence-separation condition (Assumption~\ref{ass:thresholded-topk-gap}), we establish a quantitative influence-recovery guarantee (Theorem~\ref{thm:junta-influence-recovery}) and show that, when the stage-wise residuals have low effective dimension, Algorithm~\ref{alg:sample-multistage-residual} achieves a corresponding high-probability truth-table accuracy guarantee (Theorem~\ref{thm:junta-residual-accuracy}). More generally, Theorem~\ref{thm:simultaneous-influence-recovery} gives simultaneous influence recovery over arbitrary finite Boolean function classes, while Theorem~\ref{thm:recovery-to-accuracy} quantifies the prediction error when the selected projection incurs nonzero approximation error.

Although the main results are stated under uniform sampling on $\{0,1\}^B$, the analysis extends to arbitrary input distributions; see Appendix~\ref{app:general-distributions}. In particular, the corresponding distribution-dependent influence-recovery guarantee is given in Corollary~\ref{cor:mu-influence-recovery}, while Proposition~\ref{prop:influence-change-of-measure} provides a direct comparison between uniform and distribution-dependent influences.

\paragraph{Algorithm~\ref{alg:relu-realization} (\textsc{Macchiato -- Compile}): Certifiable $\operatorname{ReLU}$-MLP Compilation.}
Our interpretability certificate is formalized by Theorem~\ref{thm:exact-neural-realization}, which shows that the $\operatorname{ReLU}$-MLP produced by Algorithm~\ref{alg:relu-realization} exactly realizes the $\{\operatorname{AND},\operatorname{OR},\operatorname{XOR}\}$-circuit learned in the first phase. Briefly, using the ``surgery'' technique of~\cite[Proposition~6.6]{kratsios2026algorithmic}, we construct standardized small $\operatorname{ReLU}$ subnetworks that exactly realize the required Boolean gates and replace the corresponding gates in the learned circuit.

Thus, the Boolean circuit is learned first and its $\operatorname{ReLU}$-MLP realization is then \emph{compiled}. This reverses the direction of post hoc interpretability pipelines, in which a neural network is trained first and its internal computation is subsequently interpreted or approximated symbolically.

We emphasize that our certificate is \emph{modular}: we do not claim that every individual neuron has an independent semantic interpretation. Rather, identifiable neurons or small subnetworks together exactly realize the elementary operations $\operatorname{AND}$, $\operatorname{OR}$, and $\operatorname{XOR}$, and their composition is known by construction.

\subsection{Semantic motivation for the choice of connectives.}
\label{s:Intro__ss:Whytheseconnectives}
We focus on the connectives $\operatorname{AND}$, $\operatorname{OR}$, and $\operatorname{XOR}$ since psychological research suggests that conjunction ($\operatorname{AND}$) is a basic operation in ordinary propositional reasoning~\cite{johnsonlaird1992propositional}, while exclusive disjunction ($\operatorname{XOR}$) arises naturally as a pragmatic interpretation of ``or'' in everyday English%
\footnote{For example, ``Would you like coffee or tea?'' is ordinarily understood exclusively (XOR), rather than inclusively (OR).}~%
\cite{chevallier2008making}. Although inclusive disjunction ($\operatorname{OR}$) is somewhat less natural in ordinary English, it is a standard primitive of formal mathematical reasoning and elementary logic~\cite{dawkins2017guiding}.

By contrast, while more computationally efficient~\cite{furst1984parity,hastad1986almost}, the mathematical majority operation $\operatorname{MAJ}$ has no equally direct counterpart among elementary propositional connectives in English. Its natural-language realizations are instead proportional quantifiers such as ``most'' and ``more than half,'' with ``most $A$ are $B$'' evaluated by comparing $\#(A\cap B)$ and $\#(A\setminus B)$~\cite{pietroski2009meaning}. Moreover, English ``most'' need not correspond to the sharp $50\%$ threshold encoded by $\operatorname{MAJ}$ and is often interpreted as ``significantly more than half''~\cite{denic2022most}. Proportional quantifiers such as ``more than half'' are also verified more slowly and less accurately than simpler quantifiers such as ``all'' and ``some''~\cite{szymanik2010comprehension}. 
Thus, majority may be less directly interpretable in ordinary English; for this reason, we have chosen not to include it.

\subsection{Related Work}
\label{sec:intro__ss:RelatedWork}


\subsubsection{Connections to Logic Minimization and \textsc{Espresso}}
Recently,~\cite{qiao2023rethinking} used \textsc{Espresso} after black-box denoising to learn compact interpretable DNF classifiers. More broadly, although heuristic \textsc{Espresso} often approaches the solutions of computationally prohibitive exact minimizers~\cite{mccluskey1956minimization,rudell1987multiple}, its own scalability limitations have motivated SAT-, GPU-, and million-scale variants~\cite{sapra2003sat,kanakia2021espresso,nazemi2021million}; unlike these works, we avoid high-dimensional minimization through residual-adaptive low-dimensional calls and compile the inferred logic into a certifiably interpretable $\operatorname{ReLU}$-MLP.

\subsubsection{Connections to Boosting}
Our construction is related to classical boosting~\cite{schapire1990strength}, greedy residual decompositions such as matching pursuit~\cite{mallat1993matching}, and classical Boolean logic minimization~\cite{mccluskey1956minimization}.  Similar to Algorithm~\ref{alg:sample-multistage-residual}, classical boosting performs stage-wise learning, but typically aggregates
stage predictors through weighted voting~\cite{schapire1990strength,freund1997decision,friedman2001greedy}, rather than through \(\operatorname{XOR}\).  Such weighted aggregation can hinder ``human'' interpretability in the sense
discussed in Section~\ref{s:Intro__ss:Whytheseconnectives}.  A similar limitation applies to the DNF boosting algorithm of~\cite{jackson1997efficient}, which relies on majority-voting aggregation.
While there are some specialized boosting algorithms for Boolean classification problems, e.g.,~\cite{feldman2010distribution}, they typically do not perform influence-estimation since interpretable parsimonious intermediate subcircuits and scalability are not their focus.

Related approaches include binary variants of matching pursuit (e.g.,~\cite{chang2018efficient,ramirez2018binary}). While these methods likewise greedily construct structured representations from Boolean components, they seek compact factorizations of observed binary data rather than learning an unknown Boolean function from partial observations in a parsimonious fashion.
Recently,~\cite{prairie2026boosting} showed that boosting can be accelerated for concept classes closed under $\mathcal{O}(\log(1/\gamma))$-$\operatorname{XOR}$. Their use of the connective is rather different: while we use it for stage-wise boosting of Boolean residuals, they use XOR-closure to efficiently convert a weak learner into a strong learner via list decoding.

Finally, these methods \textit{do not} produce \textit{neural networks}; thus, they do not address the problem of inferring an interpretable neural network with a formal certificate. Their connection to our work is instead limited to the stage-wise, boosting-like treatment of residuals.

\subsubsection{Related Work on Neural Algorithmic Reasoning}
Our perspective is closely related to \emph{neural algorithmic reasoning} (NAR)~\cite{velivckovic2021neural,kratsios2026algorithmic}, which studies neural networks as executors of algorithmic computations, including Boolean circuits~\cite{jukna2012boolean}. 
Our main departure is that we do not prescribe the circuit and train its
neural realization, e.g.~\cite{li2026certifiable}. Instead, from a partial
truth table, we certifiably infer the relevant variables together with the
size and structure of the circuit, rather than parameterizing networks with
a fixed computational graph~\cite{kohut2004boolean} or constructing
distributions over Boolean circuits of fixed size~\cite{li2026certifiable}.

The connection between neural and logical computation dates back to~\cite{mcculloch1943logical} and underlies modern NAR~\cite{velickovic2020neural,velickovic2022clrs,ibarz2022generalist,selsam2018learning,kratsios2025quantifying}. Related architectural approaches learn logical gates, rules, or sparse connectivity~\cite{petersen2022deep,buhrer2025recurrent,yue2024learning,perreault2026neural,ciravegna2023logic,soegeng2026ttsparse}; our procedure instead uses residual-adaptive influence selection and low-dimensional logic minimization before exactly compiling the inferred circuit into the final network.

\subsubsection{Connections to Influence-Based Variable Selection}
Our influence-based dimension reduction is related to structural junta approximation~\citep{friedgut1998boolean}, junta learning~\citep{mossel2004learning}, and influential-variable methods for Boolean-function and DNF learning~\citep{servedio2004learning,feldman2012learning}, and more broadly filter-based feature selection and sensitivity analysis~\citep{guyon2003introduction,fleuret2004fast,sobol2001global}. Related approaches also rank variables using quantities such as mutual information~\citep{schnapp2021active}.
Our use of influence differs in that the selected coordinates are recomputed from the Boolean residual at each stage and therefore need not form a single fixed relevant set.  We estimate influence using Hamming-neighbor pairs observed in a passive
partial truth table, rather than assuming query access or specially generated
perturbations, and use the resulting ranking to restrict each
\textsc{Espresso} call to at most \(K\ll B\) variables.

\subsubsection{Other Related Work}
Closest to our setting,~\cite{soegeng2026ttsparse} learns sparse differentiable truth-table rules, while~\cite{dascoli2023boolformer} predicts Boolean formulas from incomplete truth tables;~\cite{yue2024learning,perreault2026neural,ciravegna2023logic} instead learn interpretable logical neural models, and~\cite{oliveira1993learning,boroumand2021learning} learn Boolean-circuit structure from examples. Related rule-learning approaches include Boolean rule ensembles~\cite{mita2020libre} and classical AND-XOR representations~\cite{sasao1993andexor}. Our method differs by combining residual-adaptive influence selection with repeated low-dimensional \textsc{Espresso} calls and exact $\operatorname{XOR}$ aggregation; moreover, under explicit assumptions we provide statistical guarantees and compile the learned circuit exactly into a $\operatorname{ReLU}$-MLP with a gate-to-subnetwork certificate. Thus, our contribution combines scalable circuit inference, statistical control, and exact neural realization.

\subsection*{Paper Organization}
\label{sec:intro__ss:PaperOrg}
Section~\ref{sec:problem-setting} introduces the learning problem and required preliminaries, and Section~\ref{sec:combinatorial-training} develops our multi-stage circuit-learning algorithm. Section~\ref{s:Main} presents our main results, comprising the statistical guarantees in Section~\ref{s:Main__sec:statistical-guarantees} and the certifiably interpretable $\operatorname{ReLU}$-MLP realization in Section~\ref{s:Main__sec:neural-realization}. Section~\ref{sec:experiments} presents our experiments, and Section~\ref{sec:conclusion} concludes. Supporting proofs, details on \textsc{Espresso}, implementation details, additional experiments, and auxiliary results are deferred to the appendices.

\section{Problem Setting and Preliminaries}
\label{sec:problem-setting}

This section introduces the learning problem and the Boolean concepts used throughout. We first formalize learning a Boolean function from a partial truth table, with the goal of generalizing to unseen entries while producing a certifiably interpretable model expressed through explicit Boolean operations. We then review the Boolean representations and structural notions used in our analysis and describe the \textsc{Espresso} algorithm and the logic-minimization subroutine used to construct compact Boolean representations from partially specified truth tables.

\subsection{Learning Boolean Functions from Partial Truth Tables}
\label{subsec:partial-truth-table-learning}

Let \(\mathbb N_+:=\{1,2,\ldots\}\).
We consider the following elementary \(k\)-ary connectives for
\(k\in\mathbb N_+\), as illustrated in Figure~\ref{fig:basic_connectives}.
For a given input vector
\(a=(a_1,\ldots,a_k)\in\{0,1\}^k\), we define:

\begin{figure}[H]
    \centering
    \begin{tabular}{ccc}
        $\displaystyle
        \bigwedge_{j=1}^{k} a_j
        \eqdef
        \min_{1\leq j\leq k} a_j$
        &
        $\displaystyle
        \bigvee_{j=1}^{k} a_j
        \eqdef
        \max_{1\leq j\leq k} a_j$
        &
        $\displaystyle
        \bigoplus_{j=1}^{k} a_j
        \eqdef
        \left(\sum_{j=1}^{k}a_j\right)\bmod(2)$
        \\[1em]
        \includegraphics[width=.26\textwidth]{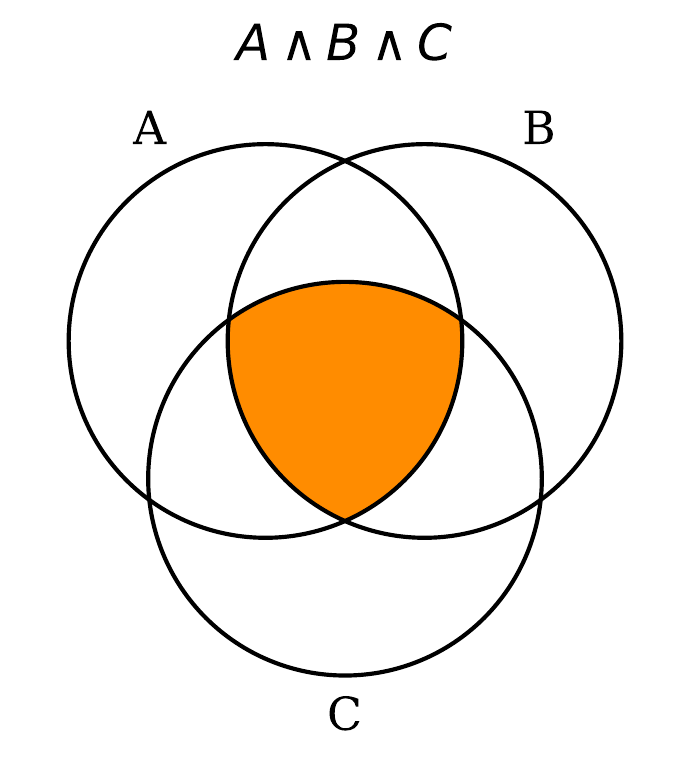}
        &
        \includegraphics[width=.26\textwidth]{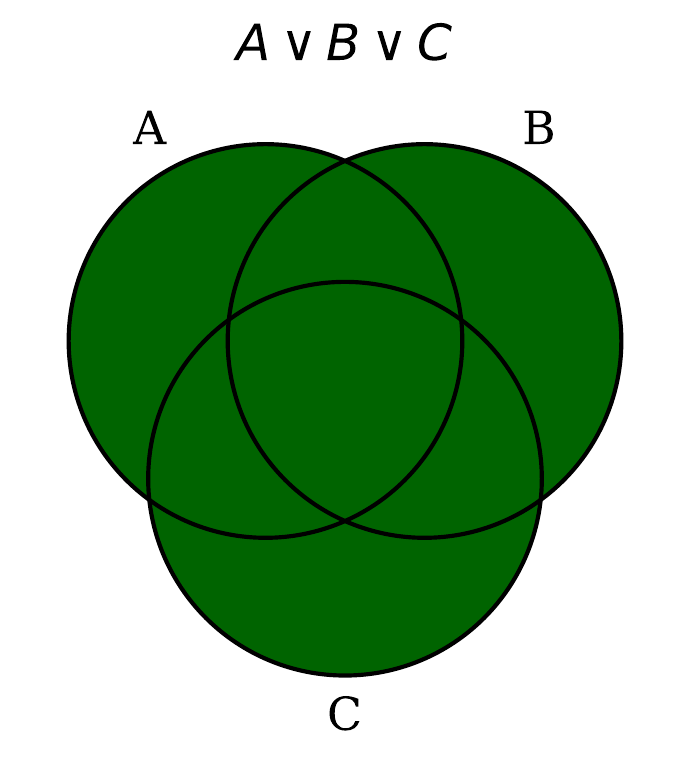}
        &
        \includegraphics[width=.26\textwidth]{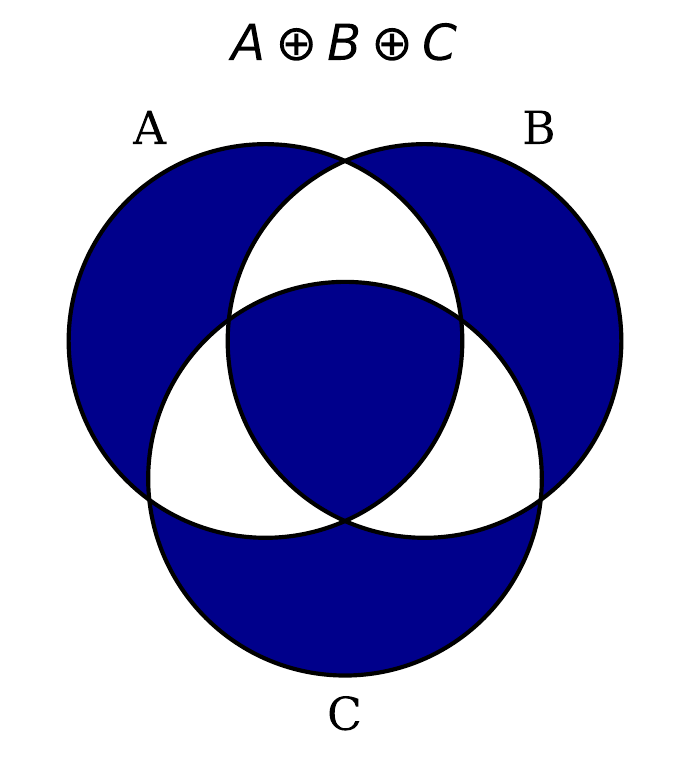}
    \end{tabular}
    \caption{The Boolean connectives on three predicates ($A$, $B$, and $C$) which we consider.}
    \label{fig:basic_connectives}
\end{figure}
We observe a training set
\(D_{\mathrm{train}}=\{(X_i,f(X_i))\}_{i=1}^{T}\).
Throughout the main body, the inputs \(X_1,\ldots,X_T\) are sampled
independently and uniformly from \(\{0,1\}^B\). A full truth table
corresponds instead to observing every one of the \(2^B\) Boolean inputs.
However, our analysis is not restricted to the uniform setting: extensions
to a general input distribution \(\mu\) are provided in the appendix.

Given \(D_{\mathrm{train}}\), the objective is to construct a predictor \(H:\{0,1\}^B\to\{0,1\}\) with small truth-table error \(\mathcal{R}(H) \eqdef  \Pr_{X\sim\operatorname{Unif}(\{0,1\}^B)}[H(X)\neq f(X)]\).

\subsection{Boolean Representations and Structural Preliminaries}
\label{subsec:boolean-preliminaries}

For \(B\in\mathbb{N}_{+}\), let \([B]\eqdef\{1,\ldots,B\}\). For
\(J=\{j_1<\cdots<j_{|J|}\}\subseteq[B]\) and
\(x\in\{0,1\}^{B}\), define
$\operatorname{proj}_{J}(x)
\eqdef
(x_{j_1},\ldots,x_{j_{|J|}})$
as the projection of \(x\) onto the coordinates indexed by \(J\). We denote by \(x^{\oplus i}\) the
vector obtained from \(x\) by flipping its \(i\)-th coordinate. Following \citet{mossel2003learning}, we use the standard notion of
relevant coordinates and Boolean juntas.

\begin{definition}[Juntas and relevant coordinates]
\label{def:junta}
A Boolean function \(f:\{0,1\}^{B}\to\{0,1\}\) depends on coordinate
\(i\in[B]\) if there exists \(x\in\{0,1\}^{B}\) such that
\(f(x)\neq f(x^{\oplus i})\). Such a coordinate is called
\emph{relevant} to \(f\). We denote the set of relevant coordinates by
\[
\operatorname{Rel}(f)
 \eqdef  
\bigl\{i\in[B]:\exists x\in\{0,1\}^{B}
\text{ such that }f(x)\neq f(x^{\oplus i})\bigr\}.
\]
The function \(f\) is an \(S\)-junta if
\(|\operatorname{Rel}(f)|\leq S\). Equivalently, \(f\) is an
\(S\)-junta if there exist \(J\subseteq[B]\) with \(|J|\leq S\) and a
Boolean function \(g:\{0,1\}^{|J|}\to\{0,1\}\) such that
\(f(x)=g(\operatorname{proj}_{J}(x))\) for every
\(x\in\{0,1\}^{B}\). The smallest such \(S\) is the effective dimension
of \(f\).
\end{definition}

\begin{example}[Dictator functions]
\label{ex:dictator}
A Boolean function \(f:\{0,1\}^{B}\to\{0,1\}\) is a dictator function if
there exists \(i\in[B]\) such that either \(f(x)=x_i\) for every
\(x\in\{0,1\}^{B}\), or \(f(x)=1-x_i\) for every
\(x\in\{0,1\}^{B}\). Every dictator function is a \(1\)-junta.
\end{example}

\begin{definition}[Literals, cubes, and minterms]
\label{def:cubes}
A literal is either a variable \(x_i\) or its complement \(\neg x_i\).
A cube, or product term, is a conjunction of literals and therefore
represents a subcube of \(\{0,1\}^{B}\). A minterm is a cube containing
one literal for every input coordinate and hence corresponds to a single
point of the Boolean cube.
\end{definition}

\begin{definition}[Sum-of-products representations]
\label{def:sop}
A sum-of-products (SOP) representation is a Boolean formula of the form
$
G(x)
=
\bigvee_{q=1}^{Q}
\bigwedge_{\ell=1}^{L_q} z_{q,\ell}(x),
$
where each \(z_{q,\ell}\) is a literal. Its size is measured by the
number \(Q\) of product terms and the total number
\(\sum_{q=1}^{Q}L_q\) of literal occurrences.
\end{definition}

For a completely specified Boolean function $f$, define its ON- and OFF-sets by $F^{\mathrm{ON}}\eqdef\{x:f(x)=1\}$ and $F^{\mathrm{OFF}}\eqdef\{x:f(x)=0\}$. For a partial truth table, the unspecified inputs form the \textit{don't-care} set $F^{\mathrm{DC}}$; therefore, these three sets partition $\{0,1\}^{B}$. A Boolean completion is any $g:\{0,1\}^{B}\to\{0,1\}$ agreeing with the specified labels on $F^{\mathrm{ON}}\cup F^{\mathrm{OFF}}$, with arbitrary values on $F^{\mathrm{DC}}$.

A second structural notion central to our method is coordinate \textit{influence}, which measures the sensitivity of a Boolean function to perturbations of individual input bits. We use the standard Boolean-cube notion~\citep{odonnell2014analysis} and its distribution-dependent analogue~\citep{keller2012geometric}.

\begin{definition}[Coordinate influence]
\label{def:influence}
Let $f:\{0,1\}^{B}\to\{0,1\}$. For a probability measure $\mu\in\mathcal{P}(\{0,1\}^{B})$, the $\mu$-influence of coordinate $i\in[B]$ is defined as 
\[
\operatorname{Inf}^{\mu}_{i}(f)\eqdef\Pr_{X\sim\mu}\bigl[f(X)\neq f(X^{\oplus i})\bigr].
\]
When $\mu$ is uniform on $\{0,1\}^{B}$, we simply write $\operatorname{Inf}_{i}(f)$ and call it the \textit{uniform influence}.
\end{definition}

Under the uniform measure, a coordinate is irrelevant to $f$ if and only if its influence is zero. We use uniform influence throughout the main body and treat the general $\mu$-influence $\operatorname{Inf}^{\mu}_{i}$ in Appendix~\ref{app:general-distributions}.

\subsection{Espresso Logic Minimization}
\label{subsec:espresso-preliminaries}

Two-level logic minimization seeks compact Boolean representations using two layers of logical operations; in sum-of-products (SOP) form, conjunctions of literals are combined by a final disjunction. Classical exact methods include Karnaugh maps~\citep{karnaugh1953map} and the Quine--McCluskey procedure~\citep{mccluskey1956minimization}, but their poor scaling motivates heuristic alternatives. We use \textsc{Espresso}~\citep{brayton1984logic,sapra2003sat}, a standard heuristic for two-level logic minimization~\citep{kanakia2021espresso,faross2025grobner}, which typically produces compact SOP representations without guaranteeing global minimality.

Given disjoint ON-, OFF-, and don't-care sets $F^{\mathrm{ON}}$, $F^{\mathrm{OFF}}$, and $F^{\mathrm{DC}}$, \textsc{Espresso} returns an SOP completion consistent with the specified ON- and OFF-set entries while exploiting don't-care entries to simplify the representation~\citep{brayton1984logic,rudell2004multiple}. Accordingly, for a partial truth table $\widehat g:\{0,1\}^{q}\to\{0,1,\mathrm{DC}\}$, we write $G=\textsc{EspressoLearn}(\widehat g)$ for the Boolean function represented by the returned SOP and require only that $G(u)=\widehat g(u)$ whenever $\widehat g(u)\in\{0,1\}$. Additional details on \textsc{Espresso} are deferred to Appendix~\ref{app:espresso-details}.

\section{Combinatorial Training from Partial Truth Tables}
\label{sec:combinatorial-training}

A direct application of \textsc{Espresso} to a $B$-dimensional partial truth table is governed by the ambient truth-table size $2^B$ and quickly becomes computationally prohibitive. Our objective is therefore to retain the compact, explicit logical representations produced by \textsc{Espresso} while avoiding high-dimensional logic minimization.

To this end, we iteratively identify a small set of influential coordinates of the current Boolean residual, apply \textsc{Espresso} only to the corresponding low-dimensional projected truth table, and repeat on the remaining error. Since the selected coordinates may vary across stages, the resulting predictor can capture Boolean functions depending on many ambient coordinates while each \textsc{Espresso} call remains low-dimensional. The stage-wise predictors are combined by $\operatorname{XOR}$, ensuring the final model remains an explicit $\{\operatorname{AND},\operatorname{OR},\operatorname{XOR}\}$-circuit. The remainder of this section develops the influence-based dimension reduction, projected residual-learning step, and resulting training algorithm.

\subsection{Influence-Based Dimension Reduction}
\label{subsec:influence-dimension-reduction}

If the target $f$ depended on a known coordinate set $J\subseteq[B]$, one could learn $G:\{0,1\}^{|J|}\to\{0,1\}$ on the projected cube and lift it via $x\mapsto G(\operatorname{proj}_{J}(x))$, reducing the truth-table size from $2^B$ to $2^{|J|}$. Motivated by this observation, we impose a computational budget $K\in[B]$ and restrict every stage to at most $K$ coordinates, ensuring each \textsc{Espresso} call involves a projected partial truth table of size at most $2^K$.

Since the relevant coordinates are unknown and may vary across residual stages, we prioritize them using coordinate influence. By Definition~\ref{def:influence}, $\operatorname{Inf}_i(r_t)$ measures the sensitivity of the current residual $r_t$ to flipping coordinate $i$, making high-influence coordinates natural candidates for the stage-wise projection.

Population influences are not directly observable from a partial truth table because the Hamming neighbor $X^{\oplus i}$ of an observed input $X$ need not itself be observed. We therefore estimate them using observed Hamming-neighbor pairs. Since repeated observations of the same input have identical labels, we deduplicate the training inputs and write $D_{\mathrm{train}}^{X}\eqdef\{X_j:j\in[T]\}$. For each $i\in[B]$, define
\[
\mathcal{P}_{i}\eqdef\{x\in D_{\mathrm{train}}^{X}:x_i=0,\ x^{\oplus i}\in D_{\mathrm{train}}^{X}\},
\]
where $x_i=0$ ensures that each undirected Boolean-cube edge is counted once. For any Boolean function $r$ known on the training inputs, set
\[
\widehat{\operatorname{Inf}}_i(r)\eqdef|\mathcal{P}_i|^{-1}\sum_{x\in\mathcal{P}_i}\mathbbm{1}\!\left\{r(x)\neq r(x^{\oplus i})\right\}
\]
when $\mathcal{P}_i\neq\varnothing$, and $\widehat{\operatorname{Inf}}_i(r)\eqdef0$ otherwise. The latter is only an algorithmic convention and does not imply zero population influence; Section~\ref{s:Main__sec:statistical-guarantees} gives conditions under which these empirical influences recover the relevant population ranking.

Given a threshold $\tau\geq0$, we define the active set $C_t\eqdef\{i\in[B]:\widehat{\operatorname{Inf}}_i(r_t)>\tau\}$ and select $J_t\subseteq C_t$ to contain the $\min\{K,|C_t|\}$ coordinates with largest empirical influence. Thus, $\tau$ screens weak coordinates while $K$ controls the dimension of each projected \textsc{Espresso} problem. If $C_t=\varnothing$, the procedure terminates.

\subsection{Projected Residual Learning}
\label{subsec:projected-residual-learning}

A single $K$-dimensional projection need not capture the entire target, since different components of $f$ may depend on different coordinate subsets. We therefore repeatedly apply the dimension-reduction procedure to the errors of the current predictor. Let $H_0\equiv0$ and, for $t\geq1$, define the Boolean residual $r_t\eqdef f\oplus H_{t-1}$. Thus, $r_t(x)=1$ exactly when $H_{t-1}(x)\neq f(x)$, and its training labels are available directly as $r_t(X_j)=f(X_j)\oplus H_{t-1}(X_j)$. Applying Section~\ref{subsec:influence-dimension-reduction} to $r_t$ yields a residual-adaptive coordinate set $J_t\subseteq[B]$ with $|J_t|\leq K$.

For $u\in\{0,1\}^{|J_t|}$ and $a\in\{0,1\}$, define
\[
N_{t,a}(u)\eqdef\left|\left\{j\in[T]:\operatorname{proj}_{J_t}(X_j)=u,\ r_t(X_j)=a\right\}\right|.
\]
The projected partial truth table $\widehat g_t:\{0,1\}^{|J_t|}\to\{0,1,\mathrm{DC}\}$ is then
\begin{equation}
\label{eq:Majority}
\widehat g_t(u) = \operatorname{Maj}_{\mathrm{DC}}
    \!\left(N_{t,0}(u),N_{t,1}(u)\right) \eqdef
\begin{cases}
\displaystyle \argmax_{a\in\{0,1\}}N_{t,a}(u), & N_{t,0}(u)\neq N_{t,1}(u),\\
\mathrm{DC}, & N_{t,0}(u)=N_{t,1}(u).
\end{cases}
\end{equation}
Thus, observations sharing the same projection are empirically marginalized over the coordinates outside $J_t$: cells with a strict majority receive the corresponding residual label, while tied or unobserved cells are treated as don't-cares. Moreover, every Boolean completion consistent with the specified entries of $\widehat g_t$ is an empirical risk minimizer among Boolean functions depending only on $J_t$, since the empirical $0$--$1$ loss decomposes independently across projected cells.

We obtain $G_t\eqdef\textsc{EspressoLearn}(\widehat g_t)$ and lift it to the ambient cube by $F_t(x)\eqdef G_t(\operatorname{proj}_{J_t}(x))$. Hence, each stage predictor depends on at most $K$ coordinates, even when $r_t$ has a larger effective dimension. We then update $H_t\eqdef H_{t-1}\oplus F_t$, yielding $H_s=\bigoplus_{t=1}^{s}F_t$ after $s$ stages.

The XOR update is exactly aligned with the Boolean residual: for every $x\in\{0,1\}^{B}$, $H_t(x)\neq f(x)$ if and only if $F_t(x)\neq r_t(x)$%
\footnote{Indeed, since $r_t=f\oplus H_{t-1}$, equivalently $f=H_{t-1}\oplus r_t$, cancellation under XOR gives
$H_{t-1}(x)\oplus F_t(x)\neq H_{t-1}(x)\oplus r_t(x)$ if and only if $F_t(x)\neq r_t(x)$.}.~
Consequently, for $X\sim\operatorname{Unif}(\{0,1\}^{B})$,
\begin{equation}
\label{eq:residual-error-identity}
\Pr\bigl[H_t(X)\neq f(X)\bigr]
=
\Pr\bigl[F_t(X)\neq r_t(X)\bigr].
\end{equation}
Each stage reduces the original learning problem to approximating the current Boolean residual by a low-dimensional logical correction, with the influential coordinates reselected at every stage.

\subsection{Sample-Based Multi-Stage Espresso Training}
\label{subsec:training-algorithm}

We now combine influence-based coordinate selection and projected residual learning into the complete training procedure. Given the observed partial truth table, Algorithm~\ref{alg:sample-multistage-residual} iteratively fits low-dimensional \textsc{Espresso} corrections until either the maximum number of stages is reached or no further correction is identified from the training data. Since the Hamming-neighbor sets $\mathcal P_i$ depend only on the observed inputs, they are computed once before the residual iterations.

\paragraph{How \textsc{Macchiato - Inference} works.}
\textsc{Macchiato} (Algorithm~\ref{alg:sample-multistage-residual}) iteratively identifies a small set of coordinates with the largest empirical influence on the current Boolean residual, constructs the corresponding low-dimensional projected partial truth table by marginalizing over the remaining coordinates, and applies \textsc{Espresso} to learn a logical correction. The resulting correction is lifted to the ambient cube and combined with the accumulated predictor by \(\operatorname{XOR}\), after which the procedure is repeated on the remaining residual.

\begin{algorithm}[H]

\KwIn{$D_{\mathrm{train}}=\{(X_j,f(X_j))\}_{j=1}^{T}$,
$m\in\mathbb{N}_{+}$, $K\in[B]$, $\tau\geq0$}


\KwOut{$T_{\mathrm{stop}}$,
$(J_t,G_t,F_t)_{t=1}^{T_{\mathrm{stop}}}$,
and $H_{T_{\mathrm{stop}}}$}

$D_{\mathrm{train}}^X\gets\{X_j:j\in[T]\}$,
$H_0\gets0$, $T_{\mathrm{stop}}\gets0$\;

$\mathcal P_i\gets
\{x\in D_{\mathrm{train}}^X:x_i=0,\,
x^{\oplus i}\in D_{\mathrm{train}}^X\}$,
$\forall i\in[B]$\;

\For{$t=1,\ldots,m$}{
    $r_t(X_j)\gets f(X_j)\oplus H_{t-1}(X_j)$,
    $\forall j\in[T]$\;

    \lIf{$r_t(X_j)=0$, $\forall j\in[T]$}{\textbf{break}}

    $\widehat{\operatorname{Inf}}_i(r_t)\gets
    |\mathcal P_i|^{-1}
    \sum_{x\in\mathcal P_i}
    \mathbbm{1}\{r_t(x)\neq r_t(x^{\oplus i})\}$
    if $\mathcal P_i\neq\varnothing$, and $0$ otherwise,
    $\forall i\in[B]$\;

    $C_t\gets
    \{i\in[B]:\widehat{\operatorname{Inf}}_i(r_t)>\tau\}$\;

    \lIf{$C_t=\varnothing$}{\textbf{break}}

    $J_t\gets$ the $\min\{K,|C_t|\}$ highest-influence coordinates in $C_t$\footnotemark\;

    $\widehat g_t(u)\gets
    \operatorname{Maj}_{\mathrm{DC}}
    \!\left(N_{t,0}(u),N_{t,1}(u)\right)$,
    $\forall u\in\{0,1\}^{|J_t|}$ \tcp*{Eq.~\eqref{eq:Majority}}

    $G_t\gets\textsc{EspressoLearn}(\widehat g_t)$,
    $F_t(x)\gets G_t(\operatorname{proj}_{J_t}(x))$\;

    $H_t\gets H_{t-1}\oplus F_t$,
    $T_{\mathrm{stop}}\gets t$\;
}

\Return{$T_{\mathrm{stop}},
(J_t,G_t, F_t)_{t=1}^{T_{\mathrm{stop}}},
H_{T_{\mathrm{stop}}}$}\;

\caption{\textsc{Macchiato -- Inference}: Scalable Iterative Circuit Inference}
\label{alg:sample-multistage-residual}
\end{algorithm}
\footnotetext{Ties are broken deterministically by increasing coordinate index, making $J_t$ a measurable function of the training data.}

The algorithm has two data-dependent stopping conditions. If $r_t$ vanishes on all observed inputs, then $H_{t-1}$ interpolates the observed partial truth table. If $C_t=\varnothing$, no empirical influence exceeds $\tau$, leaving no eligible coordinate for the next projected stage. The latter may indicate either weak residual structure or insufficient observed Hamming-neighbor pairs; setting $\tau=0$ yields the least restrictive version of this criterion.

The output preserves the logical structure learned at every stage: each $F_t$ is a lifted \textsc{Espresso} SOP depending on at most $K$ coordinates, and $H_{T_{\mathrm{stop}}}=\bigoplus_{t=1}^{T_{\mathrm{stop}}}F_t$. Thus, both the selected coordinates $J_t$ and their corresponding logical corrections remain explicitly inspectable. Moreover, Algorithm~\ref{alg:sample-multistage-residual} uses only $D_{\mathrm{train}}$; test data are introduced solely for evaluation in Section~\ref{sec:experiments}.

\section{Main Results}
\label{s:Main}

This section presents our main theoretical results: statistical guarantees
for the learned Boolean predictor and an exact certifiably interpretable
$\operatorname{ReLU}$-MLP realization of the resulting circuit.

\subsection{Statistical Guarantees}
\label{s:Main__sec:statistical-guarantees}

This section establishes conditions under which the sample-based procedure in
Algorithm~\ref{alg:sample-multistage-residual} generalizes beyond the observed
partial truth table. The analysis separates two statistical requirements.
First, the empirical influence estimates must reliably identify the
coordinates governing the current residual. Second, once these coordinates
have been selected, the projected Boolean rule learned by \textsc{Espresso}
must generalize to unseen inputs.

We state the main results under the uniform distribution
\(\nu \eqdef  \operatorname{Unif}(\{0,1\}^{B})\), which corresponds to the
truth-table risk defined in Section~\ref{sec:problem-setting}. The same
analysis extends beyond the uniform setting; analogous influence-recovery and
prediction guarantees for a general input distribution \(\mu\) are provided in
Appendix~\ref{app:general-distributions}. Throughout, all logarithms are
natural, and \(C>0\) denotes a universal constant whose value may vary from
line to line.

\subsubsection{Influence Recovery for Low-Dimensional Boolean Functions}
\label{subsec:uniform-influence-recovery}

The influence estimator in Algorithm~\ref{alg:sample-multistage-residual} is constructed from Hamming-neighbor pairs occurring in the observed partial truth table. To obtain a transparent statistical benchmark, we first analyze an idealized paired-sampling model in which such perturbations are observed directly. Independently draw $X_n\sim\nu$ and $I_n\sim\operatorname{Unif}([B])$ for $n=1,\ldots,N$, and observe $(X_n,X_n^{\oplus I_n})$. For each $i\in[B]$, let $M_i\eqdef\sum_{n=1}^{N}\mathbbm{1}\{I_n=i\}$. Whenever $M_i>0$, define
\begin{equation}
\label{eq:paired-empirical-influence}
\widehat{\operatorname{Inf}}^{\mathrm{pair}}_i(h)
\eqdef
\frac{1}{M_i}
\sum_{n:I_n=i}
\mathbbm{1}\left\{h(X_n)\neq h(X_n^{\oplus i})\right\}.
\end{equation}
If $M_i = 0$, we set $\widehat{\operatorname{Inf}}^{\mathrm{pair}}_i(h) \eqdef 0$, consistent with the empirical convention. Conditional on $(I_n)_{n=1}^N$, the summands indexed by $n$ with $I_n=i$ are independent Bernoulli random variables with mean $\operatorname{Inf}_i(h)$, making this model particularly convenient for analyzing influence recovery. 

The paired model is an analytical device rather than an assumption on the training data used by Algorithm~\ref{alg:sample-multistage-residual}. There, the flipped input $X_j^{\oplus i}$ need not be observed for a given training point $X_j$, and an observed point may participate in several Hamming-neighbor pairs across different coordinates. Consequently, the usable pairs in a passive partial truth table need neither occur in the balanced form of the paired model nor share its independence structure. Such idealized probabilistic regimes are standard analytical devices in statistical theory, including i.i.d.~\cite{vanderVaartWellner2023}, martingale-difference~\cite{hallHeyde1980}, and Markov models~\cite{meynTweedie2009,limmer2024higher}. Here, the paired model isolates the sample information required to estimate coordinate-wise Boolean sensitivity once suitable bit-flip observations are available. Section~\ref{sec:experiments} demonstrates empirically that Algorithm~\ref{alg:sample-multistage-residual} remains effective with ordinary training samples when this idealized paired-sampling structure is absent.

For $1\leq S\leq B$, recall the class of $S$-juntas
$\mathcal J_{B,S}\eqdef\{h:\{0,1\}^{B}\to\{0,1\}:|\operatorname{Rel}(h)|\leq S\}$. The following theorem gives the influence-recovery guarantee needed in the low-effective-dimension regime motivating our algorithm. The junta assumption is used only to obtain an explicit sample-complexity bound through the cardinality of $\mathcal J_{B,S}$. The underlying argument applies more generally to any finite Boolean function class; the corresponding result is stated as Theorem~\ref{thm:simultaneous-influence-recovery} in Appendix~\ref{app:proof-junta-influence-recovery}.

\begin{theorem}[Influence recovery for juntas]
\label{thm:junta-influence-recovery}
Let \(1\le S\le B\) and \(\varepsilon,\delta\in(0,1)\). There exists a
universal constant \(C>0\) such that, if
\(N\geq C\frac{B}{\varepsilon^2}\bigl(2^S+S\log\frac{eB}{S}
+\log\frac{2B}{\delta}\bigr)\), then, with probability at least \(1-\delta\),
\[
\max_{i\in[B]}
\sup_{h\in\mathcal J_{B,S}}
\left|
\widehat{\operatorname{Inf}}^{\mathrm{pair}}_i(h)
-\operatorname{Inf}_i(h)
\right|
\leq\varepsilon.
\]
In particular, if \(S=\mathcal O(\log B)\), the required number of paired samples is polynomial in \(B\) and \(1/\varepsilon\), and logarithmic in \(1/\delta\).
\end{theorem}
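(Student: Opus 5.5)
The plan is a two-level argument: first lower-bound each count $M_i$, then apply a union bound over the finite class $\mathcal J_{B,S}$ conditional on the counts.

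\textbf{Step 1: the counts $M_i$.} Each $M_i\sim\mathrm{Binomial}(N,1/B)$. A multiplicative Chernoff bound gives $\Pr[M_i<N/(2B)]\le\exp(-N/(8B))$. A union bound over $i\in[B]$ then shows that, on an event of probability at least $1-\delta/2$, every $M_i\ge N/(2B)$; this uses $N\ge 8B\log(2B/\delta)$, which the hypothesis provides. This step explains the $\log(2B/\delta)$ term and the leading factor $B$.

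\textbf{Step 2: conditional Hoeffding.} Condition on $(I_n)_{n=1}^N$ and fix $i$ and $h$. As noted after \eqref{eq:paired-empirical-influence}, the $M_i$ summands are i.i.d.\ Bernoulli with mean $\operatorname{Inf}_i(h)$. Hoeffding's inequality therefore gives
\[
\Pr\bigl[\,|\widehat{\operatorname{Inf}}^{\mathrm{pair}}_i(h)-\operatorname{Inf}_i(h)|>\varepsilon \,\bigm|\, (I_n)\bigr]\le 2e^{-2M_i\varepsilon^2}.
\]

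\textbf{Step 3: counting the class.} Bound $|\mathcal J_{B,S}|$. Each $S$-junta is determined by a set $J\subseteq[B]$ with $|J|\le S$ and a function $g:\{0,1\}^{|J|}\to\{0,1\}$. Hence
\[
|\mathcal J_{B,S}|\le\sum_{s\le S}\binom{B}{s}2^{2^s}\le (eB/S)^S\,2^{2^S}\cdot C',
\]
so that $\log|\mathcal J_{B,S}|\le C(2^S+S\log(eB/S))$. For $S=B$ the term $(eB/S)^S$ should be read with the usual convention, and the bound still holds.

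A refinement reduces the union bound to coordinates that matter. If $i\notin\operatorname{Rel}(h)$, both the empirical and the true influence vanish identically. The union over $h$ therefore only needs to range over functions relevant at $i$, though the crude count suffices anyway.

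\textbf{Step 4: combining.} Union bound over the $B|\mathcal J_{B,S}|$ pairs $(i,h)$ on the event of Step 1. The conditional failure probability is at most
\[
2B|\mathcal J_{B,S}|\,e^{-N\varepsilon^2/B}.
\]
This is at most $\delta/2$ once $N\ge (B/\varepsilon^2)\bigl(\log|\mathcal J_{B,S}|+\log(4B/\delta)\bigr)$, which follows from the hypothesis with a suitable universal $C$. Integrating out the conditioning and adding the probability of the Step 1 event gives total failure probability at most $\delta$.

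\textbf{Closing remark and main obstacle.} When $S=\mathcal O(\log B)$, we have $2^S=\mathrm{poly}(B)$, so the required $N$ is polynomial in $B$ and $1/\varepsilon$ and logarithmic in $1/\delta$. The only delicate point is the conditioning in Step 2: it must be done on the index sequence so that Hoeffding applies with the random sample size $M_i$, rather than treating $M_i$ as fixed. The rest is bookkeeping of constants.
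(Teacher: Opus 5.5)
Your proposal is correct and follows essentially the same route as the paper: an occupancy step (Chernoff plus a union bound over $[B]$) ensuring every $M_i$ is large, conditional Hoeffding with a union bound over the finite class, and the cardinality bound $\log|\mathcal J_{B,S}|\le C\bigl(2^S+S\log\frac{eB}{S}\bigr)$. The differences are cosmetic. The paper first proves the result for an arbitrary finite class (Theorem~\ref{thm:simultaneous-influence-recovery}), using an explicit per-coordinate target $q_{\mathrm{req}}$, and then specializes; you work with $\mathcal J_{B,S}$ directly, use the threshold $N/(2B)$, and condition on the whole index sequence, which is if anything slightly cleaner.
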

\begin{proof}
See Section~\ref{app:proof-junta-influence-recovery}.
\end{proof}

Theorem~\ref{thm:junta-influence-recovery} guarantees that, with sufficiently many paired samples, all coordinate influences are estimated to any prescribed accuracy uniformly over the class of $S$-juntas, with high probability. Crucially, the exponential dependence in the sample complexity is on the effective dimension $S$, rather than the ambient dimension $B$. The proof combines uniform concentration over a finite Boolean function class via Hoeffding's inequality and a union bound, an occupancy argument ensuring sufficiently many samples per coordinate, and a cardinality bound for $\mathcal J_{B,S}$. The complete proof is given in Appendix~\ref{app:proof-junta-influence-recovery}.

\subsubsection{Coordinate Recovery and Truth-Table Accuracy}
\label{subsec:coordinate-recovery-and-accuracy}


We now connect influence estimation to the prediction error of the multi-stage procedure. We restrict attention to the nondegenerate case
\(T_{\mathrm{stop}}\geq1\), so that at least one stage is completed.
Let \(T_{\mathrm{stop}}\leq m\) denote the number of completed stages and,
for each \(t\in[T_{\mathrm{stop}}]\), recall that
\(r_t=f\oplus H_{t-1}\). The analysis uses two properties of the empirical influence estimator: the simultaneous stage-wise accuracy of Assumption~\ref{ass:stagewise-influence-accuracy} and preservation of zero population influence. For the pair-based estimators considered here, the latter holds exactly under the uniform distribution: $\operatorname{Inf}_i(h)=0$ implies $\widehat{\operatorname{Inf}}_i(h)=0$; see Lemma~\ref{lem:zero-influence-preservation}. The argument therefore applies more generally to any influence estimator satisfying these two properties.

\begin{assumption}[Simultaneous stage-wise influence accuracy]
\label{ass:stagewise-influence-accuracy}

For some \(\varepsilon_{\mathrm{inf}},\delta_{\mathrm{inf}}\in(0,1)\), with probability at least
\(1-\delta_{\mathrm{inf}}\),
\begin{equation}
\label{eq:stagewise-influence-accuracy}
\max_{t\in[T_{\mathrm{stop}}]}
\max_{i\in[B]}
\left|
\widehat{\operatorname{Inf}}_i(r_t)
-
\operatorname{Inf}_i(r_t)
\right|
\leq
\varepsilon_{\mathrm{inf}}.
\end{equation}
\end{assumption}

For each completed stage, let $C_t^\star\eqdef\{i\in[B]:\operatorname{Inf}_i(r_t)>\tau\}$ denote the population active set, define $s_t^\star\eqdef\min\{K,|C_t^\star|\}$, and let $J_t^\star\subseteq C_t^\star$ contain the $s_t^\star$ coordinates with largest population influence, using the same deterministic tie-breaking rule as the algorithm.


To ensure that estimation error does not alter the thresholding or top-$K$ ranking decisions, we impose a margin around the relevant positive population-influence thresholds. Zero-influence coordinates are treated separately, since the pair-based estimator preserves zero influence exactly.

\begin{assumption}[Thresholded top-\(K\) influence separation]
\label{ass:thresholded-topk-gap}
For every completed stage \(t\), \(C_t^\star\neq\varnothing\), and
\[
\min_{i\in J_t^\star}\operatorname{Inf}_i(r_t)
>
\tau+\varepsilon_{\mathrm{inf}}.
\]
Moreover, if \(|C_t^\star|\leq K\), then, for every
\(j\notin C_t^\star\),
\[
\operatorname{Inf}_j(r_t)=0
\qquad\text{or}\qquad
\operatorname{Inf}_j(r_t)
<
\tau-\varepsilon_{\mathrm{inf}}.
\]
If \(|C_t^\star|>K\), then
\[
\min_{i\in J_t^\star}\operatorname{Inf}_i(r_t)
-
\max_{j\in C_t^\star\setminus J_t^\star}
\operatorname{Inf}_j(r_t)
>
2\varepsilon_{\mathrm{inf}}.
\]
\end{assumption}


Thus, every population-selected coordinate lies sufficiently above the threshold, while positive-influence coordinates below the threshold are separated from it whenever recovery of the full active set is required. Zero-influence coordinates require no margin because, for the pair-based estimator, zero population influence implies zero empirical influence exactly; see Lemma~\ref{lem:zero-influence-preservation}. In particular, when $\tau=0$, every coordinate outside $C_t^\star$ has zero population influence, where the second condition in Assumption~\ref{ass:thresholded-topk-gap} holds automatically.

This margin condition is analogous to the $\varepsilon$-separation assumption of~\citet{kratsios2025beyond} for symmetric Boolean functions, where separation between influence levels likewise ensures stable identification. Under Assumptions~\ref{ass:stagewise-influence-accuracy} and~\ref{ass:thresholded-topk-gap}, the empirical procedure recovers $J_t=J_t^\star$ simultaneously across all completed stages; the formal recovery lemma and proof are given in Appendix~\ref{app:proof-coordinate-recovery}.

To analyze generalization across adaptive residual stages, we use a stage-wise sample-splitting version of the algorithm. Let $T_0\eqdef\lfloor T/m\rfloor$, discard at most $m-1$ observations if necessary, and partition the remaining sample into independent batches
\[
D^{(1)}\mathbin{\dot\cup}\cdots\mathbin{\dot\cup}D^{(m)},
\qquad
|D^{(t)}|=T_0.
\]
At stage $t$, only $D^{(t)}$ is used to estimate influences, select $J_t$, construct the projected partial truth table, and learn the correction. Conditional on the preceding batches, $H_{t-1}$ and hence $r_t=f\oplus H_{t-1}$ are fixed independently of the fresh batch $D^{(t)}$, on which the residual labels are then evaluated.

As with paired sampling, sample splitting is an analytical device rather than a requirement of Algorithm~\ref{alg:sample-multistage-residual}. The practical algorithm reuses the training observations across stages, introducing dependence between the current residual and the data used to learn its correction. Sample splitting removes this adaptivity and permits a standard finite-class generalization argument. Since Section~\ref{sec:experiments} evaluates the un-split algorithm directly, the empirical results do not rely on this theoretical simplification.

We now state the main prediction guarantee, which specializes the general recovery-to-accuracy result of Appendix~\ref{app:recovery-to-accuracy} to the regime in which each stage residual is low-dimensional.

\begin{theorem}[Accuracy under an \(S\)-junta residual]
\label{thm:junta-residual-accuracy}
Consider the stage-wise sample-splitting version of
Algorithm~\ref{alg:sample-multistage-residual}, and suppose \(T\geq m\).
Assume that Assumptions~\ref{ass:stagewise-influence-accuracy} and
\ref{ass:thresholded-topk-gap} hold. Fix \(1\leq S\leq K\), and suppose that for every completed stage \(t\), the residual \(r_t\)
depends only on \(J_t^\star\), with
\(s_t^\star=|J_t^\star|\le S\). Then, for every \(\delta_{\mathrm{gen}}\in(0,1)\), with probability at
least \(1-\delta_{\mathrm{inf}}-\delta_{\mathrm{gen}}\), simultaneously for
every completed stage \(t\),
\begin{equation}
\label{eq:junta-residual-accuracy-S}
\mathcal R(H_t)
\leq
C
\sqrt{
\frac{
m\left(
2^S
+
S\log\frac{eB}{S}
+
\log\frac{m}{\delta_{\mathrm{gen}}}
\right)
}{T}
}.
\end{equation}
\end{theorem}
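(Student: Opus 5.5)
The plan is to reduce everything to a per-stage, finite-class generalization bound via the XOR residual identity \eqref{eq:residual-error-identity}. That identity gives $\mathcal R(H_t)=\Pr_{X\sim\nu}[F_t(X)\neq r_t(X)]$ for every completed stage. So it suffices to show that, with the stated probability, every completed stage satisfies
\[
\Pr_{X\sim\nu}[F_t(X)\neq r_t(X)]\le C\sqrt{m\bigl(2^S+S\log\tfrac{eB}{S}+\log\tfrac{m}{\delta_{\mathrm{gen}}}\bigr)/T}.
\]
I will work on the intersection of two events: the event $E_{\mathrm{inf}}$ of Assumption~\ref{ass:stagewise-influence-accuracy}, of probability at least $1-\delta_{\mathrm{inf}}$, and a generalization event $E_{\mathrm{gen}}$, of probability at least $1-\delta_{\mathrm{gen}}$. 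A union bound then gives the claimed confidence.

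\textbf{Step 1 (coordinate recovery).} On $E_{\mathrm{inf}}$, I combine Assumption~\ref{ass:thresholded-topk-gap} with zero-influence preservation (Lemma~\ref{lem:zero-influence-preservation}) to invoke the recovery lemma of Appendix~\ref{app:proof-coordinate-recovery}. This gives $J_t=J_t^\star$ for every completed stage. Concretely:
\begin{itemize}
\item coordinates in $J_t^\star$ have empirical influence above $\tau$;
\item coordinates with zero population influence have zero empirical influence;
\item coordinates with positive influence below the threshold stay below $\tau$;
\item when $|C_t^\star|>K$, the $2\varepsilon_{\mathrm{inf}}$ gap preserves the top-$K$ ranking, with ties broken identically.
\end{itemize}

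\textbf{Step 2 (zero training error).} By hypothesis $r_t$ depends only on $J_t^\star=J_t$, so $r_t(x)=g_t^\star(\operatorname{proj}_{J_t}(x))$ for some $g_t^\star$. In every observed cell $u$ of the stage-$t$ batch, all residual labels therefore equal $g_t^\star(u)$. The majority rule \eqref{eq:Majority} then specifies $\widehat g_t(u)=g_t^\star(u)$ with no ties. Since \textsc{EspressoLearn} respects specified entries, $F_t=G_t\circ\operatorname{proj}_{J_t}$ agrees with $r_t$ on every point of $D^{(t)}$. Moreover, $F_t$ depends on $|J_t|\le S$ coordinates, so $F_t\in\mathcal J_{B,S}$.

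\textbf{Step 3 (uniform convergence over $\mathcal J_{B,S}$).} Condition on $D^{(1)},\dots,D^{(t-1)}$. Under sample splitting, $r_t$ is then a fixed function and $D^{(t)}$ consists of $T_0$ fresh i.i.d.\ uniform points. For each $h\in\mathcal J_{B,S}$ the loss $\mathbbm 1\{h(X)\neq r_t(X)\}$ is bounded, so Hoeffding's inequality plus a union bound over the class shows that, with conditional probability at least $1-\delta_{\mathrm{gen}}/m$, every $h$ satisfies
\[
\bigl|\text{population error}-\text{empirical error}\bigr|\le\sqrt{\frac{\log|\mathcal J_{B,S}|+\log(2m/\delta_{\mathrm{gen}})}{2T_0}}.
\]
The cardinality bound is $|\mathcal J_{B,S}|\le\binom{B}{\le S}2^{2^S}$, the same one used in Theorem~\ref{thm:junta-influence-recovery}, which gives $\log|\mathcal J_{B,S}|\le C(2^S+S\log\frac{eB}{S})$. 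Because the bound is uniform over the class, it holds even though $J_t$ and $G_t$ are chosen from the same batch. The conditional bound integrates to an unconditional one, and a union bound over the at most $m$ stages defines $E_{\mathrm{gen}}$. Finally, $T\ge m$ gives $T_0=\lfloor T/m\rfloor\ge T/(2m)$, which produces the factor $m/T$.

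\textbf{Conclusion and main obstacle.} On $E_{\mathrm{inf}}\cap E_{\mathrm{gen}}$, Step~2 makes the empirical error of $F_t$ zero, so Step~3 bounds its population error and yields \eqref{eq:junta-residual-accuracy-S}. A realizable-case argument would even give a $1/T$ rate, but the stated square-root bound suffices. The step needing the most care is the probabilistic bookkeeping. Completed stages and the residuals $r_t$ are random, and Assumption~\ref{ass:stagewise-influence-accuracy} is stated for the algorithm's own estimator. I will handle this by defining $E_{\mathrm{gen}}$ for all indices $t\le m$ through the conditional construction, setting it vacuous for stages that are not completed, so that no conditioning on $E_{\mathrm{inf}}$ is needed before the final union bound.
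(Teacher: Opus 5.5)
Your proposal is correct and takes essentially the same route as the paper. Both arguments recover $J_t=J_t^\star$ on the influence-accuracy event via Lemma~\ref{lem:coordinate-selection-recovery}, then apply a conditional Hoeffding-plus-union-bound argument over a finite class of functions depending on at most $S$ coordinates, using the fresh batch $D^{(t)}$. They finish with a union bound over stages and transfer the bound to $\mathcal R(H_t)$ through the XOR residual identity.

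The differences are cosmetic. You work directly with $\mathcal J_{B,S}$ and use the zero empirical error of $F_t$, whereas the paper uses $\mathcal G_{\leq s_t^\star}$ and the ERM comparison with $\beta_t=\alpha_t=0$, which amounts to the same thing, and then enlarges $s_t^\star$ to $S$.
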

\begin{proof}
See Sections~\ref{app:recovery-to-accuracy} and \ref{app:proof-junta-residual-accuracy}.
\end{proof}


Theorem~\ref{thm:junta-residual-accuracy} captures the regime motivating the algorithm. Once the relevant stage coordinates are recovered, the learned correction belongs to a class of Boolean functions depending on at most $s_t^\star\leq S$ coordinates. Its statistical complexity is therefore governed by $2^S$ together with the cost of identifying an $S$-coordinate subset among $B$ inputs, rather than by the ambient truth-table size $2^B$.

The effective dimension $S$ enters twice. First, Theorem~\ref{thm:junta-influence-recovery} gives an explicit sufficient paired-sample size for controlling the influence-recovery failure probability $\delta_{\mathrm{inf}}$ when the residuals belong to $\mathcal J_{B,S}$. Second, conditional on successful coordinate recovery, the prediction bound in \eqref{eq:junta-residual-accuracy-S} depends explicitly on the same $S$. Thus, $S$ controls both coordinate recovery and generalization of the projected stage learner.

The proof first establishes recovery of $J_t^\star$. Since $r_t$ depends only on $J_t^\star$, the recovered projection incurs no population approximation error. Uniform convergence over Boolean functions depending on at most $S$ coordinates then controls the error of the learned \textsc{Espresso} correction, while the identity $H_t(x)\neq f(x)$ if and only if $F_t(x)\neq r_t(x)$ transfers the stage-wise residual error directly to the truth-table error of $H_t$. The complete proof, together with the corresponding result allowing nonzero projection error, is given in Appendix~\ref{app:proof-junta-residual-accuracy}.
In the idealized case where the projected residual truth table is known exactly, one may take $F_t=r_t$, and hence $H_t=H_{t-1}\oplus r_t=f$, yielding exact truth-table recovery at that stage.

\subsection{Certifiably Interpretable Neural Realization}
\label{s:Main__sec:neural-realization}

The preceding sections provide the statistical and computational ingredients for learning an explicit Boolean rule from a partial truth table. We now complete the pipeline by compiling the learned circuit exactly into a $\operatorname{ReLU}$-MLP without changing its predictions on the Boolean cube. Thus, Algorithm~\ref{alg:sample-multistage-residual} learns a structured Boolean circuit from data, Section~\ref{s:Main__sec:statistical-guarantees} controls its generalization, and the present section gives its exact neural realization.

Although the Boolean circuit is already a valid predictor, its neural realization can be embedded directly into standard differentiable learning pipelines. In particular, the compiled network exactly reproduces the learned circuit and may serve as a structured initialization for subsequent gradient-based fine-tuning or as a component of a larger differentiable model. The logical certificate applies to the compiled network before any unconstrained parameter update; arbitrary fine-tuning need not preserve it. Thus, the construction yields both a certified interpretable predictor and, when desired, a data-derived neural initialization.

Let $M\eqdef T_{\mathrm{stop}}$. Algorithm~\ref{alg:sample-multistage-residual} returns the explicit circuit $H_M(x)=\bigoplus_{t=1}^{M}F_t(x)$, where each stage is the lift of an \textsc{Espresso} sum-of-products expression,
\[
F_t(x)=G_t(\operatorname{proj}_{J_t}(x))
=\bigvee_{q=1}^{Q_t}\bigwedge_{\ell=1}^{L_{t,q}}z_{t,q,\ell}(x).
\]
Each literal $z_{t,q,\ell}(x)$ is either a coordinate $x_j$ or its negation $1-x_j$ for some $j\in J_t$; consequently,  $H_M$ is an explicit $\{\operatorname{NOT},\operatorname{AND},\operatorname{OR},\operatorname{XOR}\}$-circuit. Our goal is to transfer this structure exactly to a feedforward ReLU network, rather than approximate it by a separately trained neural model.

Throughout, let $\sigma$ denote the rectified linear unit,
$\sigma(u)\eqdef\operatorname{ReLU}(u)=\max\{u,0\}$,
with the same notation used coordinate-wise for vector inputs. The following identities provide the gate realizations required by the compilation algorithm. They are asserted only on Boolean inputs; no logical interpretation of the off-cube extension is required%
\footnote{Alternative exact gate realizations, limitations of single-ReLU representations, and depth--width tradeoffs for XOR are collected in Appendix~\ref{app:aux-neural-realization}; these alternatives do not change the certificate or the Boolean function represented by the compiled model.}.

\begin{proposition}[ReLU modules for the Boolean gates]
\label{prop:relu-gate-modules}
Let \(r\geq1\), \(a=(a_1,\ldots,a_r)\in\{0,1\}^r\), and
\(s(a) \eqdef  \sum_{j=1}^{r}a_j\). Then
\begin{align}
\psi_{\mathrm{NOT}}(a_1)
& \eqdef  \sigma(1-a_1)=\neg a_1,
\label{eq:relu-not-module}\\
\psi_{\mathrm{AND}}^{(r)}(a)
& \eqdef  \sigma\bigl(s(a)-r+1\bigr)-\sigma\bigl(s(a)-r\bigr)
=\bigwedge_{j=1}^{r}a_j,
\label{eq:relu-and-module}\\
\psi_{\mathrm{OR}}^{(r)}(a)
& \eqdef  \sigma\bigl(s(a)\bigr)-\sigma\bigl(s(a)-1\bigr)
=\bigvee_{j=1}^{r}a_j,
\label{eq:relu-or-module}\\
\psi_{\mathrm{XOR}}^{(r)}(a)
& \eqdef  \sigma\bigg(
\sigma\bigl(s(a)\bigr)
+2\sum_{k=1}^{r-1}(-1)^k\sigma\bigl(s(a)-k\bigr)
\bigg)
=\bigoplus_{j=1}^{r}a_j.
\label{eq:relu-xor-module}
\end{align}
Thus, NOT uses one ReLU unit, AND and OR each use a one-hidden-layer
width-two module, and \(r\)-input XOR uses a two-layer module with hidden
width \(r\).
\end{proposition}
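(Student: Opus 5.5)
The whole proposition reduces to the observation that every module is a function of the integer $s=s(a)\in\{0,1,\ldots,r\}$ (or of $a_1\in\{0,1\}$ for NOT). So I will verify each identity by evaluating the ReLU expression on this finite integer range. The key elementary fact is $\sigma(n+1)-\sigma(n)=\mathbbm{1}\{n\ge 0\}$ for every integer $n$. For $n\ge0$ the difference is $(n+1)-n=1$. For $n\le -1$ both terms vanish, since $n+1\le 0$.

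For NOT, $\sigma(1-a_1)=1-a_1$ because $1-a_1\in\{0,1\}$ is nonnegative. For AND, take $n=s-r$ in the difference identity: $\psi_{\mathrm{AND}}^{(r)}(a)=\mathbbm{1}\{s\ge r\}$. Since $s\le r$, this equals $\mathbbm{1}\{s=r\}=\min_j a_j$. For OR, take $n=s-1$: $\psi_{\mathrm{OR}}^{(r)}(a)=\mathbbm{1}\{s\ge1\}=\max_j a_j$. Both are one-hidden-layer modules with two hidden units, since $s(a)$ is an affine function of the input.

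For XOR, let $\phi(s)\eqdef\sigma(s)+2\sum_{k=1}^{r-1}(-1)^k\sigma(s-k)$. I will show $\phi(s)=s\bmod 2$ for integers $0\le s\le r$; the outer $\sigma$ then acts as the identity because $\phi(s)\in\{0,1\}$. For $s\ge1$, only the terms with $k\le s-1$ survive (and $k\le r-1$ holds automatically). This gives $\phi(s)=s+2\sum_{k=1}^{s-1}(-1)^k(s-k)$.

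The cleanest way to evaluate this sum is by induction on $s$. Alternatively, view $\phi$ as a piecewise-linear function with $\phi(0)=0$, slope $1$ on $[0,1]$, and slope changes of $2(-1)^k$ at each $k$. The slopes then alternate $+1,-1,+1,\ldots$ on consecutive unit intervals. Hence $\phi$ is the triangle wave taking value $0$ at even integers and $1$ at odd integers on $[0,r]$.

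The inductive version: $\phi(s+1)-\phi(s)$ equals the slope on $[s,s+1]$, which is $1+2\sum_{k=1}^{s}(-1)^k$. This equals $1$ if $s$ is even and $-1$ if $s$ is odd, so parity alternates correctly starting from $\phi(0)=0$.

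The module's structure is then a first hidden layer of $r$ units $\sigma(s-k)$, $k=0,\ldots,r-1$, followed by a single output ReLU, which gives the stated two-layer, width-$r$ architecture. The only step requiring care is this alternating-sum bookkeeping for XOR: in particular, checking that the boundary case $s=r$ is still covered, since only kinks up to $k=r-1$ are present. That case follows from the same slope computation on $[r-1,r]$. The other gates are immediate.
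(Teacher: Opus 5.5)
Your proof is correct and follows the paper's own argument essentially step for step. You reduce every gate to a function of the integer $s(a)$, and you handle AND and OR by the same case split, which your identity $\sigma(n+1)-\sigma(n)=\mathbbm{1}\{n\ge 0\}$ neatly packages; for XOR you show the increments $\phi(s+1)-\phi(s)=1+2\sum_{k=1}^{s}(-1)^k=(-1)^s$ starting from $\phi(0)=0$, so the outer ReLU acts as the identity, exactly as in the paper.
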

\begin{proof}
The proof is given in Appendix~\ref{app:proofs-neural-realization}.    
\end{proof}

\subsubsection{Circuit-to-Network Compilation}
\label{subsec:circuit-to-network-compilation}

We now replace the gates of the learned circuit by the modules in Proposition~\ref{prop:relu-gate-modules}. For a uniform layerwise realization, we write each stage in SOP form with $Q_t\geq1$ and $L_{t,q}\geq1$, using the natural conventions $\bigwedge_{\ell=1}^{1}z_{t,q,\ell}=z_{t,q,1}$ and $\bigvee_{q=1}^{1}C_{t,q}=C_{t,1}$. Thus, single literals and single conjunctions fit the same notation. Before compilation, duplicate and redundant product terms are removed from each nonconstant stage-wise SOP without changing the represented Boolean function. Degenerate \textsc{Espresso} outputs require only a syntactic normalization: for any fixed $j\in J_t$, the constant-zero and constant-one functions may be written as $x_j\wedge(1-x_j)$ and $x_j\vee(1-x_j)$, respectively, without changing $F_t$.

\paragraph{How \textsc{Macchiato -- Compile} works.}
Algorithm~\ref{alg:relu-realization} compiles these normalized stage-wise SOP representations into a single feedforward \(\operatorname{ReLU}\)-MLP. It first forms the positive and negated literal signals, realizes each product term with an \(\operatorname{AND}\) module, combines the product terms within each stage with an \(\operatorname{OR}\) module, and finally aggregates the stage outputs by \(\operatorname{XOR}\). The linear readouts in the \(\operatorname{AND}\) and \(\operatorname{OR}\) modules introduce no additional network layers, since their coefficients can be absorbed into the affine transformations of subsequent layers. For $M\geq1$, Algorithm~\ref{alg:relu-realization} constructs the resulting neural realization; for the degenerate case $M=0$, we set $\widehat H_0\equiv0$, consistently with $H_0\equiv0$.

\begin{algorithm}[H]

\KwIn{$M$ and
$F_t(x)=
\bigvee_{q=1}^{Q_t}
\bigwedge_{\ell=1}^{L_{t,q}}
z_{t,q,\ell}(x)$,
$\forall t\in[M]$}

\KwOut{A ReLU network $\widehat H_M$ such that
$\widehat H_M(x)=H_M(x)$,
$\forall x\in\{0,1\}^{B}$}

$p_j(x)\gets\sigma(x_j)$,
$n_j(x)\gets\sigma(1-x_j)$,
$\forall j\in[B]$\;

\For{$t=1,\ldots,M$}{
    \For{$q=1,\ldots,Q_t$}{
        $\widehat z_{t,q,\ell}(x)\gets
        p_j(x)$ if $z_{t,q,\ell}(x)=x_j$,
        and $n_j(x)$ if $z_{t,q,\ell}(x)=1-x_j$,
        $\forall \ell\in[L_{t,q}]$\;

        $s_{t,q}(x)\gets
        \sum_{\ell=1}^{L_{t,q}}
        \widehat z_{t,q,\ell}(x)$\;

        $C_{t,q}(x)\gets
        \sigma\!\left(s_{t,q}(x)-L_{t,q}+1\right)
        -
        \sigma\!\left(s_{t,q}(x)-L_{t,q}\right)$\;
    }

    $c_t(x)\gets\sum_{q=1}^{Q_t}C_{t,q}(x)$\;

    $o_t^{+}(x)\gets\sigma(c_t(x))$,
    $o_t^{-}(x)\gets\sigma(c_t(x)-1)$\;
}

$s_M(x)\gets
\sum_{t=1}^{M}
\left(o_t^{+}(x)-o_t^{-}(x)\right)$\;

$h_k(x)\gets\sigma(s_M(x)-k)$,
$\forall k=0,\ldots,M-1$\;

$\widehat H_M(x)\gets
\sigma\!\left(
h_0(x)+
2\sum_{k=1}^{M-1}(-1)^k h_k(x)
\right)$\;

\Return{$\widehat H_M$}\;

\caption{\textsc{Macchiato -- Compile}: \textbf{Certifiable $\operatorname{ReLU}$-MLP Compilation}}
\label{alg:relu-realization}
\end{algorithm}


On Boolean inputs, the affine combination in the final line of Algorithm~\ref{alg:relu-realization}, $h_0(x)+2\sum_{k=1}^{M-1}(-1)^k h_k(x)$, already takes values in $\{0,1\}$ and equals the XOR of the stage outputs. Thus, the final ReLU is unnecessary for exact agreement on the Boolean cube, but we retain it so that the scalar output is itself produced by a ReLU unit, consistently with the preceding layers. Alternatively, the final layer may be taken to be linear. Since the affine readout is required in either case, this choice does not change the network depth.

\begin{theorem}[Exact neural realization]
\label{thm:exact-neural-realization}
Let \(M\geq1\), and let $H_M$ be the predictor returned by Algorithm~\ref{alg:sample-multistage-residual}, and let $P\eqdef\sum_{t=1}^{M}Q_t$ denote the total number of product terms across the stage-wise SOP representations. Algorithm~\ref{alg:relu-realization} returns a ReLU MLP $\widehat H_M$ satisfying:
\begin{enumerate}
    \item[(i)] \textbf{Depth:} five non-input layers;
    \item[(ii)] \textbf{Width:} layer widths $(2B,\,2P,\,2M,\,M,\,1)$, where the final scalar layer may equivalently be taken to be linear;
    \item[(iii)] \textbf{Exact realization:} $\widehat H_M(x)=H_M(x)$ for every $x\in\{0,1\}^{B}$.
\end{enumerate}
\end{theorem}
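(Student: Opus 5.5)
The plan is to follow the computation in Algorithm~\ref{alg:relu-realization} layer by layer, show that each hidden layer is a coordinatewise ReLU of an affine map of the previous one, count the units, and then verify exact agreement on $\{0,1\}^B$ by induction through the layers using Proposition~\ref{prop:relu-gate-modules}.

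\textbf{Layer 1.} This layer is $(p_j,n_j)_{j\in[B]}=(\sigma(x_j),\sigma(1-x_j))_{j}$, giving width $2B$. On Boolean inputs we have $p_j=x_j$ and $n_j=1-x_j$; the latter is \eqref{eq:relu-not-module}. Each $\widehat z_{t,q,\ell}$ is therefore a coordinate of layer 1 and equals the literal $z_{t,q,\ell}(x)$.

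\textbf{Layer 2.} The sums $s_{t,q}$ are linear in layer 1. Hence layer 2 is the $2P$ units $\sigma(s_{t,q}-L_{t,q}+1)$ and $\sigma(s_{t,q}-L_{t,q})$. By \eqref{eq:relu-and-module}, their difference $C_{t,q}$ equals $\bigwedge_\ell z_{t,q,\ell}(x)\in\{0,1\}$. This difference is a linear readout and is not a separate layer.

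\textbf{Layer 3.} The quantity $c_t=\sum_q C_{t,q}$ is affine in layer 2, so layer 3 consists of the $2M$ units $o_t^\pm$. By \eqref{eq:relu-or-module} applied to $C_{t,1},\dots,C_{t,Q_t}\in\{0,1\}$ (which uses $Q_t\ge1$ and the normalization of degenerate outputs), we get $o_t^+-o_t^-=\bigvee_q C_{t,q}=F_t(x)$.

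\textbf{Layer 4.} The sum $s_M=\sum_t(o_t^+-o_t^-)$ is affine in layer 3. Layer 4 is the $M$ units $h_k=\sigma(s_M-k)$ for $k=0,\dots,M-1$.

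\textbf{Layer 5.} The output is a single ReLU, or equivalently a linear unit, of an affine map of layer 4. By \eqref{eq:relu-xor-module} with $r=M$ and $a_t=F_t(x)$, it equals $\bigoplus_t F_t(x)=H_M(x)$. This gives (i), (ii) and (iii).

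The main obstacle is a subtlety in the last step. Proposition~\ref{prop:relu-gate-modules}'s XOR module has its hidden units $\sigma(s-k)$ in a single layer, so the count is $M$ rather than $M+1$. I must check that the unit $\sigma(s-M)$ is not needed, or would contribute zero. Since $s_M\le M$ on the cube, $\sigma(s_M-M)=0$, so this is consistent with the stated width $M$.

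I would also confirm the XOR identity directly. On the cube $s_M\in\{0,\dots,M\}$ is an integer $s$, and
\[
h_0+2\sum_{k=1}^{M-1}(-1)^k\sigma(s-k)=s+2\sum_{k=1}^{s-1}(-1)^k(s-k).
\]
This equals $s\bmod 2$, which can be shown by induction on $s$: the increment from $s$ to $s+1$ is $1+2\sum_{k=1}^{s}(-1)^k$, which is $-1$ or $+1$ according to parity. The outer ReLU then acts as the identity on $\{0,1\}$. For $M=1$ the inner sum is empty and the output is $\sigma(\sigma(s_1))=F_1$, consistent with the claim.

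Finally, I would note that the affine readouts are absorbed into the next layer's weights. Each of these readouts is $C_{t,q}$, $c_t$, or $s_M$, so no extra depth arises, and the network has exactly five non-input layers.
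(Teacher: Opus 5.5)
Your proposal is correct and follows the paper's proof essentially step for step. Like the paper, you verify each layer on a fixed Boolean input using the NOT/AND/OR/XOR identities of Proposition~\ref{prop:relu-gate-modules}, absorb the linear readouts $C_{t,q}$, $c_t$, $s_M$ into the next layer's affine preactivations, and so obtain widths $(2B,2P,2M,M,1)$. The obstacle you raise about a missing unit $\sigma(s_M-M)$ does not actually arise, since the $r$-input XOR module in the proposition already uses exactly the $r$ hidden units $\sigma(s-k)$ for $k=0,\dots,r-1$.
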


The five non-input layers respectively contain the positive and negated literal features, product-term AND features, stage-level OR features, parity features, and the final XOR output. Theorem~\ref{thm:exact-neural-realization} is therefore a representation result rather than an additional approximation result: the circuit-to-network compilation incurs zero error on $\{0,1\}^{B}$. Consequently, all statistical guarantees for the learned circuit transfer directly to its neural realization.

\begin{corollary}[Transfer of statistical guarantees]
\label{cor:neural-transfer-statistical-guarantees}
Let $\widehat H_M$ be the neural realization of $H_M$ constructed by Algorithm~\ref{alg:relu-realization}. Then, for every random variable $X$ supported on $\{0,1\}^{B}$,
\[
\Pr\!\left(\widehat H_M(X)\neq f(X)\right)
=
\Pr\!\left(H_M(X)\neq f(X)\right).
\]
\end{corollary}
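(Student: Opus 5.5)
The corollary follows from the exact-realization clause (iii) of Theorem~\ref{thm:exact-neural-realization}. The plan is to show that the two error events $\{\widehat H_M(X)\neq f(X)\}$ and $\{H_M(X)\neq f(X)\}$ coincide as subsets of the underlying probability space. Their probabilities then agree, regardless of how $X$ is distributed on $\{0,1\}^{B}$.

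First we dispose of the degenerate case $M=0$. By convention $\widehat H_0\equiv0\equiv H_0$, so the two predictors agree everywhere and the events are identical. For $M\geq1$, Theorem~\ref{thm:exact-neural-realization}(iii) gives $\widehat H_M(x)=H_M(x)$ for every $x\in\{0,1\}^{B}$.

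Next, let $X$ be defined on $(\Omega,\mathcal F,\Pr)$ and supported on $\{0,1\}^{B}$, so that $\Pr(X\in\{0,1\}^{B})=1$. Set $\Omega_0\eqdef\{\omega: X(\omega)\in\{0,1\}^{B}\}$, which has full measure. On $\Omega_0$ the pointwise identity gives $\widehat H_M(X(\omega))=H_M(X(\omega))$, hence
\[
\{\widehat H_M(X)\neq f(X)\}\cap\Omega_0=\{H_M(X)\neq f(X)\}\cap\Omega_0 .
\]
Because $\Omega_0^c$ is null, the two events differ only by a null set, and their probabilities are equal.

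The one point needing care is measurability and the meaning of ``supported on.'' $\widehat H_M$ is a ReLU network, hence continuous on $\mathbb R^{B}$, and $f$ and $H_M$ are functions on a finite set. If $X$ is interpreted as an $\mathbb R^{B}$-valued random variable, the events above are therefore measurable. Off the cube $\widehat H_M$ may take non-Boolean values, but this does not matter: that region carries zero probability. No other step presents an obstacle, since the entire content of the argument is the exactness established in Theorem~\ref{thm:exact-neural-realization}.
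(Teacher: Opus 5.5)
Your proposal is correct and follows the paper's proof. Theorem~\ref{thm:exact-neural-realization}(iii) gives $\widehat H_M=H_M$ on $\{0,1\}^{B}$, so the two error indicators coincide (almost surely), and taking expectations yields equal probabilities; your separate treatment of the $M=0$ convention and of the null set off the cube is harmless extra care that the paper leaves implicit.
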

Consequently, every accuracy or generalization bound for \(H_M\)
established in Section~\ref{s:Main__sec:statistical-guarantees}
holds verbatim for \(\widehat H_M\). 

In particular, under Theorem~\ref{thm:junta-residual-accuracy}, if
\(S\leq\log_2 B\), then \(Q_t\leq2^S\leq B\) at every completed stage, yielding the compiled network of width \(\mathcal O(mB)\).

\paragraph{The Interpretability Certificate Implied by Algorithm~\ref{alg:relu-realization}.}

Our interpretability claim is deliberately modular. We do \emph{not} require every individual neuron to have a standalone semantic meaning. Instead, identifiable neurons or small subnetworks form modules with exact logical meanings: literal formation, AND, OR, and XOR. The interpretable object is therefore the decomposition of the network into these certified modules, rather than an independent interpretation of every hidden unit.

More precisely, the certificate associated with the compiled network is
\[
\mathcal C_M
\eqdef
\left(
(J_t,G_t)_{t=1}^{M},
\text{the gate-to-ReLU correspondence of Proposition~\ref{prop:relu-gate-modules}}
\right).
\]
At stage $t$, $J_t$ identifies the selected input coordinates and $G_t$ specifies the corresponding symbolic correction, while the gate-to-ReLU correspondence identifies the subnetwork realizing each literal, conjunction, disjunction, and XOR operation. By Proposition~\ref{prop:relu-gate-modules}, these subnetworks exactly realize their Boolean gates on Boolean inputs, and Theorem~\ref{thm:exact-neural-realization} guarantees that their composition agrees exactly with the learned circuit on $\{0,1\}^{B}$. The update $H_t=H_{t-1}\oplus F_t$ further exposes how each stage modifies the accumulated predictor, allowing the complete computation to be traced through an explicit sequence of Boolean operations.

This construction differs from post hoc symbolic extraction or neural interpretation methods that begin with a trained network and subsequently seek symbolic descriptions of its internal computation. Here, the circuit is itself an output of the training procedure, and the neural network is compiled from that circuit with a known module-by-module correspondence. No separate interpretation step is therefore required to identify which subnetworks implement the learned logical operations.

The certificate should be distinguished from the statistical guarantee. Exact circuit--network equivalence does not imply that the learned rule agrees with the unknown target on every unobserved input; this is controlled by the assumptions and results of Section~\ref{s:Main__sec:statistical-guarantees}. Moreover, the certificate applies on the Boolean cube: although $\widehat H_M$ defines a piecewise-linear function on $\mathbb{R}^{B}$, we assign no Boolean interpretation to its off-cube extension. Finally, unconstrained fine-tuning of the compiled network need not preserve the original exact logical certificate.

\section{Experiments}
\label{sec:experiments}

We now evaluate the influence-based residual training procedure, its computational scalability, and the performance of its exact neural realization relative to conventionally trained neural networks. Our experiments use randomly generated $S$-junta targets $f:\{0,1\}^{B}\to\{0,1\}$. For each seed, we sample $S$ coordinates uniformly without replacement and independently assign $\operatorname{Bernoulli}(1/2)$ labels to their $2^S$ projected inputs. Training inputs are sampled uniformly without replacement from the Boolean cube, while evaluation uses the full cube whenever feasible and an independent uniform test set otherwise. Unless stated otherwise, results are averaged over $20$ seeds and reported as mean $\pm$ standard deviation, with compared methods using the same target, training set, and test set.\footnote{The source code for reproducing the experiments is publicly available at:
\url{https://github.com/hradghoukasian/espresso}.}

Table~\ref{tab:alg3-flat-configurations} spans regimes designed to stress different aspects of the method. Configs.~1--5 vary ambient dimension and target complexity; Configs.~6--7 examine data-sparse regimes; Config.~8 matches the projection budget to the junta dimension; Config.~9 considers a target requiring multiple projected stages; and Configs.~10--11 probe the computational limits of the ambient truth-table representation using very large training sets. Configs.~10--11 use the complete Boolean cube for training; their reported evaluation accuracy therefore measures reconstruction rather than out-of-sample generalization.

Throughout the residual experiments, we set $\tau=0$, ensuring every coordinate with positive empirical residual influence remains eligible and no threshold hyperparameter is tuned. 
We use a maximum stage budget of \(m=20\) and report
the predictor at common stage indices up to \(20\); if the empirical residual
vanishes or \(C_t=\varnothing\), training terminates and the terminal predictor
is carried forward unchanged for subsequent reported stages. Ties between
equal empirical influences are resolved deterministically by increasing
coordinate index, and projected cells with tied residual counts are treated
as don't-cares as in Equation~\eqref{eq:Majority}. Further implementation
details are given in Appendix~\ref{app:experimental-details}.

\subsection{Comparison with Flat \textsc{Espresso}}
\label{subsec:flat-espresso-comparison}

We first compare Algorithm~\ref{alg:sample-multistage-residual} with flat \textsc{Espresso}. The flat baseline constructs a $B$-variable partial truth table, assigns the observed labels, treats all unobserved inputs as don't-cares, and invokes \textsc{Espresso} once. In contrast, our method repeatedly applies \textsc{Espresso} to projected truth tables of dimension at most $K$. This comparison therefore isolates the statistical and computational effects of replacing a single ambient-dimensional minimization problem with a sequence of structured low-dimensional corrections.

\begin{table}[t]
\centering
\caption{Experimental configurations. The configurations cover
moderate-dimensional, data-sparse, projection-aligned, higher-complexity,
and large-scale regimes.}
\label{tab:alg3-flat-configurations}
\begin{tabular}{cccccc}
\hline
Config. & Target & \(B\) & \(T\) & Test size & \(K\)\\
\hline
Config.~1  & \(8\)-junta  & \(12\) & \(1000\)   & \(2^{12}\) & \(6\)\\
Config.~2  & \(6\)-junta  & \(12\) & \(1000\)   & \(2^{12}\) & \(4\)\\
Config.~3  & \(8\)-junta  & \(15\) & \(4000\)   & \(2^{15}\) & \(6\)\\
Config.~4  & \(8\)-junta  & \(20\) & \(4000\)   & \(2^{17}\) & \(6\)\\
Config.~5  & \(8\)-junta  & \(23\) & \(4000\)   & \(2^{17}\) & \(6\)\\
\hline
Config.~6  & \(8\)-junta  & \(12\) & \(500\)    & \(2^{12}\) & \(6\)\\
Config.~7  & \(8\)-junta  & \(15\) & \(500\)    & \(2^{15}\) & \(6\)\\
Config.~8  & \(8\)-junta  & \(15\) & \(1000\)   & \(2^{15}\) & \(8\)\\
Config.~9  & \(15\)-junta & \(20\) & \(2^{15}\) & \(2^{17}\) & \(10\)\\
\hline
Config.~10 & \(8\)-junta  & \(20\) & \(2^{20}\) & \(2^{17}\) & \(6\)\\
Config.~11 & \(10\)-junta & \(21\) & \(2^{21}\) & \(2^{17}\) & \(8\)\\
\hline
\end{tabular}
\end{table}

Table~\ref{tab:alg3-flat-results} compares predictive accuracy, while Table~\ref{tab:alg3-flat-runtime} reports the corresponding computational cost. For each configuration, boldface identifies the better available method.

\begin{table}[t]
\centering
\caption{Predictive accuracy of the proposed method and flat
ambient-dimensional \textsc{Espresso}. Results are mean \(\pm\) standard
deviation over \(20\) seeds. ``Failed'' indicates that flat
\textsc{Espresso} did not return a predictor within the computational
budget.}
\label{tab:alg3-flat-results}
\small
\begin{tabular}{ccccc}
\hline
Config. & Method & Stage 1 & Stage 5 & Stage 20 / Flat\\
\hline
Config.~1 & Proposed & \(0.708\pm0.021\) & \(0.823\pm0.027\) & \(0.842\pm0.032\)\\
          & Flat     & --- & --- & \(\mathbf{0.965\pm0.021}\)\\
\hline
Config.~2 & Proposed & \(0.731\pm0.053\) & \(0.844\pm0.038\) & \(0.848\pm0.043\)\\
          & Flat     & --- & --- & \(\mathbf{1.000\pm0.000}\)\\
\hline
Config.~3 & Proposed & \(0.704\pm0.018\) & \(0.821\pm0.017\) & \(0.838\pm0.027\)\\
          & Flat     & --- & --- & \(\mathbf{1.000\pm0.000}\)\\
\hline
Config.~4 & Proposed & \(0.693\pm0.026\) & \(0.763\pm0.048\) & \(0.763\pm0.049\)\\
          & Flat     & --- & --- & \(\mathbf{1.000\pm0.000}\)\\
\hline
Config.~5 & Proposed & \(0.606\pm0.049\) & \(0.607\pm0.051\) & \(0.607\pm0.051\)\\
          & Flat     & --- & --- & \(\mathbf{1.000\pm0.000}\)\\
\hline
Config.~6 & Proposed & \(0.687\pm0.028\) & \(0.790\pm0.029\) & \(\mathbf{0.798\pm0.026}\)\\
          & Flat     & --- & --- & \(0.778\pm0.034\)\\
\hline
Config.~7 & Proposed & \(0.671\pm0.029\) & \(0.689\pm0.034\) & \(\mathbf{0.691\pm0.035}\)\\
          & Flat     & --- & --- & \(0.651\pm0.032\)\\
\hline
Config.~8 & Proposed & \(0.993\pm0.005\) & \(0.993\pm0.005\) & \(\mathbf{0.993\pm0.005}\)\\
          & Flat     & --- & --- & \(0.839\pm0.036\)\\
\hline
Config.~9 & Proposed & \(0.550\pm0.003\) & \(0.585\pm0.003\) & \(\mathbf{0.606\pm0.006}\)\\
          & Flat     & --- & --- & \(0.571\pm0.061\)\\
\hline
Config.~10 & Proposed & \(0.723\pm0.010\) & \(0.843\pm0.008\) & \(\mathbf{0.918\pm0.014}\)\\
           & Flat     & --- & --- & Failed\\
\hline
Config.~11 & Proposed & \(0.708\pm0.003\) & \(0.827\pm0.003\) & \(\mathbf{0.904\pm0.011}\)\\
           & Flat     & --- & --- & Failed\\
\hline
\end{tabular}
\end{table}

\begin{table}[t]
\centering
\caption{Cumulative stage-\(20\) runtime of the proposed method and runtime
of flat \textsc{Espresso}, in seconds. Results are mean \(\pm\) standard
deviation over \(20\) seeds. Boldface denotes the faster completed method.}
\label{tab:alg3-flat-runtime}
\small
\begin{tabular}{ccc}
\hline
Config. & Proposed & Flat \textsc{Espresso}\\
\hline
Config.~1  & \(0.42\pm0.04\) & \(\mathbf{0.11\pm0.01}\)\\
Config.~2  & \(0.34\pm0.02\) & \(\mathbf{0.09\pm0.02}\)\\
Config.~3  & \(1.62\pm0.19\) & \(\mathbf{1.28\pm0.15}\)\\
Config.~4  & \(\mathbf{1.79\pm0.13}\) & \(66.90\pm6.17\)\\
Config.~5  & \(\mathbf{1.07\pm0.19}\) & \(578.41\pm52.65\)\\
Config.~6  & \(0.22\pm0.02\) & \(\mathbf{0.10\pm0.01}\)\\
Config.~7  & \(\mathbf{0.22\pm0.08}\) & \(1.05\pm0.13\)\\
Config.~8  & \(\mathbf{0.24\pm0.07}\) & \(1.46\pm0.11\)\\
Config.~9  & \(\mathbf{25.93\pm1.10}\) & \(1110.23\pm131.10\)\\
Config.~10 & \(\mathbf{830.84\pm69.02}\) & Failed\\
Config.~11 & \(\mathbf{1570.23\pm141.92}\) & Failed\\
\hline
\end{tabular}
\end{table}

Tables~\ref{tab:alg3-flat-results}--\ref{tab:alg3-flat-runtime} reveal a clear accuracy--scalability tradeoff. In Configs.~1--4, flat \textsc{Espresso} has access to all ambient coordinates and attains higher accuracy, but its computational cost grows rapidly with the ambient representation. This is especially pronounced in Config.~4, where flat \textsc{Espresso} attains perfect accuracy but requires $66.9$ seconds, compared with $1.79$ seconds for the proposed method.

Config.~5 illustrates a statistical limitation of the proposed method. With $B=23$ and only $T=4000$ passive observations, Hamming-neighbor coverage is expected to become sparse, limiting the information available for empirical influence estimation. Correspondingly, the proposed method attains only $0.607$ accuracy, while flat \textsc{Espresso} remains perfectly accurate but requires approximately $578$ seconds.

The opposite accuracy pattern appears in Configs.~6--9. In the data-sparse Configs.~6--7, projection maps multiple ambient observations to the same low-dimensional pattern, yielding denser projected partial truth tables for \textsc{Espresso}. Config.~8 provides the clearest example of this structural inductive bias: since $K=S=8$, a correctly selected projection can contain the entire target support, yielding $0.993$ accuracy compared with $0.839$ for flat \textsc{Espresso}. Config.~9 further shows that residual refinement remains useful when $S>K$, with accuracy increasing from $0.550$ after one stage to $0.606$ after $20$ reported stages, compared with $0.571$ for flat \textsc{Espresso}.

Finally, Configs.~10--11 demonstrate the principal scalability advantage. Flat \textsc{Espresso} fails to return within the three-hour computational budget, whereas the proposed method completes with accuracies $0.918$ and $0.904$, respectively. Further implementation details, including the failure criterion, are given in Appendix~\ref{app:pyeda-details}. Thus, although projection can sacrifice accuracy in some regimes, the residual-adaptive construction substantially enlarges the range of problems for which \textsc{Espresso}-based learning remains computationally feasible.

\subsection{Effect of Residual Stages}
\label{subsec:effect-residual-stages}

We next isolate the effect of residual refinement by varying the number of stages. Since $F_t$ is learned from the current residual $r_t=f\oplus H_{t-1}$, each additional stage targets errors left by the preceding predictor rather than relearning $f$ from scratch. We therefore evaluate the same residual-learning sequence at successive stage indices to quantify the contribution of additional corrections. Figure~\ref{fig:alg3-accuracy-over-stages} shows two representative configurations; results for the remaining configurations are reported in Appendix~\ref{app:additional-stage-results}.

\begin{figure}[t]
    \centering
    \begin{subfigure}[t]{0.48\textwidth}
        \centering
        \includegraphics[width=\textwidth]
        {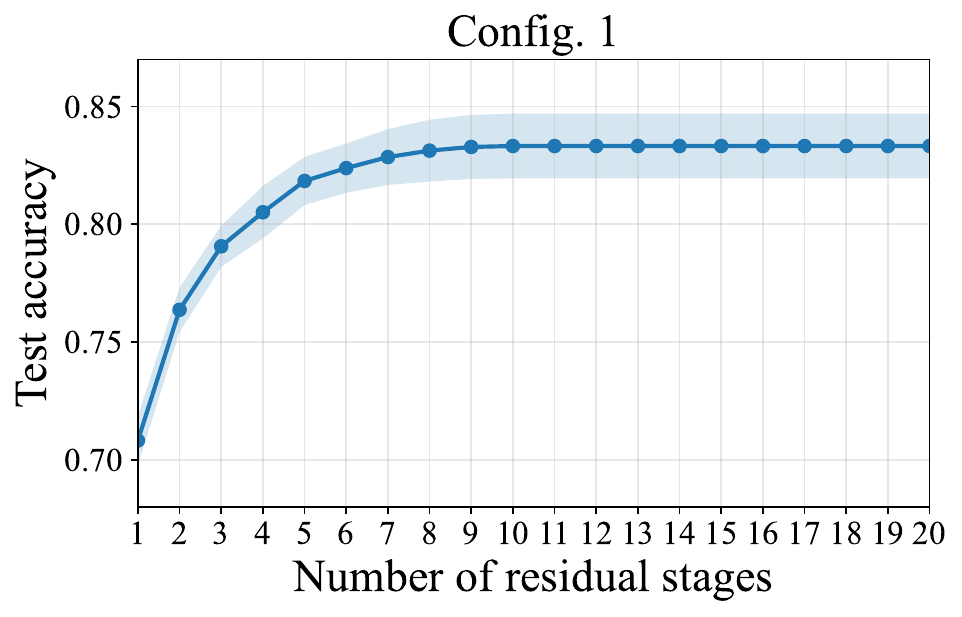}
        \caption{Config.~1.}
        \label{fig:alg3-accuracy-over-stages-config1}
    \end{subfigure}
    \hfill
    \begin{subfigure}[t]{0.48\textwidth}
        \centering
        \includegraphics[width=\textwidth]
        {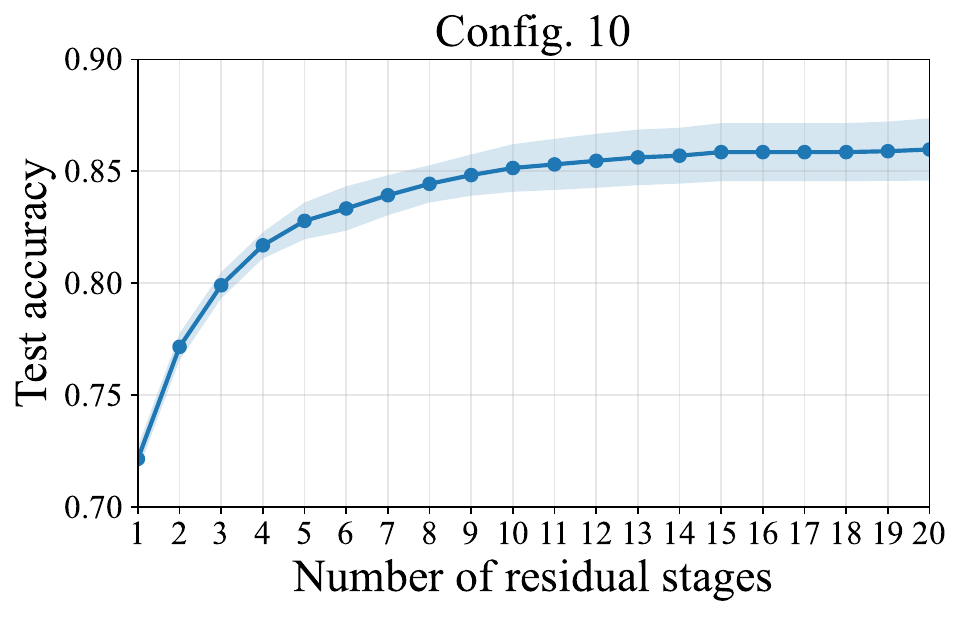}
        \caption{Config.~10.}
        \label{fig:alg3-accuracy-over-stages-config10}
    \end{subfigure}
    \caption{Test accuracy across residual stages for Configs.~1 and~10.
    Curves show the mean over \(20\) seeds and shaded regions show \(95\%\)
    confidence intervals for the mean.}
    \label{fig:alg3-accuracy-over-stages}
\end{figure}

As shown in Figure~\ref{fig:alg3-accuracy-over-stages}, most of the improvement occurs in the early residual stages. In both Configs.~1 and~10, test accuracy rises rapidly over the first few stages and then improves more gradually. This pattern suggests that the early stages capture the dominant residual structure, while later stages provide progressively smaller corrections. The trend is empirical rather than guaranteed: because each stage is fitted to the training residual, an additional correction need not improve population accuracy.

\subsection{Influence Selection versus Random Selection}
\label{subsec:influence-random-ablation}

We next isolate the effect of influence-based coordinate selection. The random-$K$ ablation retains residual learning, projection, \textsc{Espresso} minimization, and XOR aggregation, but replaces influence ranking by uniformly sampling $K$ coordinates at each stage. Table~\ref{tab:influence-random-results} reports three representative configurations; results for the remaining configurations are given in Appendix~\ref{app:additional-influence-results}.

\begin{table}[t]
\centering
\caption{
Influence-selection ablation at the stage-\(20\) reporting point.
Results are mean \(\pm\) standard deviation over \(20\) seeds. Boldface
denotes the better value within each configuration and metric.}
\label{tab:influence-random-results}
\begin{tabular}{cccc}
\hline
Config. & Selection rule & Test accuracy & Runtime (s)\\
\hline
Config.~3 & Influence top-\(K\) & \(\mathbf{0.838\pm0.027}\) & \(\mathbf{1.62\pm0.19}\)\\
          & Random \(K\)        & \(0.595\pm0.118\) & \(1.65\pm0.12\)\\
\hline
Config.~6 & Influence top-\(K\) & \(\mathbf{0.798\pm0.026}\) & \(\mathbf{0.22\pm0.02}\)\\
          & Random \(K\)        & \(0.567\pm0.033\) & \(0.26\pm0.02\)\\
\hline
Config.~10 & Influence top-\(K\) & \(\mathbf{0.918\pm0.014}\) & \(830.84\pm69.02\)\\
           & Random \(K\)        & \(0.666\pm0.033\) & \(\mathbf{371.24\pm19.90}\)\\
\hline
\end{tabular}
\end{table}

Influence-based selection substantially improves test accuracy in all three representative settings. By concentrating the projection budget on coordinates with high empirical residual influence, each stage presents \textsc{Espresso} with variables that are more informative for correcting the current residual. Random selection discards this information and therefore more often allocates the limited budget to less useful coordinates.

The runtime comparison is more nuanced. Influence selection incurs the additional cost of estimating and ranking coordinate influences, yet it is slightly faster in Configs.~3 and~6. One possible explanation is that more informative projections lead to easier downstream \textsc{Espresso} problems, partially offsetting the influence-estimation overhead. This effect is not uniform: in Config.~10, influence selection is substantially slower than random selection. Thus, its primary empirical benefit is the consistent improvement in predictive accuracy, while its runtime effect depends on the projected residual problems encountered across stages.

\subsection{Comparison with Trained Neural Networks}
\label{subsec:trained-neural-comparison}

Finally, we compare the exact ReLU realization produced by Algorithm~\ref{alg:relu-realization} with depth- and hidden-width-matched dense ReLU and sigmoid MLPs trained using Adam. By Theorem~\ref{thm:exact-neural-realization}, the compiled network introduces no additional approximation error: on the Boolean cube, its predictions coincide exactly with those of the learned Boolean circuit. The comparison therefore contrasts two distinct inductive biases and training paradigms rather than two parameterizations trained by the same optimization procedure.

The trainable networks use Adam with learning rate $10^{-3}$, batch size $256$, and binary cross-entropy loss. We train for $1000$ epochs except in the full-cube configurations. For example, one epoch over $2^{20}$ examples with batch size $256$ already comprises $4096$ gradient updates, comparable in order to $1000$ epochs on a dataset with $T=1000$. Hence, the smaller epoch count used in Config.~10 reflects the substantially larger number of optimization steps per epoch rather than an intentionally weaker baseline. The complete training protocol is given in Appendix~\ref{app:mlp-implementation-details}.

\begin{table}[t]
\centering
\caption{Exact circuit-derived ReLU realization versus depth- and
hidden-width-matched gradient-trained MLPs. Results are mean \(\pm\) standard deviation over
\(20\) seeds. Boldface denotes the highest accuracy and lowest runtime
within each configuration.}
\label{tab:exact-relu-trained-mlp}
\small
\begin{tabular}{clcc}
\hline
Config. & Method & Test accuracy & Runtime (s)\\
\hline
Config.~1 & Exact ReLU & \(0.842\pm0.032\) & \(\mathbf{0.46\pm0.06}\)\\
 & Trained ReLU, 1000 ep. & \(\mathbf{0.917\pm0.025}\) & \(8.25\pm0.78\)\\
 & Trained sigmoid, 1000 ep. & \(0.669\pm0.046\) & \(8.31\pm0.80\)\\
\hline
Config.~2 & Exact ReLU & \(0.848\pm0.043\) & \(\mathbf{0.35\pm0.05}\)\\
 & Trained ReLU, 1000 ep. & \(\mathbf{1.000\pm0.001}\) & \(7.95\pm1.38\)\\
 & Trained sigmoid, 1000 ep. & \(0.828\pm0.090\) & \(8.03\pm1.11\)\\
\hline
Config.~3 & Exact ReLU & \(0.838\pm0.027\) & \(\mathbf{1.66\pm0.12}\)\\
 & Trained ReLU, 1000 ep. & \(\mathbf{1.000\pm0.000}\) & \(33.07\pm1.98\)\\
 & Trained sigmoid, 1000 ep. & \(0.953\pm0.074\) & \(32.92\pm1.33\)\\
\hline
Config.~4 & Exact ReLU & \(0.763\pm0.049\) & \(\mathbf{1.80\pm0.15}\)\\
 & Trained ReLU, 1000 ep. & \(\mathbf{0.998\pm0.002}\) & \(30.98\pm1.33\)\\
 & Trained sigmoid, 1000 ep. & \(0.957\pm0.076\) & \(31.82\pm2.03\)\\
\hline
Config.~5 & Exact ReLU & \(0.607\pm0.051\) & \(\mathbf{1.33\pm0.26}\)\\
 & Trained ReLU, 1000 ep. & \(\mathbf{0.932\pm0.122}\) & \(44.69\pm4.69\)\\
 & Trained sigmoid, 1000 ep. & \(0.775\pm0.111\) & \(45.53\pm4.12\)\\
\hline
Config.~6 & Exact ReLU & \(\mathbf{0.798\pm0.026}\) & \(\mathbf{0.25\pm0.02}\)\\
 & Trained ReLU, 1000 ep. & \(0.721\pm0.023\) & \(3.90\pm0.25\)\\
 & Trained sigmoid, 1000 ep. & \(0.601\pm0.028\) & \(4.07\pm0.32\)\\
\hline
Config.~7 & Exact ReLU & \(\mathbf{0.691\pm0.035}\) & \(\mathbf{0.25\pm0.10}\)\\
 & Trained ReLU, 1000 ep. & \(0.625\pm0.020\) & \(5.46\pm0.65\)\\
 & Trained sigmoid, 1000 ep. & \(0.584\pm0.029\) & \(5.34\pm0.31\)\\
\hline
Config.~8 & Exact ReLU & \(\mathbf{0.993\pm0.005}\) & \(\mathbf{0.25\pm0.02}\)\\
 & Trained ReLU, 1000 ep. & \(0.727\pm0.027\) & \(8.14\pm0.49\)\\
 & Trained sigmoid, 1000 ep. & \(0.669\pm0.053\) & \(8.36\pm0.65\)\\
\hline
Config.~9 & Exact ReLU & \(\mathbf{0.606\pm0.006}\) & \(\mathbf{26.31\pm1.64}\)\\
 & Trained ReLU, 1000 ep. & \(0.577\pm0.036\) & \(579.53\pm54.07\)\\
 & Trained sigmoid, 1000 ep. & \(0.578\pm0.034\) & \(543.86\pm62.55\)\\
\hline
Config.~10 & Exact ReLU & \(0.918\pm0.014\) & \(851.24\pm64.15\)\\
 & Trained ReLU, 1 ep. & \(\mathbf{1.000\pm0.000}\) & \(8.02\pm0.58\)\\
 & Trained sigmoid, 1 ep. & \(0.677\pm0.050\) & \(\mathbf{7.91\pm0.28}\)\\
\hline
Config.~11 & Exact ReLU & \(0.904\pm0.011\) & \(1800.23 \pm 135.53\)\\
 & Trained ReLU, 1 ep. & \(\mathbf{1.000 \pm 0.000} \) & \(\mathbf{24.69 \pm 1.50}\)\\
 & Trained sigmoid, 1 ep. & \(0.720 \pm0.062\) & \( 24.73 \pm 1.81  \)\\
\hline
\end{tabular}
\end{table}
Table~\ref{tab:exact-relu-trained-mlp} reveals a regime-dependent comparison. In Configs.~1--5, the gradient-trained ReLU MLP attains higher accuracy, showing that the low-dimensional projection bias can be restrictive. In Configs.~6--9, however, the exact realization is more accurate; these are also precisely the configurations in which the underlying Boolean learner outperforms flat \textsc{Espresso}. Since Algorithm~\ref{alg:relu-realization} preserves the Boolean predictor exactly, these gains are inherited from the combinatorial learner rather than introduced by neural compilation. In the full-cube Configs.~10--11, the trained ReLU again attains perfect accuracy, while the exact realization reaches $0.918$ and $0.904$, respectively.

These results illustrate the structural inductive bias of the proposed method. The learner assumes that useful residual structure can be exposed through successive low-dimensional sets of influential coordinates. When this structure is well aligned with the target, projection concentrates observations into denser low-dimensional truth tables and leaves \textsc{Espresso} to infer the corresponding logical corrections. A conventionally trained MLP receives no such structural restriction and must instead learn the relevant coordinates and their Boolean interactions through gradient optimization. Config.~8, where $K=S$, provides the clearest example: the exact realization attains $0.993$ accuracy, compared with $0.727$ for the trained ReLU MLP.

The same inductive bias can become restrictive when informative coordinates are difficult to identify from the available Hamming-neighbor pairs or when the residual is poorly captured by $K$-dimensional corrections, as illustrated by Config.~5. Accordingly, Algorithm~\ref{alg:relu-realization} is not intended to improve the accuracy of the learned Boolean predictor: its role is to transfer that predictor exactly into a neural architecture while preserving an explicit AND--OR--XOR certificate. The resulting model therefore trades the flexibility of unconstrained gradient training for a strong logical inductive bias and an exact, construction-level interpretation of its computation.

\subsection{Discussion}
\label{subsec:experimental-discussion}

The experiments show that the proposed method is best understood through its structural inductive bias rather than as a uniformly superior alternative to either \textsc{Espresso} or gradient-trained neural networks. It assumes that the target, or its successive residuals, contains useful low-dimensional structure identifiable through empirical influence. When this assumption is well aligned with the problem, projection both reduces the computational burden on \textsc{Espresso} and produces denser projected partial truth tables. This yields improved accuracy in the data-sparse and projection-aligned Configs.~6--9 and preserves computational feasibility in Configs.~10--11, where flat \textsc{Espresso} fails.

The same inductive bias imposes limitations. With sparse passive observations, few usable Hamming-neighbor pairs may be available for estimating influence, while a small projection budget $K$ can exclude joint structure accessible to ambient-dimensional \textsc{Espresso} or unconstrained neural training. The method therefore trades unrestricted ambient fitting for a structured sequence of low-dimensional residual corrections.

The ablations support both components of this construction: additional residual stages progressively refine the predictor, while influence-based coordinate selection substantially outperforms random selection in accuracy. Finally, Algorithm~\ref{alg:relu-realization} transfers the learned Boolean predictor exactly to a neural architecture while preserving the logical certificate of Section~\ref{s:Main__sec:neural-realization}. Taken together, the experiments indicate that the method is most effective when its low-dimensional Boolean inductive bias matches the target structure, yielding a combination of computational scalability, predictive accuracy, and certifiable interpretability.

\section{Conclusion and Future Work}
\label{sec:conclusion}

We introduced a specialized training procedure for Boolean classification that balances predictive generalization and expressive power with the constraint of belonging to a combinatorially meaningful class of interpretable neural networks. Our procedure iteratively identifies influential coordinates of the current residual and learns low-dimensional logical corrections using \textsc{Espresso} (Algorithm~\ref{alg:sample-multistage-residual}), with guarantees for influence recovery (Theorem~\ref{thm:junta-influence-recovery}) and truth-table prediction (Theorem~\ref{thm:junta-residual-accuracy}). The resulting $\{\operatorname{AND},\operatorname{OR},\operatorname{XOR}\}$-circuit is then compiled exactly into a sparsely connected $\operatorname{ReLU}$-MLP (Algorithm~\ref{alg:relu-realization}; Theorem~\ref{thm:exact-neural-realization}), yielding an explicit logical certificate for the network's computation. Empirically, this structure provides a useful inductive bias in several partially observed regimes, while the residual-adaptive construction remains computationally feasible in ambient dimensions where direct application of \textsc{Espresso} fails.

Several directions remain open. Most fundamentally, extending the framework beyond Boolean classification to real-valued regression requires identifying an appropriate analogue of the logical circuit: it is unclear which interpretable primitive ``gates,'' circuit class, and corresponding training procedure should replace $\operatorname{AND}$, $\operatorname{OR}$, and $\operatorname{XOR}$ in the real-valued setting. A second direction is to characterize theoretically when the stage-wise logical structure yields a favorable inductive bias relative to flat logic minimization or gradient-trained neural networks. It would also be useful to weaken the thresholded top-$K$ influence-separation condition underlying the current prediction guarantee (Theorem~\ref{thm:junta-residual-accuracy}), for example by allowing approximate variable recovery or by controlling prediction directly through the influence captured at each stage.

Finally, the exact neural compilation suggests extensions beyond MLP architectures. In particular, one could transfer the resulting $\operatorname{ReLU}$-MLP and its compiled logical computation to a $\operatorname{ReLU}$-Transformer using the exact $\operatorname{ReLU}$-MLP-to-$\operatorname{ReLU}$-Transformer conversion of~\citep[Proposition~A.1]{kratsios2026adaptivity}. This introduces the possibility of extending the present certification framework to Transformer architectures while preserving the explicitly learned Boolean computation.




\acks{A.\ Kratsios and H.\ Ghoukasian acknowledge financial support from an NSERC Discovery Grant No.\ RGPIN-2023-04482 and No.\ DGECR-2023-00230.  They acknowledge that resources used in preparing this research were provided, in part, by the Province of Ontario, the Government of Canada through CIFAR, and companies sponsoring the Vector Institute\footnote{\href{https://vectorinstitute.ai/partnerships/current-partners/}{https://vectorinstitute.ai/partnerships/current-partners/}}.}


\appendix

\section{Proofs of the Statistical Guarantees}
\label{app:proofs}

This appendix contains the proofs of the statistical guarantees stated in Section~\ref{s:Main__sec:statistical-guarantees}. In this appendix,
\(\nu=\operatorname{Unif}(\{0,1\}^{B})\), and probabilities without an
explicit input distribution are taken with respect to \(X\sim\nu\).

\subsection{Proofs of Uniform Influence Recovery and Junta Specialization}
\label{app:proof-junta-influence-recovery}

We first establish a uniform influence-recovery result for an arbitrary
finite Boolean function class. Theorem~\ref{thm:junta-influence-recovery}
then follows by specializing this result to the class of \(S\)-juntas.

\subsubsection{An Occupancy Bound}

The paired-sampling model assigns every observation to a uniformly selected
coordinate. We first ensure that every coordinate receives sufficiently many
paired observations.

\begin{lemma}[Uniform occupancy lower bound]
\label{lem:uniform-occupancy}
Let \(I_1,\ldots,I_N\) be independent random variables uniformly distributed on
\([B]\), and define
$M_i
 \eqdef  
\sum_{n=1}^{N}\mathbbm{1}\{I_n=i\}$,
for all $i\in[B]$.
For every \(q\in\mathbb N_{+}\) and \(\delta\in(0,1)\), there exists a
universal constant \(C>0\) such that
$N
\geq
CB
\left(
q+\log\frac{B}{\delta}
\right)$
implies
\[
\Pr
\left[
\min_{i\in[B]}M_i\geq q
\right]
\geq
1-\delta.
\]
\end{lemma}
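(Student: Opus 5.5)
The plan is to bound each occupancy count separately, using a multiplicative Chernoff lower-tail bound, and then take a union bound over the \(B\) coordinates. For fixed \(i\in[B]\), the count \(M_i=\sum_{n=1}^{N}\mathbbm{1}\{I_n=i\}\) is a sum of \(N\) independent Bernoulli\((1/B)\) variables. Hence \(M_i\sim\operatorname{Bin}(N,1/B)\) with mean \(\mu\eqdef N/B\). The Chernoff bound \(\Pr[M_i\le(1-\eta)\mu]\le\exp(-\eta^2\mu/2)\) with \(\eta=1/2\) gives
\[
\Pr\!\left[M_i<\tfrac{\mu}{2}\right]\le\exp\!\left(-\tfrac{\mu}{8}\right)=\exp\!\left(-\tfrac{N}{8B}\right).
\]

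Now choose \(C\ge 8\) and assume \(N\ge CB\bigl(q+\log\frac{B}{\delta}\bigr)\). Then \(\mu=N/B\ge 8\bigl(q+\log\frac{B}{\delta}\bigr)\), which gives two facts. First, \(\mu/2\ge 4q\ge q\), so the event \(\{M_i<q\}\) is contained in \(\{M_i<\mu/2\}\). Second, \(\mu/8\ge\log(B/\delta)\), so \(\exp(-\mu/8)\le\delta/B\). Combining these, \(\Pr[M_i<q]\le\delta/B\) for each \(i\).

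A union bound over \(i\in[B]\) then yields \(\Pr[\min_i M_i<q]\le\delta\), which is the claim with the universal constant \(C=8\).

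No step is a real obstacle. The only care needed is to handle \(q\) and \(\log(B/\delta)\) simultaneously, and the additive form \(q+\log\frac{B}{\delta}\) makes each term dominate its respective requirement separately. One could alternatively apply Bernstein's inequality to \(M_i-\mu\), but the multiplicative Chernoff bound is the cleanest route.
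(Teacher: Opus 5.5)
Your proof is correct and is essentially the paper's argument: the paper also treats $M_i\sim\operatorname{Bin}(N,1/B)$, picks $C$ so that $\lambda=N/B\geq 2q$ and $\lambda\geq 8\log(B/\delta)$, applies the multiplicative Chernoff bound $\Pr(M_i<\lambda/2)\leq e^{-\lambda/8}\leq\delta/B$, and finishes with a union bound over $[B]$. The only difference is that you make the universal constant explicit ($C=8$), whereas the paper leaves it unspecified.
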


\begin{proof}
For each \(i\in[B]\),
\(M_i\sim\operatorname{Bin}(N,1/B)\), with mean
\(\lambda=N/B\). Choose \(C\) sufficiently large that the assumed lower bound
implies
$\lambda\geq2q$
and 
$\lambda\geq 8\log\frac{B}{\delta}$.
Since \(q\leq\lambda/2\), by the multiplicative Chernoff bound
\citep{mitzenmacher2005probability}, we have
\[
\Pr(M_i<q)
\leq
\Pr\left(M_i<\frac{\lambda}{2}\right)
\leq
\exp\left(-\frac{\lambda}{8}\right)
\leq
\frac{\delta}{B}.
\]
A union bound over \(i\in[B]\) therefore yields
\[
\Pr
\left[
\min_{i\in[B]}M_i<q
\right]
\leq
\sum_{i=1}^{B}\Pr(M_i<q)
\leq
\delta.
\]
\end{proof}

\subsubsection{A General Finite-Class Influence-Recovery Theorem}

\begin{theorem}[Simultaneous influence recovery]
\label{thm:simultaneous-influence-recovery}
Let \(\mathcal H\) be a finite class of Boolean functions on
\(\{0,1\}^{B}\). For every
\(\varepsilon,\delta\in(0,1)\), there exists a universal constant \(C>0\)
such that, if
\begin{equation}
\label{eq:general-influence-sample-complexity}
N
\geq
C
\frac{B}{\varepsilon^{2}}
\left(
\log|\mathcal H|
+
\log\frac{2B}{\delta}
\right),
\end{equation}
then, with probability at least \(1-\delta\), every \(M_i\) is positive and
\begin{equation}
\label{eq:uniform-influence-recovery-event}
\max_{i\in[B]}
\sup_{h\in\mathcal H}
\left|
\widehat{\operatorname{Inf}}^{\mathrm{pair}}_i(h)
-
\operatorname{Inf}_i(h)
\right|
\leq
\varepsilon.
\end{equation}
\end{theorem}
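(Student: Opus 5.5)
We will condition on the coordinate labels $(I_n)_{n=1}^N$ and split the failure event into two parts: some coordinate receives too few paired observations, or, given enough observations everywhere, some empirical influence deviates by more than $\varepsilon$. Set $q\eqdef\lceil \varepsilon^{-2}\tfrac12\log(4B|\mathcal H|/\delta)\rceil$. We apply Lemma~\ref{lem:uniform-occupancy} with confidence $\delta/2$. It gives $\min_i M_i\ge q$, hence every $M_i>0$, with probability at least $1-\delta/2$, provided $N\ge CB(q+\log(2B/\delta))$. This requirement is implied by \eqref{eq:general-influence-sample-complexity} after enlarging $C$. We use $\varepsilon<1$ to absorb the $+1$ from the ceiling and the term $\log(2B/\delta)$ into the factor $\varepsilon^{-2}$, and we use $\log(4B|\mathcal H|/\delta)\le \log|\mathcal H|+2\log(2B/\delta)$.

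Next we control the deviations conditionally. Fix a realization $(I_n)$ with $\min_i M_i\ge q$, and fix $i\in[B]$ and $h\in\mathcal H$. The $X_n$ are independent of the $I_n$ and uniform. So, given $(I_n)$, the indicators $\mathbbm 1\{h(X_n)\ne h(X_n^{\oplus i})\}$ for $n$ with $I_n=i$ are $M_i$ i.i.d.\ Bernoulli variables with mean $\operatorname{Inf}_i(h)$. Hoeffding's inequality then gives
\[
\Pr\Bigl(\bigl|\widehat{\operatorname{Inf}}^{\mathrm{pair}}_i(h)-\operatorname{Inf}_i(h)\bigr|>\varepsilon \Bigm| (I_n)\Bigr)\le 2e^{-2M_i\varepsilon^2}\le 2e^{-2q\varepsilon^2}\le \frac{\delta}{2B|\mathcal H|}.
\]
The class $\mathcal H$ is fixed in advance and does not depend on the data, so a union bound over the $B|\mathcal H|$ pairs $(i,h)$ bounds the conditional failure probability by $\delta/2$. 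We then integrate over realizations in the good occupancy event. Combining this with the occupancy failure probability $\delta/2$ gives total failure probability at most $\delta$, which is \eqref{eq:uniform-influence-recovery-event}.

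The main obstacle is the conditioning step. We must state clearly that, conditionally on $(I_n)$, the samples assigned to coordinate $i$ are still i.i.d.\ uniform and independent of the random counts. This holds because $(X_n)$ and $(I_n)$ are independent by construction. Everything else is bookkeeping of constants, namely checking that a single universal $C$ covers both the occupancy requirement and the choice of $q$.
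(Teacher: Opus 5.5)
Your proposal is correct and follows essentially the same route as the paper. The common ingredients are the same threshold $q=\lceil(2\varepsilon^2)^{-1}\log(4B|\mathcal H|/\delta)\rceil$, the occupancy lemma at level $\delta/2$, conditional Hoeffding with a union bound over coordinates and functions at level $\delta/2$, and the same absorption of constants using $\varepsilon<1$. Conditioning on the full vector $(I_n)$, rather than on $M_i=q$ one coordinate at a time, is a slightly cleaner justification of the step the paper describes as conditioning on the occupancy event without changing the distribution of the independent inputs.
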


\begin{proof}
For \(h\in\mathcal H\) and \(i\in[B]\), define
\(\phi_{h,i}(x) \eqdef  \mathbbm{1}\{h(x)\neq h(x^{\oplus i})\}\), so that
\(\operatorname{Inf}_i(h)=\mathbb E_{X\sim\nu}[\phi_{h,i}(X)]\).
Fix \(i\in[B]\). Conditional on \(M_i=q\), the set
\(\{n:I_n=i\}\) has cardinality \(q\). Since
\(X_1,\ldots,X_N\) are independent of \(I_1,\ldots,I_N\), the corresponding
inputs \(\{X_n:I_n=i\}\) remain \(q\) independent draws from \(\nu\). Hence,
for every fixed \(h\in\mathcal H\),
\(\widehat{\operatorname{Inf}}^{\mathrm{pair}}_i(h)
=q^{-1}\sum_{n:I_n=i}\phi_{h,i}(X_n)\)
is the empirical mean of \(q\) independent \(\{0,1\}\)-valued random
variables with mean \(\operatorname{Inf}_i(h)\). By Hoeffding's inequality,
\[
\Pr\!\left(
\left|
\widehat{\operatorname{Inf}}^{\mathrm{pair}}_i(h)
-\operatorname{Inf}_i(h)
\right|>\varepsilon
\,\middle|\,M_i=q
\right)
\leq 2e^{-2q\varepsilon^2}.
\]
A union bound over \(h\in\mathcal H\) therefore gives
\[
\Pr\!\left(
\sup_{h\in\mathcal H}
\left|
\widehat{\operatorname{Inf}}^{\mathrm{pair}}_i(h)
-\operatorname{Inf}_i(h)
\right|>\varepsilon
\,\middle|\,M_i=q
\right)
\leq 2|\mathcal H|e^{-2q\varepsilon^2}.
\]

So far, for a fixed coordinate \(i\), we have shown that the influence
estimates are uniformly accurate over all \(h\in\mathcal H\), provided that
coordinate \(i\) receives sufficiently many paired samples. We next choose
the required number of samples per coordinate so that this failure
probability can later be union bounded over all \(B\) coordinates.

Set
\(q_{\mathrm{req}}
 \eqdef  \left\lceil
(2\varepsilon^2)^{-1}\log(4B|\mathcal H|/\delta)
\right\rceil\),
so that \(2|\mathcal H|e^{-2q\varepsilon^2}\leq\delta/(2B)\) for every
\(q\geq q_{\mathrm{req}}\). Therefore, whenever \(M_i\geq q_{\mathrm{req}}\),
the probability that coordinate \(i\) violates the desired uniform
influence-recovery bound is at most \(\delta/(2B)\).

It remains to ensure that this condition holds simultaneously for every
coordinate. Applying
Lemma~\ref{lem:uniform-occupancy} with \(q=q_{\mathrm{req}}\) and failure
probability \(\delta/2\), it suffices that
\begin{equation}
\label{eqn:sufficient_N}
N
\geq
C_0B
\left(
q_{\mathrm{req}}
+
\log\frac{2B}{\delta}
\right).
\end{equation}
Since
$q_{\mathrm{req}}
\leq
1+
\frac{1}{2\varepsilon^2}
\left(
\log|\mathcal H|
+
\log\frac{4B}{\delta}
\right)$
and \(\varepsilon\in(0,1)\), there exists a universal constant \(C>0\) such that
\[
C_0B
\left(
q_{\mathrm{req}}
+
\log\frac{2B}{\delta}
\right)
\leq
C\frac{B}{\varepsilon^2}
\left(
\log|\mathcal H|
+
\log\frac{2B}{\delta}
\right).
\]
Therefore, the sample-size condition
\eqref{eq:general-influence-sample-complexity} implies
\eqref{eqn:sufficient_N}. Applying Lemma~\ref{lem:uniform-occupancy} with
failure probability \(\delta/2\) then yields
\[
\Pr\!\left(
M_i\geq q_{\mathrm{req}}
\text{ for all }i\in[B]
\right)
\geq
1-\frac{\delta}{2}.
\]

Hence, with probability at least \(1-\delta/2\), every coordinate receives
at least \(q_{\mathrm{req}}\) paired samples. On this event, the bound derived
above applies to every \(i\in[B]\): for each coordinate, the probability of
failure, uniformly over \(h\in\mathcal H\), is at most \(\delta/(2B)\).
Since the occupancy event depends only on \(I_1,\ldots,I_N\), conditioning
on it does not change the distribution of the independent inputs
\(X_1,\ldots,X_N\). A union bound over the \(B\) coordinates therefore gives
\[
\Pr\!\left(
\max_{i\in[B]}
\sup_{h\in\mathcal H}
\left|
\widehat{\operatorname{Inf}}^{\mathrm{pair}}_i(h)
-\operatorname{Inf}_i(h)
\right|>\varepsilon
\,\middle|\,
M_i\geq q_{\mathrm{req}}
\text{ for all }i\in[B]
\right)
\leq
\frac{\delta}{2}.
\]

We have therefore identified two possible sources of failure: the occupancy
event may fail, which occurs with probability at most \(\delta/2\), or the
occupancy event may hold while at least one coordinate violates the
influence-recovery bound, whose conditional probability is at most
\(\delta/2\). Combining these two failure probabilities yields
\[
\Pr\!\left(
\max_{i\in[B]}
\sup_{h\in\mathcal H}
\left|
\widehat{\operatorname{Inf}}^{\mathrm{pair}}_i(h)
-\operatorname{Inf}_i(h)
\right|>\varepsilon
\right)
\leq
\delta.
\]
Hence \eqref{eq:uniform-influence-recovery-event} holds with probability at
least \(1-\delta\). Moreover, on the occupancy event,
\(M_i\geq q_{\mathrm{req}}\geq1\) for every \(i\in[B]\). Hence, all paired
influence estimators are well defined.
\end{proof}

\subsubsection{Specialization to Juntas}

We next bound the number of Boolean functions depending on at most \(S\)
coordinates.

\begin{lemma}[Cardinality of the junta class]
\label{lem:junta-cardinality}
For \(1\leq S\leq B\),
\[
|\mathcal J_{B,S}|
\leq
\sum_{s=0}^{S}
\binom{B}{s}2^{2^s}
\leq
2^{2^S}
\left(\frac{eB}{S}\right)^S.
\]
Consequently,
\[
\log|\mathcal J_{B,S}|
\leq
C
\left(
2^S
+
S\log\frac{eB}{S}
\right)
\]
for a universal constant \(C>0\).
\end{lemma}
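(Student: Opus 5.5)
The plan is to count $S$-juntas by their relevant-coordinate set. Every $h\in\mathcal J_{B,S}$ has $s\eqdef|\operatorname{Rel}(h)|\le S$. By Definition~\ref{def:junta}, $h(x)=g(\operatorname{proj}_{J}(x))$ with $J=\operatorname{Rel}(h)$ and some $g:\{0,1\}^{s}\to\{0,1\}$. Hence the map $h\mapsto(\operatorname{Rel}(h),g)$ is injective: $J$ together with $g$ determines $h$ on all of $\{0,1\}^B$. For a fixed $J$ of size $s$ there are at most $2^{2^s}$ choices of $g$, and there are $\binom{B}{s}$ choices of $J$. Summing over $s=0,\ldots,S$ gives the first inequality. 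The overcount caused by functions whose relevant set is a proper subset of $J$ is harmless, since only an upper bound is needed.

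For the second inequality, bound $2^{2^s}\le2^{2^S}$ for every $s\le S$ and factor it out. It then remains to show $\sum_{s=0}^{S}\binom{B}{s}\le(eB/S)^S$ for $1\le S\le B$, which is the standard estimate. I would prove it by multiplying each term by $(B/S)^{S-s}\ge1$, which gives
\[
\sum_{s=0}^{S}\binom{B}{s}\le\Bigl(\frac{B}{S}\Bigr)^{S}\sum_{s=0}^{B}\binom{B}{s}\Bigl(\frac{S}{B}\Bigr)^{s}=\Bigl(\frac{B}{S}\Bigr)^{S}\Bigl(1+\frac{S}{B}\Bigr)^{B}\le\Bigl(\frac{eB}{S}\Bigr)^{S}.
\]
The last step uses $1+u\le e^{u}$.

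Finally, take natural logarithms:
\[
\log|\mathcal J_{B,S}|\le 2^S\log2+S\log\frac{eB}{S}\le C\Bigl(2^S+S\log\frac{eB}{S}\Bigr),
\]
for example with $C=1$. No step is a real obstacle. The only point needing care is the binomial-sum estimate, and that is handled by the multiplier trick above.
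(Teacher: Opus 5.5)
Your proof is correct and follows the paper's argument. Like the paper, you count by supports of size $s$ (at most $\binom{B}{s}2^{2^s}$ functions each), bound $2^{2^s}\le 2^{2^S}$, apply $\sum_{s\le S}\binom{B}{s}\le (eB/S)^S$, and take logarithms. The only differences are that you prove the binomial-sum estimate directly via the $(B/S)^{S-s}$ multiplier and $1+u\le e^{u}$, where the paper cites Haussler's Lemma~1, and that you note $C=1$ suffices.
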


\begin{proof}
For a fixed coordinate set of size \(s\), there are \(2^{2^s}\) Boolean
functions on the corresponding projected cube. Summing over all coordinate
sets of size at most \(S\) gives
\[
|\mathcal J_{B,S}|
\leq
\sum_{s=0}^{S}
\binom{B}{s}2^{2^s}.
\]
Since \(2^{2^s}\leq2^{2^S}\) for \(s\leq S\),
\[
|\mathcal J_{B,S}|
\leq
2^{2^S}
\sum_{s=0}^{S}\binom{B}{s}
\leq
2^{2^S}
\left(\frac{eB}{S}\right)^S,
\]
where the last inequality follows from the standard binomial-sum bound
$\sum_{s=0}^{S}\binom{B}{s}
\leq
\left(\frac{eB}{S}\right)^S$,
see, e.g., \citet[Lemma~1]{haussler1995sphere}. Taking logarithms proves the claim.
\end{proof}

\begin{proof}[Proof of Theorem~\ref{thm:junta-influence-recovery}]
Apply Theorem~\ref{thm:simultaneous-influence-recovery} with
\(\mathcal H=\mathcal J_{B,S}\). By
Lemma~\ref{lem:junta-cardinality},
\[
\log|\mathcal J_{B,S}|
\leq
C
\left(
2^S
+
S\log\frac{eB}{S}
\right).
\]
Substituting this bound into
\eqref{eq:general-influence-sample-complexity} gives
\(N\geq C\frac{B}{\varepsilon^2}\bigl(2^S+S\log\frac{eB}{S}
+\log\frac{2B}{\delta}\bigr)\), after adjusting the universal
constant.

If \(S=\mathcal O(\log B)\), then \(2^S\) is polynomial in \(B\), while
\[
S\log\frac{eB}{S}
=
\mathcal O((\log B)^2).
\]
The stated polynomial dependence follows.
\end{proof}

\subsection{Proofs of Coordinate Recovery and Truth-Table Accuracy}

We first establish exact recovery of the population-selected coordinates and then use this result to derive the corresponding truth-table accuracy guarantees.

\subsubsection{Recovery of the Population-Selected Coordinates}
\label{app:proof-coordinate-recovery}

\begin{lemma}[Zero-influence preservation]
\label{lem:zero-influence-preservation}
Let \(h:\{0,1\}^{B}\to\{0,1\}\), and let
\(\widehat{\operatorname{Inf}}_i(h)\) be the pair-based empirical influence
estimator used in Algorithm~\ref{alg:sample-multistage-residual}. If
$\operatorname{Inf}_i(h)=0$,
then
$\widehat{\operatorname{Inf}}_i(h)=0$.
The same conclusion holds for the paired-sampling estimator
\(\widehat{\operatorname{Inf}}_i^{\mathrm{pair}}(h)\) whenever it is
defined.
\end{lemma}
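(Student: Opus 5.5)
The plan is to pass from the probabilistic statement $\operatorname{Inf}_i(h)=0$ to a deterministic, pointwise statement. Under the uniform distribution $\nu$ every point of $\{0,1\}^B$ has positive mass $2^{-B}$. So
\[
0=\operatorname{Inf}_i(h)=2^{-B}\sum_{x\in\{0,1\}^B}\mathbbm{1}\{h(x)\neq h(x^{\oplus i})\}
\]
forces every summand to vanish, i.e.\ $h(x)=h(x^{\oplus i})$ for every $x\in\{0,1\}^B$. Equivalently, $i\notin\operatorname{Rel}(h)$ in the sense of Definition~\ref{def:junta}. This is the only place where the uniform (full-support) measure is used.

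For the pair-based estimator of Algorithm~\ref{alg:sample-multistage-residual}, I would split into two cases. If $\mathcal P_i=\varnothing$, then $\widehat{\operatorname{Inf}}_i(h)=0$ by convention. Otherwise, $\widehat{\operatorname{Inf}}_i(h)$ is an average of indicators $\mathbbm{1}\{h(x)\neq h(x^{\oplus i})\}$ over $x\in\mathcal P_i\subseteq\{0,1\}^B$, and each such indicator is $0$ by the pointwise identity, so the average is $0$. This holds regardless of which inputs were observed.

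For the paired-sampling estimator \eqref{eq:paired-empirical-influence}, I would argue the same way. When $M_i>0$ it averages $\mathbbm{1}\{h(X_n)\neq h(X_n^{\oplus i})\}$ over the $n$ with $I_n=i$, and each term vanishes pointwise. When $M_i=0$ the estimator is $0$ by convention, even though the statement only requires the case where it is defined.

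I do not expect any real obstacle. The one point to state carefully is that the conclusion holds surely (for every realization of the data), not merely almost surely, because the pointwise identity holds on all of $\{0,1\}^B$. I would also remark that for a non-full-support $\mu$ this step can fail, which explains the restriction to the uniform setting.
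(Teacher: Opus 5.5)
Your proposal is correct and follows the paper's proof essentially verbatim: full support of the uniform measure turns $\operatorname{Inf}_i(h)=0$ into the pointwise identity $h(x)=h(x^{\oplus i})$ for all $x\in\{0,1\}^B$, so every disagreement indicator in either estimator vanishes, and the empty-edge case is covered by the zero convention. Your closing remark is accurate about the pointwise step, but the paper notes that for a general $\mu$ the paired $\mu$-estimator still preserves zero influence almost surely, because each indicator has expectation $\operatorname{Inf}_i^\mu(h)=0$.
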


\begin{proof}
Under the uniform distribution on \(\{0,1\}^{B}\), every input has strictly
positive probability. Hence
\[
\operatorname{Inf}_i(h)
=
\Pr\!\left[
h(X)\neq h(X^{\oplus i})
\right]
=
0
\]
implies
\[
h(x)=h(x^{\oplus i})
\qquad
\text{for every }x\in\{0,1\}^{B}.
\]
Therefore, every disagreement indicator appearing in either pair-based
empirical estimator is zero. If no observed \(i\)-edge is available,
Algorithm~\ref{alg:sample-multistage-residual} sets the empirical influence
to zero by convention. Thus
\(\widehat{\operatorname{Inf}}_i(h)=0\), and the same argument applies to
\(\widehat{\operatorname{Inf}}_i^{\mathrm{pair}}(h)\).
\end{proof}






\begin{lemma}[Recovery of population-selected coordinates]
\label{lem:coordinate-selection-recovery}
Whenever \eqref{eq:stagewise-influence-accuracy} holds, Assumption~\ref{ass:thresholded-topk-gap} ensures that Algorithm~\ref{alg:sample-multistage-residual} selects $J_t=J_t^\star$ for every completed stage $t$.
\end{lemma}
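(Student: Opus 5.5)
We fix a completed stage $t$, work on the event where \eqref{eq:stagewise-influence-accuracy} holds, and abbreviate $I_i\eqdef\operatorname{Inf}_i(r_t)$ and $\widehat I_i\eqdef\widehat{\operatorname{Inf}}_i(r_t)$, so that $|\widehat I_i-I_i|\le\varepsilon_{\mathrm{inf}}$ for all $i\in[B]$. The argument is deterministic on this event and is the same for every $t$. The aim is to show $J_t=J_t^\star$. Since $J_t$ consists of the $\min\{K,|C_t|\}$ coordinates of $C_t$ with largest $\widehat I$, and $J_t^\star$ is defined in the same way with population quantities, it suffices to prove two facts: (a) every $i\in J_t^\star$ lies in $C_t$, and (b) the top-$\min\{K,|C_t|\}$ set of $C_t$ under $\widehat I$ is exactly $J_t^\star$. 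We split into the two cases of Assumption~\ref{ass:thresholded-topk-gap}.

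\emph{Step 1 (selected coordinates survive thresholding).} For $i\in J_t^\star$ we have $\widehat I_i\ge I_i-\varepsilon_{\mathrm{inf}}>\tau$ by the first condition of the assumption. Hence $J_t^\star\subseteq C_t$, and in particular $C_t\neq\varnothing$, so the algorithm does not break at this stage. This holds in both cases.

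\emph{Case $|C_t^\star|\le K$.} Here $J_t^\star=C_t^\star$. We show $C_t=C_t^\star$. The inclusion $C_t^\star\subseteq C_t$ is Step 1. Conversely, take $j\notin C_t^\star$. If $I_j=0$, then Lemma~\ref{lem:zero-influence-preservation} gives $\widehat I_j=0\le\tau$. Otherwise $I_j<\tau-\varepsilon_{\mathrm{inf}}$, so $\widehat I_j<\tau$. In either subcase $j\notin C_t$. Therefore $|C_t|=|C_t^\star|\le K$, so $J_t=C_t=C_t^\star=J_t^\star$. Tie-breaking plays no role here, since all of $C_t$ is selected.

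\emph{Case $|C_t^\star|>K$.} Now $|J_t^\star|=K$ and $|C_t|\ge K$ by Step 1, so the algorithm selects exactly $K$ coordinates. It remains to show that every $i\in J_t^\star$ strictly beats every $j\in C_t\setminus J_t^\star$ in empirical influence. Such a $j$ lies either in $C_t^\star\setminus J_t^\star$ or outside $C_t^\star$.
\begin{itemize}
\item In the first subcase, the gap condition gives $\widehat I_i-\widehat I_j\ge I_i-I_j-2\varepsilon_{\mathrm{inf}}>0$.
\item In the second subcase, $I_j\le\tau$. Since $I_j\le\max_{j'\in C_t^\star\setminus J_t^\star}I_{j'}$ (this maximum exceeds $\tau$ because that set is nonempty), the same gap inequality applies.
\end{itemize}
So the $K$ largest empirical influences in $C_t$ are attained exactly on $J_t^\star$, with strict inequalities, and tie-breaking is again irrelevant. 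Hence $J_t=J_t^\star$.

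The main obstacle is this last bookkeeping step in the case $|C_t^\star|>K$. Coordinates outside $C_t^\star$ can enter $C_t$ because that case of Assumption~\ref{ass:thresholded-topk-gap} imposes no below-threshold margin. The point is to check that they are still dominated through the top-$K$ gap, using that their population influence is bounded by that of $C_t^\star\setminus J_t^\star$. Everything else is triangle-inequality bookkeeping, and simultaneity over $t$ is immediate since \eqref{eq:stagewise-influence-accuracy} is uniform in $t$.
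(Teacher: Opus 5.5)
Your proposal is correct and follows the paper's proof essentially step by step. Like the paper, you show that $J_t^\star$ survives the empirical threshold, use zero-influence preservation plus the below-threshold margin to get $C_t=C_t^\star$ when $|C_t^\star|\le K$, and, when $|C_t^\star|>K$, dominate coordinates outside $C_t^\star$ through $\max_{j'\in C_t^\star\setminus J_t^\star}\operatorname{Inf}_{j'}(r_t)>\tau$ and the $2\varepsilon_{\mathrm{inf}}$ gap. The only difference is that you compare against $C_t\setminus J_t^\star$ rather than against all of $[B]\setminus J_t^\star$, which is immaterial.
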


\begin{proof}
Fix a completed stage \(t\), and write
\(a_i \eqdef  \operatorname{Inf}_i(r_t)\) and
\(\widehat a_i \eqdef  \widehat{\operatorname{Inf}}_i(r_t)\).
On the event \eqref{eq:stagewise-influence-accuracy},
\(|\widehat a_i-a_i|\leq\varepsilon_{\mathrm{inf}}\) for every \(i\in[B]\).
First, every \(i\in J_t^\star\) passes the empirical threshold, since
Assumption~\ref{ass:thresholded-topk-gap} gives
\(a_i>\tau+\varepsilon_{\mathrm{inf}}\), and hence
\(\widehat a_i\geq a_i-\varepsilon_{\mathrm{inf}}>\tau\).

Suppose first that \(|C_t^\star|\leq K\), so that
\(J_t^\star=C_t^\star\). Fix \(j\notin C_t^\star\). If \(a_j=0\), then
Lemma~\ref{lem:zero-influence-preservation} gives
\(\widehat a_j=0\leq\tau\), and \(j\) does not pass the empirical threshold.
Otherwise, Assumption~\ref{ass:thresholded-topk-gap} gives
\(a_j<\tau-\varepsilon_{\mathrm{inf}}\), and therefore
\(\widehat a_j\leq a_j+\varepsilon_{\mathrm{inf}}<\tau\).
Thus, every coordinate in \(C_t^\star\) and no coordinate outside
\(C_t^\star\) passes the empirical threshold. Hence
\(C_t=C_t^\star\), and since \(|C_t^\star|\leq K\),
\(J_t=J_t^\star\).

Now suppose that \(|C_t^\star|>K\). Then \(|J_t^\star|=K\) and
\(C_t^\star\setminus J_t^\star\neq\varnothing\). For every
\(i\in J_t^\star\) and \(j\in C_t^\star\setminus J_t^\star\),
Assumption~\ref{ass:thresholded-topk-gap} gives
\(a_i-a_j>2\varepsilon_{\mathrm{inf}}\), and hence
\(\widehat a_i-\widehat a_j
\geq a_i-a_j-2\varepsilon_{\mathrm{inf}}>0\).
To compare with coordinates outside \(C_t^\star\), let
\(a_{\mathrm{out}}
 \eqdef  \max_{\ell\in C_t^\star\setminus J_t^\star}a_\ell\).
Since \(a_{\mathrm{out}}>\tau\) while \(a_j\leq\tau\) for every
\(j\notin C_t^\star\), we have \(a_j<a_{\mathrm{out}}\). Therefore, for
every \(i\in J_t^\star\) and \(j\notin C_t^\star\),
\(a_i-a_j>a_i-a_{\mathrm{out}}>2\varepsilon_{\mathrm{inf}}\), which implies
\(\widehat a_i>\widehat a_j\). Thus every coordinate in \(J_t^\star\) has
strictly larger empirical influence than every coordinate outside
\(J_t^\star\). Since all coordinates in \(J_t^\star\) also pass the empirical
threshold, the algorithm selects exactly \(J_t=J_t^\star\).
Since \(t\) was arbitrary, the conclusion holds simultaneously for every
completed stage.
\end{proof}

\subsubsection{Projected Majority as an Empirical Risk Minimizer}
\label{app:projected-majority-erm}

For \(J\subseteq[B]\), define
$\mathcal G_J
 \eqdef  
\left\{
x\mapsto g(\operatorname{proj}_J(x)):
g:\{0,1\}^{|J|}\to\{0,1\}
\right\}$.

\begin{lemma}[Projected majority completion is an ERM]
\label{lem:projected-majority-erm}
Fix \(J\subseteq[B]\) and observations
\(\{(X_\ell,Y_\ell)\}_{\ell=1}^{n}\), where
\(X_\ell\in\{0,1\}^{B}\) and \(Y_\ell\in\{0,1\}\). Construct the projected
partial truth table by assigning each projected cell its strict empirical
majority label and declaring tied or unobserved cells to be don't-cares. Then
every Boolean completion of this partial truth table is an empirical risk
minimizer over \(\mathcal G_J\).
\end{lemma}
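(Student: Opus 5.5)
The plan is to use the fact that the empirical $0$--$1$ risk of any $h=g\circ\operatorname{proj}_J\in\mathcal G_J$ splits into independent contributions from the projected cells. For $u\in\{0,1\}^{|J|}$ and $a\in\{0,1\}$, let $N_a(u)\eqdef|\{\ell\in[n]:\operatorname{proj}_J(X_\ell)=u,\ Y_\ell=a\}|$. This is the same count as $N_{t,a}$ in \eqref{eq:Majority}, with the residual labels replaced by $Y_\ell$. Since the cells $\{\ell:\operatorname{proj}_J(X_\ell)=u\}$ partition $[n]$, and $h(X_\ell)=g(u)$ on the cell indexed by $u$, we obtain
\[
\sum_{\ell=1}^{n}\mathbbm{1}\{h(X_\ell)\neq Y_\ell\}
=\sum_{u\in\{0,1\}^{|J|}} N_{1-g(u)}(u).
\]
Each summand depends only on the single value $g(u)$, and $g$ can be chosen freely on $\{0,1\}^{|J|}$. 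The empirical risk over $\mathcal G_J$ is therefore minimized by minimizing each summand separately. Consequently, $\min_{h\in\mathcal G_J}\widehat{\mathcal R}(h)=\sum_u\min\{N_0(u),N_1(u)\}$.

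Next, I would check that each cell of a completion achieves this per-cell minimum. There are three cases. If $N_0(u)\neq N_1(u)$, the partial table fixes $g(u)=\argmax_a N_a(u)$, so the cell contributes $N_{1-g(u)}(u)=\min\{N_0(u),N_1(u)\}$. If $N_0(u)=N_1(u)$, whether the tie is positive or the cell is unobserved with $N_0(u)=N_1(u)=0$, the cell is a don't-care, and either value of $g(u)$ gives the same contribution, again equal to the minimum. A Boolean completion $G$ agrees with the specified entries and is arbitrary on the don't-cares, so every cell contribution is minimal. Summing over the cells shows that the empirical risk of $x\mapsto G(\operatorname{proj}_J(x))$ equals the global minimum over $\mathcal G_J$, which is the ERM property.

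There is no real obstacle here. The only points needing care are verifying the cell-decomposition identity, which holds because the projected cells partition the sample and a function in $\mathcal G_J$ is constant on each cell, and noting that unobserved cells are included in the tie case with both counts equal to zero.
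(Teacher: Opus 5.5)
Your proposal is correct and follows essentially the same argument as the paper: you write the empirical error as the sum over projected cells of $N_{1-g(u)}(u)$ and minimize each cell separately. In each cell the strict-majority label is the minimizer, and for tied or unobserved cells either label attains the minimum.
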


\begin{proof}
For \(u\in\{0,1\}^{|J|}\) and \(a\in\{0,1\}\), let
\[
N_a(u)
 \eqdef  
\left|
\left\{
\ell\in[n]:
\operatorname{proj}_J(X_\ell)=u,\,
Y_\ell=a
\right\}
\right|.
\]
For any \(g:\{0,1\}^{|J|}\to\{0,1\}\), the number of empirical errors is
\[
\sum_{u\in\{0,1\}^{|J|}}
\left[
N_1(u)\mathbbm{1}\{g(u)=0\}
+
N_0(u)\mathbbm{1}\{g(u)=1\}
\right].
\]
The minimization therefore separates over projected cells. A strict majority
label uniquely minimizes the error of its cell, while either label minimizes
a tied or unobserved cell. Consequently, every completion agreeing with the
strict-majority entries minimizes the total empirical error.
\end{proof}

\subsubsection{Generalization of the Projected Stage Learner}
\label{app:projected-stage-generalization}

For later reference, define
\(\mathcal G_{\leq K}
 \eqdef  
\bigcup_{\substack{J\subseteq[B]\\|J|\leq K}}
\mathcal G_J\).
Its cardinality satisfies
\begin{equation}
\label{eq:projected-class-log-cardinality}
\log|\mathcal G_{\leq K}|
\leq
C
\left(
2^K
+
K\log\frac{eB}{K}
\right).
\end{equation}
Indeed,
\[
|\mathcal G_{\leq K}|
\leq
\sum_{k=0}^{K}
\binom{B}{k}2^{2^k}
\leq
2^{2^K}
\sum_{k=0}^{K}\binom{B}{k}
\leq
2^{2^K}
\left(\frac{eB}{K}\right)^K,
\]
where the last inequality follows from the standard binomial-sum bound
\(\sum_{k=0}^{K}\binom{B}{k}
\leq
\left(\frac{eB}{K}\right)^K\);
see, e.g., \citet[Lemma~1]{haussler1995sphere}. Taking logarithms and
absorbing the factor \(\log 2\) into a universal constant gives
\eqref{eq:projected-class-log-cardinality}.

\begin{theorem}[Generalization of the projected stage learner]
\label{thm:projected-stage-generalization}
For an arbitrary completed stage \(t\), consider the stage-wise
sample-splitting version of Algorithm~\ref{alg:sample-multistage-residual},
assuming \(T\geq m\). Let \(J_t\subseteq[B]\), with
\(|J_t|\leq K\), denote the coordinate set selected by the algorithm, and let
\(G_t:\{0,1\}^{|J_t|}\to\{0,1\}\) denote the corresponding Boolean
completion returned by \textsc{Espresso}. Define
\(\beta_t \eqdef  \inf_{g:\{0,1\}^{|J_t|}\to\{0,1\}}
\Pr[g(\operatorname{proj}_{J_t}(X))\neq r_t(X)]\). Let \(\delta_{\mathrm{gen}}\in(0,1)\).
Then, with probability at least \(1-\delta_{\mathrm{gen}}\),
simultaneously for all completed stages,
\begin{equation}
\label{eq:projected-stage-generalization}
\Pr
\left[
G_t(\operatorname{proj}_{J_t}(X))
\neq
r_t(X)
\right]
\leq
\beta_t
+
C
\sqrt{
\frac{
m\left(
2^K
+
K\log\frac{eB}{K}
+
\log\frac{m}{\delta_{\mathrm{gen}}}
\right)
}{T}
}.
\end{equation}
\end{theorem}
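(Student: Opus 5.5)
The plan is a standard finite-class uniform convergence argument, applied stage-wise on the fresh batch and combined by a union bound over stages.

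Fix a stage $t$ and condition on the preceding batches $D^{(1)},\ldots,D^{(t-1)}$. Then $H_{t-1}$, and hence $r_t=f\oplus H_{t-1}$, is a fixed Boolean function. The batch $D^{(t)}$ consists of $T_0=\lfloor T/m\rfloor\geq T/(2m)$ fresh i.i.d.\ uniform inputs, independent of the conditioning; $T_0\ge T/(2m)$ holds since $T\geq m$. On this batch the residual labels $r_t(X)$ are evaluated exactly. The selected set $J_t$ depends on $D^{(t)}$, so I will not fix $J_t$ in advance. Instead I work uniformly over the finite class $\mathcal G_{\leq K}$, which contains $x\mapsto G_t(\operatorname{proj}_{J_t}(x))$ whatever $J_t$ is chosen.

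For each $g\in\mathcal G_{\leq K}$, Hoeffding's inequality bounds the deviation between its empirical $0$--$1$ error against $r_t$ on $D^{(t)}$ and its population error $\Pr[g(X)\neq r_t(X)]$. A union bound over $\mathcal G_{\leq K}$, using \eqref{eq:projected-class-log-cardinality}, shows that with conditional probability at least $1-\delta_{\mathrm{gen}}/m$ every such deviation is at most
\[
\eta\eqdef\sqrt{\frac{\log(2|\mathcal G_{\leq K}|m/\delta_{\mathrm{gen}})}{2T_0}}
\le C\sqrt{\frac{m\bigl(2^K+K\log\frac{eB}{K}+\log\frac{m}{\delta_{\mathrm{gen}}}\bigr)}{T}}.
\]
Because the conditional bound holds for every realization of the past batches, it also holds unconditionally. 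A union bound over the at most $m$ stages then gives simultaneity with probability at least $1-\delta_{\mathrm{gen}}$. Stages that are never completed impose no requirement and can be handled vacuously.

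On this event, let $g^\star$ attain $\beta_t$ within $\mathcal G_{J_t}$; this is possible because the class is finite, so the infimum is a minimum. By Lemma~\ref{lem:projected-majority-erm}, $G_t$ is an empirical risk minimizer over $\mathcal G_{J_t}$, because \textsc{Espresso} returns a completion of the majority partial truth table. The usual ERM chain then gives
\[
\mathrm{err}(G_t)\le\widehat{\mathrm{err}}(G_t)+\eta\le\widehat{\mathrm{err}}(g^\star)+\eta\le\beta_t+2\eta .
\]
Both $G_t\circ\operatorname{proj}_{J_t}$ and $g^\star$ belong to $\mathcal G_{\leq K}$, so the uniform deviation bound applies to each despite the data-dependent choice of $J_t$. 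Absorbing constants yields \eqref{eq:projected-stage-generalization}.

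The main obstacle is the data-dependence of $J_t$ and of $\beta_t$, which is itself random through $J_t$. Uniformity over $\mathcal G_{\leq K}$ resolves this, and it is exactly why the bound pays $K\log(eB/K)$. A second subtlety is justifying that conditioning on past batches leaves $D^{(t)}$ i.i.d.\ uniform; this is what the sample splitting is for.
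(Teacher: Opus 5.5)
Your proposal is correct and follows the paper's proof. Both condition on the preceding batches so that $r_t$ is fixed and $D^{(t)}$ is a fresh i.i.d.\ sample, then apply Hoeffding with a union bound over $\mathcal G_{\leq K}$ at level $\delta_{\mathrm{gen}}/m$, invoke Lemma~\ref{lem:projected-majority-erm} for the ERM comparison, take a union bound over the $m$ stages, and use $T_0\geq T/(2m)$. Your explicit chain through the data-dependent comparator $g^\star\in\mathcal G_{J_t}\subseteq\mathcal G_{\leq K}$ simply spells out what the paper calls the ``standard ERM comparison.''
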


\begin{proof}
Let \(\mathscr F_{t-1}\) be the sigma-algebra generated by the preceding
batches \(D^{(1)},\ldots,D^{(t-1)}\).
Conditional on \(\mathscr F_{t-1}\), the predictor \(H_{t-1}\), and hence
\(r_t=f\oplus H_{t-1}\), is fixed. Moreover, \(D^{(t)}\) consists of
\(T_0=\lfloor T/m\rfloor\) independent samples from \(\nu\).

For \(q\in\mathcal G_{\leq K}\), define
\(R_t(q) \eqdef  \Pr[q(X)\neq r_t(X)]\), and let \(\widehat R_t(q)\) denote the
corresponding empirical error on \(D^{(t)}\). Conditional on
\(\mathscr F_{t-1}\), the residual \(r_t\) is fixed and \(D^{(t)}\) is a
fresh sample of size \(T_0\) from \(\nu\). Hence, for any fixed
\(q\in\mathcal G_{\leq K}\), Hoeffding's inequality gives
\[
\Pr\!\left(
|R_t(q)-\widehat R_t(q)|>\eta
\,\middle|\,
\mathscr F_{t-1}
\right)
\leq
2e^{-2T_0\eta^2}.
\]
Since the predictor ultimately selected at stage \(t\) is data-dependent, we
require this concentration bound to hold uniformly over all
\(q\in\mathcal G_{\leq K}\). A union bound therefore yields
\[
\Pr\!\left(
\sup_{q\in\mathcal G_{\leq K}}
|R_t(q)-\widehat R_t(q)|>\eta
\,\middle|\,
\mathscr F_{t-1}
\right)
\leq
2|\mathcal G_{\leq K}|e^{-2T_0\eta^2}.
\]
Using \eqref{eq:projected-class-log-cardinality}, we have
$|\mathcal G_{\leq K}|
\leq
\exp\!\left(
C_0\left(
2^K+K\log\frac{eB}{K}
\right)
\right)$
for some universal constant \(C_0>0\). Hence
\[
2|\mathcal G_{\leq K}|e^{-2T_0\eta^2}
\leq
2\exp\!\left(
C_0\left(
2^K+K\log\frac{eB}{K}
\right)
-2T_0\eta^2
\right).
\]
To make this quantity at most \(\delta_{\mathrm{gen}}/m\), it is sufficient
that
\[
2\exp\!\left(
C_0\left(
2^K+K\log\frac{eB}{K}
\right)
-2T_0\eta^2
\right)
\leq
\frac{\delta_{\mathrm{gen}}}{m}.
\]
Taking logarithms and rearranging gives
\[
2T_0\eta^2
\geq
C_0\left(
2^K+K\log\frac{eB}{K}
\right)
+
\log\frac{2m}{\delta_{\mathrm{gen}}},
\]
and therefore it suffices to take
\[
\eta
\geq
\sqrt{
\frac{
C_0\left(
2^K+K\log\frac{eB}{K}
\right)
+
\log(2m/\delta_{\mathrm{gen}})
}{2T_0}
}.
\]
Absorbing the numerical constants and the factor \(\log 2\) into a universal
constant \(C>0\), we obtain, with conditional probability at least
\(1-\delta_{\mathrm{gen}}/m\),
\[
\sup_{q\in\mathcal G_{\leq K}}
|R_t(q)-\widehat R_t(q)|
\leq
C
\sqrt{
\frac{
2^K
+
K\log(eB/K)
+
\log(m/\delta_{\mathrm{gen}})
}{T_0}
}.
\]
Because \(\mathcal G_{\leq K}\) contains the classes
\(\mathcal G_J\) for every \(J\subseteq[B]\) with \(|J|\leq K\), this event
holds simultaneously for every admissible support and therefore also for the
data-dependent set \(J_t\) selected by the algorithm.

By Lemma~\ref{lem:projected-majority-erm}, the lifted
\textsc{Espresso} completion
$q_t(x)
 \eqdef  
G_t(\operatorname{proj}_{J_t}(x))$
is an empirical risk minimizer over \(\mathcal G_{J_t}\). The standard ERM
comparison therefore yields
\[
R_t(q_t)
\leq
\beta_t
+
C
\sqrt{
\frac{
2^K
+
K\log(eB/K)
+
\log(m/\delta_{\mathrm{gen}})
}{T_0}
}.
\]
A union bound over \(t=1,\ldots,m\) makes the result simultaneous across
stages. Finally,
$T_0
=
\left\lfloor\frac{T}{m}\right\rfloor
\geq
\frac{T}{2m}$
when \(T\geq m\), which gives
\eqref{eq:projected-stage-generalization}.
\end{proof}

\subsubsection{Recovery-to-Accuracy Guarantee}
\label{app:recovery-to-accuracy}

For each completed stage, define
\begin{equation}
\label{eq:stage-oracle-error}
\alpha_t
 \eqdef  
\inf_{g:\{0,1\}^{|J_t^\star|}\to\{0,1\}}
\Pr
\left[
g(\operatorname{proj}_{J_t^\star}(X))
\neq
r_t(X)
\right].
\end{equation}
Thus, \(\alpha_t\) is the population approximation error incurred when the
stage predictor is restricted to the population-selected coordinates.

\begin{theorem}[Recovery-to-accuracy guarantee]
\label{thm:recovery-to-accuracy}
Consider Algorithm~\ref{alg:sample-multistage-residual} with stage-wise
sample splitting and \(T\geq m\).
Suppose Assumptions~\ref{ass:stagewise-influence-accuracy} and
\ref{ass:thresholded-topk-gap} hold.  Let \(\delta_{\mathrm{gen}}\in(0,1)\). Then, with probability at least
\(1-\delta_{\mathrm{inf}}-\delta_{\mathrm{gen}}\), simultaneously for every
completed stage \(t\),
\begin{equation}
\label{eq:recovery-to-accuracy}
\mathcal R(H_t)
\leq
\alpha_t
+
C
\sqrt{
\frac{
m\left(
2^K
+
K\log\frac{eB}{K}
+
\log\frac{m}{\delta_{\mathrm{gen}}}
\right)
}{T}
}.
\end{equation}
\end{theorem}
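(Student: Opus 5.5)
The argument chains three earlier results: coordinate recovery, the stage-wise generalization bound, and the XOR error identity. It works on the intersection of two events. The first is the influence-accuracy event of Assumption~\ref{ass:stagewise-influence-accuracy}, which has probability at least \(1-\delta_{\mathrm{inf}}\). The second is the simultaneous generalization event of Theorem~\ref{thm:projected-stage-generalization}, which has probability at least \(1-\delta_{\mathrm{gen}}\). A union bound gives probability at least \(1-\delta_{\mathrm{inf}}-\delta_{\mathrm{gen}}\) for the intersection, and all remaining steps are deterministic on it.

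First, on the influence-accuracy event, Lemma~\ref{lem:coordinate-selection-recovery} together with Assumption~\ref{ass:thresholded-topk-gap} gives \(J_t=J_t^\star\) for every completed stage \(t\). Hence the projected approximation error \(\beta_t\) in Theorem~\ref{thm:projected-stage-generalization}, which is defined with the data-dependent \(J_t\), coincides with \(\alpha_t\) from \eqref{eq:stage-oracle-error}.

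Next, on the generalization event, \eqref{eq:projected-stage-generalization} bounds \(\Pr[F_t(X)\neq r_t(X)]\), where \(F_t=G_t\circ\operatorname{proj}_{J_t}\). The bound is \(\beta_t=\alpha_t\) plus the stated square-root term, and it holds simultaneously for all completed stages. Finally, the XOR identity \eqref{eq:residual-error-identity} gives \(\mathcal R(H_t)=\Pr[H_t(X)\neq f(X)]=\Pr[F_t(X)\neq r_t(X)]\). Substituting this into the previous bound yields \eqref{eq:recovery-to-accuracy}.

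The main obstacle is making the two events compatible under sample splitting. Theorem~\ref{thm:projected-stage-generalization} is conditional on \(\mathscr F_{t-1}\), and its uniformity over all of \(\mathcal G_{\leq K}\) is what lets it apply to the data-dependent \(J_t\). I would stress that this uniformity makes the event hold irrespective of which \(J_t\) is chosen, so intersecting it with the recovery event requires no independence between them; a plain union bound on the failure probabilities suffices. Integrating the conditional probability \(1-\delta_{\mathrm{gen}}/m\) over \(\mathscr F_{t-1}\) and taking a union over at most \(m\) stages gives the unconditional \(1-\delta_{\mathrm{gen}}\).

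I would also check two smaller points. One is that ``completed stage'' is interpreted consistently in the split version, since \(T_{\mathrm{stop}}\) is random; stages beyond \(T_{\mathrm{stop}}\) are simply vacuous, and this is covered by the union over \(t\le m\). The other is that \(|J_t^\star|\le K\), so the class \(\mathcal G_{\leq K}\) indeed contains the recovered projection class.
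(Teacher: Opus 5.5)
Your proposal is correct and follows the paper's proof essentially step for step. You intersect the influence-accuracy event with the simultaneous generalization event of Theorem~\ref{thm:projected-stage-generalization}, use Lemma~\ref{lem:coordinate-selection-recovery} to get $J_t=J_t^\star$ and hence $\beta_t=\alpha_t$, and transfer the bound to $\mathcal R(H_t)$ via the XOR residual identity. Your extra remarks, on uniformity over $\mathcal G_{\leq K}$ making the intersection unproblematic and on the random number of completed stages, are sound and simply make explicit what the paper leaves implicit.
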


\begin{proof}
Let
\(\mathcal E_{\mathrm{inf}}
 \eqdef  
\{\max_t\max_{i\in[B]}
|\widehat{\operatorname{Inf}}_i(r_t)
-\operatorname{Inf}_i(r_t)|
\leq\varepsilon_{\mathrm{inf}}\}\).
By Assumption~\ref{ass:stagewise-influence-accuracy},
\(\Pr(\mathcal E_{\mathrm{inf}})
\geq1-\delta_{\mathrm{inf}}\).
On this event,
Lemma~\ref{lem:coordinate-selection-recovery} gives
\(J_t=J_t^\star\) simultaneously over all completed stages, and hence
\(\beta_t=\alpha_t\).

Let \(\mathcal E_{\mathrm{gen}}\) denote the event on which the bound in
Theorem~\ref{thm:projected-stage-generalization} holds simultaneously for all
completed stages; by that theorem,
\(\Pr(\mathcal E_{\mathrm{gen}})\geq1-\delta_{\mathrm{gen}}\). On
\(\mathcal E_{\mathrm{inf}}\cap\mathcal E_{\mathrm{gen}}\), the identities
\(r_t=f\oplus H_{t-1}\) and \(H_t=H_{t-1}\oplus F_t\) imply pointwise that
\(H_t(x)\neq f(x)\) if and only if \(F_t(x)\neq r_t(x)\). Therefore,
\(\mathcal R(H_t)=\Pr[F_t(X)\neq r_t(X)]\).
Applying Theorem~\ref{thm:projected-stage-generalization} and using
\(\beta_t=\alpha_t\) gives \eqref{eq:recovery-to-accuracy}. Finally,
\(\Pr(\mathcal E_{\mathrm{inf}}\cap\mathcal E_{\mathrm{gen}})
\geq1-\delta_{\mathrm{inf}}-\delta_{\mathrm{gen}}\).
\end{proof}

\subsubsection{Proof of Accuracy under an \(S\)-Junta Residual}
\label{app:proof-junta-residual-accuracy}

\begin{proof}[Proof of Theorem~\ref{thm:junta-residual-accuracy}]
Consider the event \(\mathcal E_{\mathrm{inf}}\) from
Assumption~\ref{ass:stagewise-influence-accuracy}. On this event, together
with Assumption~\ref{ass:thresholded-topk-gap},
Lemma~\ref{lem:coordinate-selection-recovery} gives
\(J_t=J_t^\star\) for every completed stage. Fix such a stage \(t\), and
write \(s_t^\star \eqdef  |J_t^\star|\leq S\). The residual \(r_t\) depends only on the coordinates in \(J_t^\star\). Hence there exists a Boolean function
\(g_t^\star:\{0,1\}^{s_t^\star}\to\{0,1\}\) such that
\(r_t(x)=g_t^\star(\operatorname{proj}_{J_t^\star}(x))\) for every
\(x\in\{0,1\}^{B}\). Thus, the population approximation error
\(\alpha_t\) in \eqref{eq:stage-oracle-error} is zero. Moreover, on
\(\mathcal E_{\mathrm{inf}}\), \(J_t=J_t^\star\), so
\(\beta_t=\alpha_t=0\).

To obtain the sharper dependence on \(s_t^\star\), let
\(\mathcal G_{\leq s_t^\star}
 \eqdef  
\bigcup_{\substack{J\subseteq[B]\\|J|\leq s_t^\star}}\mathcal G_J\).
As in \eqref{eq:projected-class-log-cardinality},
\[
\log|\mathcal G_{\leq s_t^\star}|
\leq
C
\left(
2^{s_t^\star}
+
s_t^\star\log\frac{eB}{s_t^\star}
\right).
\]
Conditional on the preceding stage batches, the current residual is fixed.
Applying the same finite-class uniform-convergence argument as in
Theorem~\ref{thm:projected-stage-generalization} to
\(\mathcal G_{\leq s_t^\star}\) therefore gives, simultaneously over stages with
probability at least \(1-\delta_{\mathrm{gen}}\),
\[
\Pr[F_t(X)\neq r_t(X)]
\leq
C
\sqrt{
\frac{
m\left(
2^{s_t^\star}
+
s_t^\star\log\frac{eB}{s_t^\star}
+
\log\frac{m}{\delta_{\mathrm{gen}}}
\right)
}{T}
}.
\]
On \(\mathcal E_{\mathrm{inf}}\), \(J_t=J_t^\star\); therefore, the learned stage
predictor indeed belongs to \(\mathcal G_{\leq s_t^\star}\).
Finally, the XOR residual identity gives
\(\mathcal R(H_t)=\Pr[F_t(X)\neq r_t(X)]\), and hence
\[
\mathcal R(H_t)
\leq
C
\sqrt{
\frac{
m\left(
2^{s_t^\star}
+
s_t^\star\log\frac{eB}{s_t^\star}
+
\log\frac{m}{\delta_{\mathrm{gen}}}
\right)
}{T}
}.
\]
Since \(s_t^\star\leq S\), enlarging the universal constant, if necessary, implies~\eqref{eq:junta-residual-accuracy-S}. Combining the influence-recovery and
generalization events gives total probability at least
\(1-\delta_{\mathrm{inf}}-\delta_{\mathrm{gen}}\).
\end{proof}

\section{Proofs for the Certifiably Interpretable Neural Realization}
\label{app:proofs-neural-realization}

This appendix proves the gate identities, the exact circuit-to-network
compilation, and the transfer of the statistical guarantees from Section~\ref{s:Main__sec:neural-realization}. Auxiliary impossibility results and
alternative XOR constructions are given separately in
Appendix~\ref{app:aux-neural-realization}.

\subsection{Proof of the Gate-Realization Proposition}
\label{app:proof-relu-gate-modules}

\begin{proof}[Proof of Proposition~\ref{prop:relu-gate-modules}]
Fix \(a=(a_1,\ldots,a_r)\in\{0,1\}^r\) and write
\(s \eqdef  \sum_{j=1}^{r}a_j\in\{0,\ldots,r\}\).

For NOT, Booleanity gives
\(\sigma(1-a_1)=1-a_1=\neg a_1\).

For AND, if \(s\leq r-1\), then both \(s-r+1\leq0\) and \(s-r<0\), so
\(\sigma(s-r+1)-\sigma(s-r)=0\). If \(s=r\), the same difference is
\(\sigma(1)-\sigma(0)=1\). Hence
\(\psi_{\mathrm{AND}}^{(r)}(a)=\mathbbm{1}\{s=r\}
=\bigwedge_{j=1}^{r}a_j\).

For OR, if \(s=0\), then
\(\sigma(s)-\sigma(s-1)=0\); if \(s\geq1\), then
\(\sigma(s)-\sigma(s-1)=s-(s-1)=1\). Therefore
\(\psi_{\mathrm{OR}}^{(r)}(a)=\mathbbm{1}\{s\geq1\}
=\bigvee_{j=1}^{r}a_j\).

For XOR, let
$g_r(s)
 \eqdef  
\sigma(s)+2\sum_{k=1}^{r-1}(-1)^k\sigma(s-k)$,
for $s\in\{0,\ldots,r\}$.
Since \(s\) is a nonnegative integer, \(\sigma(s)=s\), and
\(\sigma(s-k)=s-k\) exactly when \(k<s\). Hence
\[
g_r(s)
=
s+2\sum_{k=1}^{s-1}(-1)^k(s-k),
\]
where the sum is empty when \(s\in\{0,1\}\). In particular,
\(g_r(0)=0\).
We now show that \(g_r(s)\) alternates between \(0\) and \(1\). For
\(s\in\{0,\ldots,r-1\}\),
\begin{align*}
g_r(s+1)-g_r(s)
&=
1+2\sum_{k=1}^{s}(-1)^k =
\begin{cases}
1, & s \text{ even},\\
-1, & s \text{ odd},
\end{cases}
=
(-1)^s.
\end{align*}
Thus, starting from \(g_r(0)=0\), the sequence satisfies
\[
g_r(0)=0,\quad g_r(1)=1,\quad g_r(2)=0,\quad g_r(3)=1,\quad\ldots,
\]
and therefore
\[
g_r(s)
=
\begin{cases}
0, & s \text{ even},\\
1, & s \text{ odd}.
\end{cases}
\]
Since \(s=\sum_{j=1}^{r}a_j\), its parity is exactly
\(\bigoplus_{j=1}^{r}a_j\). Moreover, \(g_r(s)\in\{0,1\}\); thus, the outer
ReLU does not change its value. Consequently,
\[
\psi_{\mathrm{XOR}}^{(r)}(a)
=
\sigma(g_r(s))
=
s\bmod 2
=
\bigoplus_{j=1}^{r}a_j.
\]
\end{proof}

\subsection{Proof of the Exact Neural Realization}
\label{app:proof-exact-neural-realization}

\begin{proof}[Proof of Theorem~\ref{thm:exact-neural-realization}]
Fix \(x\in\{0,1\}^{B}\). We verify the construction layer by layer.

\paragraph{Literal layer.}
For each \(j\in[B]\), \(p_j(x)=\sigma(x_j)=x_j\) and
\(n_j(x)=\sigma(1-x_j)=1-x_j\). Hence the first non-input layer contains
all positive and negated literals and has width \(2B\).

\paragraph{Product-term layer.}
For stage \(t\) and term \(q\), let
\(s_{t,q}(x) \eqdef  \sum_{\ell=1}^{L_{t,q}}\widehat
z_{t,q,\ell}(x)\), where each \(\widehat z_{t,q,\ell}\) is the
corresponding literal signal from the first layer. By
Proposition~\ref{prop:relu-gate-modules},
\[
C_{t,q}(x)
=
\sigma\left(s_{t,q}(x)-L_{t,q}+1\right)
-
\sigma\left(s_{t,q}(x)-L_{t,q}\right)
=
\bigwedge_{\ell=1}^{L_{t,q}}z_{t,q,\ell}(x).
\]
There are \(P=\sum_{t=1}^{M}Q_t\) product terms, each represented by two
ReLU units. Therefore, this layer has width \(2P\). The difference defining
\(C_{t,q}\) is a linear readout and is absorbed into the affine
preactivations of the next layer.

\paragraph{Stage-output layer.}
For each \(t\), set \(c_t(x) \eqdef  \sum_{q=1}^{Q_t}C_{t,q}(x)\). Since each
\(C_{t,q}(x)\) is Boolean, Proposition~\ref{prop:relu-gate-modules} gives
\[
o_t^{+}(x)-o_t^{-}(x)
=
\sigma(c_t(x))-\sigma(c_t(x)-1)
=
\bigvee_{q=1}^{Q_t}C_{t,q}(x)
=
F_t(x).
\]
Thus this layer has width \(2M\). Again, the linear difference
\(F_t=o_t^{+}-o_t^{-}\) is absorbed into the affine preactivations of the
parity layer.

\paragraph{Parity-feature layer.}
Let
\(s_M(x) \eqdef  \sum_{t=1}^{M}(o_t^{+}(x)-o_t^{-}(x))
=\sum_{t=1}^{M}F_t(x)\in\{0,\ldots,M\}\). The layer contains
\(h_k(x) \eqdef  \sigma(s_M(x)-k)\) for \(k=0,\ldots,M-1\), and therefore has
width \(M\).

\paragraph{Output layer.}
Applying the XOR identity of Proposition~\ref{prop:relu-gate-modules} to the
Boolean stage outputs gives
\[
\widehat H_M(x)
=
\sigma\left(
h_0(x)+2\sum_{k=1}^{M-1}(-1)^k h_k(x)
\right)
=
\bigoplus_{t=1}^{M}F_t(x)
=
H_M(x).
\]
On Boolean inputs, the affine combination inside the final ReLU already
equals the XOR of the stage outputs and therefore takes values in
\(\{0,1\}\). Hence the final ReLU is not required for exact Boolean
agreement; it is retained only so that the output is itself produced by a
ReLU unit. Equivalently, the fifth layer may be taken to be a linear output
layer without changing the depth or the computed function on
\(\{0,1\}^{B}\).
Since \(x\) was arbitrary, \(\widehat H_M=H_M\) on
\(\{0,1\}^{B}\). The five non-input layer widths are therefore
\((2B,2P,2M,M,1)\), and the total number of non-input neurons is
\(2B+2P+3M+1\).


Finally, consider a nonconstant stage \(G_t\). After removing duplicate
and redundant product terms, each remaining product term uniquely covers
at least one input in \(\{0,1\}^{|J_t|}\). Hence
\(Q_t\leq2^{|J_t|}\).
For the constant-zero and constant-one normalizations described above,
\(Q_t=1\) and \(Q_t=2\), respectively. Since every completed stage has
\(|J_t|\geq1\), the same bound \(Q_t\leq2^{|J_t|}\) holds.
Therefore, $P=\sum_{t=1}^{M}Q_t\leq M2^K$,
and \(2B+2P+3M+1=\mathcal{O}(B+M2^K)\).

\end{proof}

\begin{proof}[Proof of Corollary~\ref{cor:neural-transfer-statistical-guarantees}]
Theorem~\ref{thm:exact-neural-realization} gives
\(\widehat H_M(x)=H_M(x)\) for every \(x\in\{0,1\}^{B}\). Hence, for any
Boolean target \(f\) and any random variable \(X\) supported on the Boolean
cube,
\[
\mathbbm{1}\{\widehat H_M(X)\neq f(X)\}
=
\mathbbm{1}\{H_M(X)\neq f(X)\}.
\]
Taking expectations yields
$\mathbb{P}\!\left(\widehat H_M(X)\neq f(X)\right)
=
\mathbb{P}\!\left(H_M(X)\neq f(X)\right)$.
Therefore, every accuracy or generalization guarantee for \(H_M\) transfers unchanged
to \(\widehat H_M\).
\end{proof}

\section{Additional Details on Espresso Logic Minimization}
\label{app:espresso-details}

This appendix provides additional background on the \textsc{Espresso}
heuristic introduced in Section~\ref{subsec:espresso-preliminaries}. We
first introduce the cover terminology used in the classical description
of \textsc{Espresso}, then summarize its principal operators and the
input--output convention used throughout this paper.

\subsection{Implicants, Prime Implicants, and Covers}
\label{app:espresso-cover-definitions}

Let a partially specified Boolean function on \(q\) variables be
represented by the disjoint partition
\[
\{0,1\}^{q}
=
F^{\mathrm{ON}}
\mathbin{\dot\cup}
F^{\mathrm{OFF}}
\mathbin{\dot\cup}
F^{\mathrm{DC}}.
\]
A \emph{cube} is a conjunction of literals and represents the subset of
\(\{0,1\}^{q}\) satisfying those literals. A cube \(c\) is an
\emph{implicant} if \(c\cap F^{\mathrm{OFF}}=\varnothing\); equivalently,
every point represented by \(c\) lies in
\(F^{\mathrm{ON}}\cup F^{\mathrm{DC}}\). An implicant is
\emph{prime} if it cannot be strictly enlarged while remaining an
implicant.

A \emph{cover} is a collection of implicants whose union contains
\(F^{\mathrm{ON}}\). Thus, a cover determines an SOP representation
that evaluates to \(1\) on every ON-set entry and to \(0\) on every
OFF-set entry. A prime implicant is \emph{essential} if it contains at
least one ON-set point that is not contained in any other prime
implicant.

\textsc{Espresso} begins from an initial cover, commonly obtained from
the ON-set minterms, and iteratively modifies this cover while
maintaining consistency with the specified truth table
\citep{brayton1984logic,rudell2004multiple}.

\subsection{Core Espresso Operations}
\label{app:espresso-operators}

The classical \textsc{Espresso} procedure is organized around three
principal operations: \textsc{Expand}, \textsc{Reduce}, and
\textsc{Irredundant}.

\paragraph{Expand.}
Given the current cover, \textsc{Expand} enlarges its cubes while
preventing them from intersecting \(F^{\mathrm{OFF}}\). This tends to
replace more specific product terms by more general ones and typically
moves the corresponding cubes toward prime implicants.

\paragraph{Reduce.}
The \textsc{Reduce} operation contracts selected cubes while retaining
sufficient coverage of the ON-set. Its purpose is not necessarily to
simplify the current representation immediately, but to allow a
subsequent \textsc{Expand} step to explore alternative enlargements that
may lead to a smaller cover.

\paragraph{Irredundant.}
The \textsc{Irredundant} operation removes cubes that are unnecessary
for covering the ON-set. After this step, each remaining cube contributes
to the current representation.

These operations are applied iteratively. A typical optimization cycle
first reduces the current cover, then expands the resulting cubes against
the OFF-set, and finally removes redundant cubes. The cycle is repeated
while improvements in the chosen cover cost are obtained.

\textsc{Espresso} also employs auxiliary routines. The
\textsc{Essentials} step identifies cubes that uniquely cover particular
ON-set entries and may retain them separately while the remaining cover
is optimized. After the principal optimization cycle stabilizes,
\textsc{Last\_Gasp} applies a more aggressive reduce--expand
perturbation in an attempt to escape the current locally stable
representation and obtain a simpler one.

The resulting procedure is heuristic. It preserves consistency with the
specified ON- and OFF-set entries, but it does not guarantee minimization
of either the number of product terms or the total number of literal
occurrences.

\subsection{Partial Truth Tables and the \textsc{EspressoLearn} Convention}
\label{app:espresso-learn-details}

For a partial truth table
\(\widehat g:\{0,1\}^{q}\to\{0,1,\mathrm{DC}\}\), define
\(F^{\mathrm{ON}} \eqdef  \{u:\widehat g(u)=1\}\),
\(F^{\mathrm{OFF}} \eqdef  \{u:\widehat g(u)=0\}\), and
\(F^{\mathrm{DC}} \eqdef  \{u:\widehat g(u)=\mathrm{DC}\}\).
These sets form a partition of \(\{0,1\}^{q}\).
Running \textsc{Espresso} on this partially specified function produces
a cover \(\mathcal{C}\) satisfying
\[
F^{\mathrm{ON}}
\subseteq
\bigcup_{c\in\mathcal{C}}c,
\qquad
\left(\bigcup_{c\in\mathcal{C}}c\right)
\cap F^{\mathrm{OFF}}
=
\varnothing.
\]
For any 
$u\in\{0,1\}^{q}$, the corresponding Boolean function is $
G(u)
 \eqdef  
\mathbbm{1}
\left\{
u\in\bigcup_{c\in\mathcal{C}}c
\right\}
$.
Consequently, \(G(u)=1\) for every \(u\in F^{\mathrm{ON}}\) and
\(G(u)=0\) for every \(u\in F^{\mathrm{OFF}}\). No restriction is
imposed on \(G(u)\) for \(u\in F^{\mathrm{DC}}\). These unconstrained
entries allow \textsc{Espresso} to enlarge cubes through unobserved
regions of the truth table and thereby obtain a simpler SOP completion.

Throughout the paper, we use the notation
$
G=\textsc{EspressoLearn}(\widehat g)
$
for the Boolean completion returned by this procedure. The only property
required by our learning algorithm and statistical analysis is
$
G(u)=\widehat g(u)$
whenever
$\widehat g(u)\in\{0,1\}.
$
Thus, our results do not rely on \textsc{Espresso} finding a globally
minimum representation; they require only a Boolean completion
consistent with the specified entries of the partial truth table.

\subsection{Implementation}
\label{app:espresso-implementation}

In our experiments, we access \textsc{Espresso} through \texttt{PyEDA}, which provides a Python interface to the underlying Espresso implementation. Partial truth tables are encoded using $0$, $1$, and don't-care symbols, and the resulting minimized expression is returned in sum-of-products (SOP) form. We use this expression both as the stage-wise predictor in Algorithm~\ref{alg:sample-multistage-residual} and as the logical representation subsequently compiled into the exact ReLU realization of Section~\ref{s:Main__sec:neural-realization}.

Since \textsc{Espresso} is a heuristic logic-minimization procedure,
we do not assume a worst-case guarantee on the optimality of its returned
SOP. In our method, its computational role is instead governed by its
runtime on the lower-dimensional projected truth tables produced by the
training procedure.

\section{Experimental and Implementation Details}
\label{app:experimental-details}

This appendix provides the implementation and reproducibility details for the
experiments in Section~\ref{sec:experiments}. We first describe the computing
environment, target generation, and \textsc{Espresso} interface. We then give
the implementation details for
Algorithm~\ref{alg:sample-multistage-residual}, the flat-\textsc{Espresso}
baseline, the residual-stage and coordinate-selection ablations, and the
neural-network comparison. Finally, we report additional experimental results
for configurations omitted from the corresponding main-text figures and
tables.

\subsection{Software and Computing Environment}
\label{app:software-environment}

All experiments were conducted on a system running Ubuntu 22.04.4 LTS,
equipped with an Intel(R) Xeon(R) CPU at \(2.20\) GHz and \(16\) GB of RAM.
The experiments were implemented in Python. Boolean truth-table minimization
was performed using the \texttt{PyEDA} interface to \textsc{Espresso}, and
the neural-network experiments were implemented in PyTorch.

Runtime was measured as elapsed wall-clock time. For
Algorithm~\ref{alg:sample-multistage-residual}, training time includes
residual computation, influence estimation, coordinate selection,
construction of the projected partial truth table, and the
\textsc{Espresso} call at every completed stage. For
Algorithm~\ref{alg:relu-realization}, the reported runtime additionally
includes construction of the exact neural parameters. Test-set evaluation is
excluded from all reported training and construction times. For the trainable
MLPs, runtime includes gradient-based training and excludes test evaluation.

\subsection{Generation of Random Junta Targets}
\label{app:random-junta-generation}

For each seed, an \(S\)-junta target is generated in two steps. First, a generating coordinate set \(J^\star\subseteq[B]\) of cardinality \(S\) is sampled uniformly
without replacement. Second, the values of a Boolean function
$g^\star:\{0,1\}^{S}\to\{0,1\}$
are sampled independently from \(\operatorname{Bernoulli}(1/2)\). The
ambient target is then defined by
$f(x)
=
g^\star\bigl(\operatorname{proj}_{J^\star}(x)\bigr)$.

This procedure generates a random Boolean function depending only on the sampled coordinates. It does not explicitly condition on every sampled coordinate
being relevant; consequently, the realized effective dimension may
occasionally be smaller than \(S\).

Training inputs are sampled uniformly without replacement from
\(\{0,1\}^{B}\). Sampled test inputs are generated independently in the same
manner. When full-cube evaluation is used, the test set contains all \(2^B\)
Boolean inputs. Within each seed, every compared method uses exactly the same
target function, training set, and test set.

The eleven configurations used throughout the experiments are summarized in
Table~\ref{tab:alg3-flat-configurations}. Their ordering is chosen to group
qualitatively similar regimes. Configs.~1--5 are settings in which flat
\textsc{Espresso} attains higher predictive accuracy than the proposed
method, Configs.~6--9 are settings in which the proposed structural
dimension reduction is beneficial, and Configs.~10--11 are large-scale
full-cube training settings in which flat \textsc{Espresso} fails to return
within the computational budget.

\subsection{\textsc{Espresso} through \texttt{PyEDA}}
\label{app:pyeda-details}

A partial truth table over $q$ variables is represented by a string of length $2^q$ with entries in $\{0,1,\texttt{-}\}$, where $0$ and $1$ denote specified outputs and \texttt{-} denotes a don't-care. We construct the corresponding \texttt{PyEDA} truth-table object and call \texttt{espresso\_tts}, which invokes the underlying \textsc{Espresso} minimizer and returns a simplified Boolean expression.

\texttt{PyEDA} acts only as an interface to the compiled \textsc{Espresso} implementation; the minimization itself is not performed in Python. Moreover, the interface does not expose a native mechanism for returning the best intermediate cover after a prescribed time limit. Consequently, an ambient-dimensional call may remain inside \textsc{Espresso} for an extended period without returning a partial solution. We therefore impose a fixed three-hour computational budget on each flat-baseline call; runs that do not terminate within this budget are recorded as timeouts.

Degenerate projected truth tables are handled before invoking \textsc{Espresso}. If all specified entries equal zero, we return the constant-zero function, while if all specified entries equal one, we return the constant-one function. An all-don't-care table is assigned the constant-zero completion. These conventions avoid implementation-dependent behavior on constant or completely unspecified tables.

\subsection{Implementation of Algorithm~\ref{alg:sample-multistage-residual}}
\label{app:algorithm3-implementation}

Inputs are represented as integers in
\(\{0,\ldots,2^B-1\}\), with individual coordinates accessed using bit
operations. Projection onto
\(J=\{j_1,\ldots,j_k\}\)
is implemented by packing the selected coordinates into an integer in
\(\{0,\ldots,2^k-1\}\).

Because all experimental training inputs are sampled without replacement,
they are distinct, so \(D_{\mathrm{train}}^X\) contains exactly the
observed training inputs. Thus, the set-based implementation below
coincides with Algorithm~\ref{alg:sample-multistage-residual}.


For each coordinate \(i\in[B]\), the implementation precomputes the observed
\(i\)-edge set
\[
\mathcal P_i
=
\left\{
x\in D_{\mathrm{train}}^{X}:
x_i=0,\;
x^{\oplus i}\in D_{\mathrm{train}}^{X}
\right\}.
\]
The restriction \(x_i=0\) ensures that each undirected Hamming-neighbor pair
is counted once. These sets depend only on the observed input assignments and
are therefore computed once before the stage-wise residual iterations.
At stage \(t\), residual labels are evaluated on the observed training
inputs as
$r_t(x)
=
f(x)\oplus H_{t-1}(x)$.
For each coordinate \(i\), the empirical residual influence is then
\[
\widehat{\operatorname{Inf}}_i(r_t)
=
\begin{cases}
\displaystyle
\frac{1}{|\mathcal P_i|}
\sum_{x\in\mathcal P_i}
\mathbbm{1}
\left\{
r_t(x)\neq r_t(x^{\oplus i})
\right\},
& |\mathcal P_i|>0,\\[1em]
0,
& |\mathcal P_i|=0.
\end{cases}
\]

All experiments use
\(\tau=0\).
Under this choice, every coordinate with strictly positive empirical residual
influence is eligible for selection. In principle, one could instead choose
\(\tau>0\) when prior structural information suggests that every relevant
coordinate has population influence bounded below by some positive level,
thereby screening out coordinates whose estimated influence is too small to
be considered meaningful. We use \(\tau=0\) to avoid introducing an
additional threshold hyperparameter and to isolate the effect of influence
ranking and the projection budget \(K\).

The active set is
$C_t
=
\left\{
i\in[B]:
\widehat{\operatorname{Inf}}_i(r_t)>0
\right\}$.
If \(C_t\neq\varnothing\), the implementation selects
$|J_t|
=
\min\{K,|C_t|\}$
coordinates with the largest empirical residual influences. Ties between coordinates with equal empirical influence are broken by
increasing coordinate index, making the tie-breaking deterministic and
reproducible across repeated runs.

Once \(J_t\) has been selected, the residual observations are grouped by
their projections onto \(J_t\). For every
\(u\in\{0,1\}^{|J_t|}\), the implementation computes
\[
N_{t,a}(u)
=
\left|
\left\{
x\in D_{\mathrm{train}}^{X}:
\operatorname{proj}_{J_t}(x)=u,\;
r_t(x)=a
\right\}
\right|,
\qquad
a\in\{0,1\}.
\]
A strict empirical majority determines the specified residual label. If
$N_{t,0}(u)=N_{t,1}(u)$,
the projected cell is marked as a don't-care. In particular, an unobserved
projected pattern has
\(N_{t,0}(u)=N_{t,1}(u)=0\) and is therefore also a don't-care. This
implements Equation~\eqref{eq:Majority} exactly; no random label is assigned
to a tied projected cell.

The resulting projected partial truth table contains at most \(2^K\) entries
and is passed to \textsc{Espresso}. Stage predictions are cached over the
projected cube after each \textsc{Espresso} call. The accumulated predictor
is evaluated as
$H_t(x)
=
\bigoplus_{s=1}^{t}
F_s(x)$.
The reported runtime at stage \(t\) is cumulative over all stages completed
up to that point.

\paragraph{Stage budget and early stopping.}
All residual experiments use a maximum stage budget of
$m=20$.
Using a common maximum stage index permits direct comparison across
configurations and produces comparable stage-wise curves. We nevertheless
retain the stopping conditions of
Algorithm~\ref{alg:sample-multistage-residual}. If the residual vanishes on
all observed training inputs or \(C_t=\varnothing\), the algorithm terminates
and no additional \textsc{Espresso} corrections are fitted.

For reporting results at a later common stage index, the terminal predictor
is carried forward unchanged. More precisely, if training stops after
\(T_{\mathrm{stop}}<20\) completed stages, then for reporting purposes we set
\[
H_t
 \eqdef  
H_{T_{\mathrm{stop}}},
\qquad
t=T_{\mathrm{stop}}+1,\ldots,20.
\]
Thus, a reported ``stage-20'' accuracy does not imply that twenty nontrivial
corrections were necessarily learned. This convention preserves the
algorithm's stopping rule while allowing every configuration to be compared
at the same reported stage indices.

\subsection{Flat \textsc{Espresso} Baseline}
\label{app:flat-espresso-implementation}

The flat baseline allocates an ambient truth table of length \(2^B\). Every
entry is initially marked as a don't-care, after which the observed training
labels are inserted at their corresponding indices. The resulting
\(B\)-variable partial truth table is passed to \texttt{espresso\_tts} in a
single call.

This baseline gives \textsc{Espresso} access to every ambient coordinate and
therefore does not incur the projection restriction imposed by \(K\). Its
principal limitation is the explicit ambient representation: allocating the
partial truth table already requires \(\Theta(2^B)\) entries before the
cost of logic minimization itself is considered. This implementation matches
the full-dimensional baseline discussed in
Section~\ref{subsec:flat-espresso-comparison}.

If a flat-\textsc{Espresso} run did not complete within three hours, it was
terminated and recorded as ``Failed.'' Since \texttt{PyEDA} does not expose
an intermediate valid cover from the running \textsc{Espresso} process, such
a run produces neither an accuracy value nor a partial predictor.

\subsection{Residual-Stage Ablation}
\label{app:stage-ablation-details}

The residual-stage experiment evaluates a single execution of
Algorithm~\ref{alg:sample-multistage-residual} at successive stage indices.
Thus, the stage-one, stage-five, and stage-twenty values reported in
Table~\ref{tab:alg3-flat-results} are points along the same residual-learning
sequence rather than results from independently retrained models.

For the stage-wise curves, test accuracy is recorded after every reported
stage \(t=1,\ldots,20\). If the algorithm terminates before stage \(20\), the
terminal predictor is carried forward as described in
Appendix~\ref{app:algorithm3-implementation}.

If \(\widehat\mu_t\) and \(\widehat s_t\) denote the sample mean and sample
standard deviation of test accuracy across \(n=20\) seeds, the shaded region
in the corresponding figures is the \(95\%\) confidence interval for the mean
$\widehat\mu_t
\pm
t_{0.025,19}
\frac{\widehat s_t}{\sqrt{20}}$,
where $t_{0.025, 19} \approx 2.093$ is the upper-tail critical value of the Student's $t$-distribution with $19$ degrees of freedom ($\Pr(T_{19} > t_{0.025, 19}) = 0.025$). We use the Student's \(t\)-interval because
the across-seed population variance is unknown and is estimated by the sample
variance \(\widehat {{s_t}^2}\). These confidence intervals quantify uncertainty
in the estimated mean across randomly generated targets and datasets and
should be distinguished from the standard deviations reported in the tables.

\subsection{Random-\(K\) Coordinate-Selection Ablation}
\label{app:random-k-details}

The random-\(K\) ablation isolates the contribution of the influence-ranking
step. It retains the residual computation, projected
empirical majority, \textsc{Espresso} minimization, XOR aggregation,
residual-zero stopping rule, and maximum stage budget. The only change is
the coordinate-selection rule.
At each completed stage, the random-\(K\) method
samples $\min\{K,B\}$ distinct coordinates uniformly without replacement from
$[B]$, independently of their empirical influence values. Thus, unlike the
proposed method, the ablation does not use $C_t$ or the influence ranking to
restrict the stage projection.

Separate random-number generators are used for the influence-based and
random-\(K\) procedures so that internal random choices in one method do not
affect those in the other. For each seed, both methods nevertheless use the
same target function, training set, and test set.

The purpose of this ablation is to test the inductive bias introduced by
residual influence selection. Both methods restrict every stage to a
low-dimensional projection; the difference is whether that projection is
directed toward coordinates exhibiting large empirical residual sensitivity
or chosen without reference to the residual structure.

\subsection{Exact ReLU Construction and Trainable MLPs}
\label{app:mlp-implementation-details}

The circuit-derived neural network is constructed deterministically from the
stage functions learned by Algorithm~\ref{alg:sample-multistage-residual}. Its
predictions are numerically checked against those of the Boolean predictor on
Boolean test inputs. By Theorem~\ref{thm:exact-neural-realization}, the two
predictors agree exactly on $\{0,1\}^{B}$, up to floating-point evaluation
error.

For each seed, the trainable ReLU and sigmoid baselines use the same number
of hidden layers and the same hidden-layer widths as the corresponding exact
network produced for that seed. Their parameters are initialized randomly
and optimized using Adam. The training loss is binary cross-entropy, implemented using logits:
\[
\widehat L(\theta)
=
\frac{1}{T}
\sum_{i=1}^{T}
\left[
-y_i\log \rho_\theta(X_i)
-
(1-y_i)\log\bigl(1-\rho_\theta(X_i)\bigr)
\right],
\]
where
$\rho_\theta(x)=\frac{1}{1+\exp(-z_\theta(x))}$
and $z_\theta(x)$ denotes the scalar output logit. The learning rate is
$10^{-3}$, the batch size is $256$, and no weight decay is used. Training
examples are randomly permuted at every epoch. At evaluation time, the
predicted label is
$\mathbbm{1}\{\rho_\theta(x)\geq\frac12\}$.

The trainable networks are run for $1000$ epochs in Configs.~1--9.
Configs.~10--11 use full-cube training sets and therefore require a different
interpretation of an epoch. The number of gradient updates per epoch is
approximately $\left\lceil T/256\right\rceil$.

For example, one epoch in Config.~10, where $T=2^{20}$, contains
$\frac{2^{20}}{256}=4096$ Adam updates. By comparison, $1000$ epochs with
$T=1000$ and the same batch size contain approximately
$1000\left\lceil\frac{1000}{256}\right\rceil=4000$ updates. Similarly, one
epoch with $T=2^{21}$ contains $8192$ updates. Thus, a single full-cube epoch
already performs thousands of parameter updates and should not be interpreted
as comparable to a single epoch in the smaller-data configurations.

We therefore use one full-cube epoch for Configs.~10--11. This choice keeps
the gradient-update budget within the same broad scale as the long
small-sample training runs while avoiding a prohibitively large optimization
budget created solely by measuring training length in epochs. The comparison
is not intended to enforce an exactly identical number of gradient updates
across every configuration; rather, the large discrepancy in examples per
epoch makes a fixed epoch count across all $T$ inappropriate. The number of
epochs is therefore reported explicitly whenever neural-network results are
presented.

For the exact method, runtime includes both
Algorithm~\ref{alg:sample-multistage-residual} and the deterministic
circuit-to-network compilation. For the trainable neural baselines, runtime
includes gradient-based optimization and excludes test evaluation.

\subsection{Reproducibility and Reported Statistics}
\label{app:experimental-statistics}

Unless otherwise stated, the experiments use seeds $0,\ldots,19$. For a
reported quantity $Z_1,\ldots,Z_{20}$, the tables display
$\overline Z\pm s_Z$, where
$\overline Z=\frac{1}{20}\sum_{i=1}^{20}Z_i$ and
\[
s_Z
=
\sqrt{
\frac{1}{19}
\sum_{i=1}^{20}
\left(Z_i-\overline Z\right)^2
}
\]
is the sample standard deviation. Thus, the table entries describe
variability across randomly generated junta targets and datasets; they are
not confidence intervals. Confidence intervals are used only for the
stage-wise curves, as described in
Appendix~\ref{app:stage-ablation-details}.

\section{Complementary Experimental Results}
\label{app:additional-experiments}

The main text retains representative results for the residual-stage and
coordinate-selection experiments to keep Section~\ref{sec:experiments}
compact. This section reports the corresponding results for the remaining
configurations. The configuration numbering is identical to
Table~\ref{tab:alg3-flat-configurations}.

\subsection{Residual-Stage Results for All Configurations}
\label{app:additional-stage-results}

Figure~\ref{fig:alg3-accuracy-over-stages} in the main text displays
representative stage-wise trajectories, while the remaining configurations
are shown below for completeness. Configs.~2--4 and~6 exhibit rapid
improvements during the first few residual stages followed by a plateau,
whereas Configs.~9 and~11 improve more gradually over a larger number of
stages. Configs.~5 and~8 remain essentially unchanged, although for different
reasons: Config.~8 achieves near-perfect accuracy from the first stage despite
using relatively few training samples. Config.~5 operates in a more
challenging regime with relatively few training samples compared with the size of the ambient
Boolean cube, which may limit the ability of later stages to identify
and correct the remaining residual structure. Config.~7 improves only at the
first additional stage and then stabilizes. Overall, these trajectories show
that additional residual stages are most useful when meaningful residual
structure remains to be captured; once that structure has been largely
exhausted, further stages provide little or no improvement.
\begin{figure}[t]
    \centering
    \begin{subfigure}[t]{0.48\textwidth}
        \centering
        \includegraphics[width=\textwidth]
        {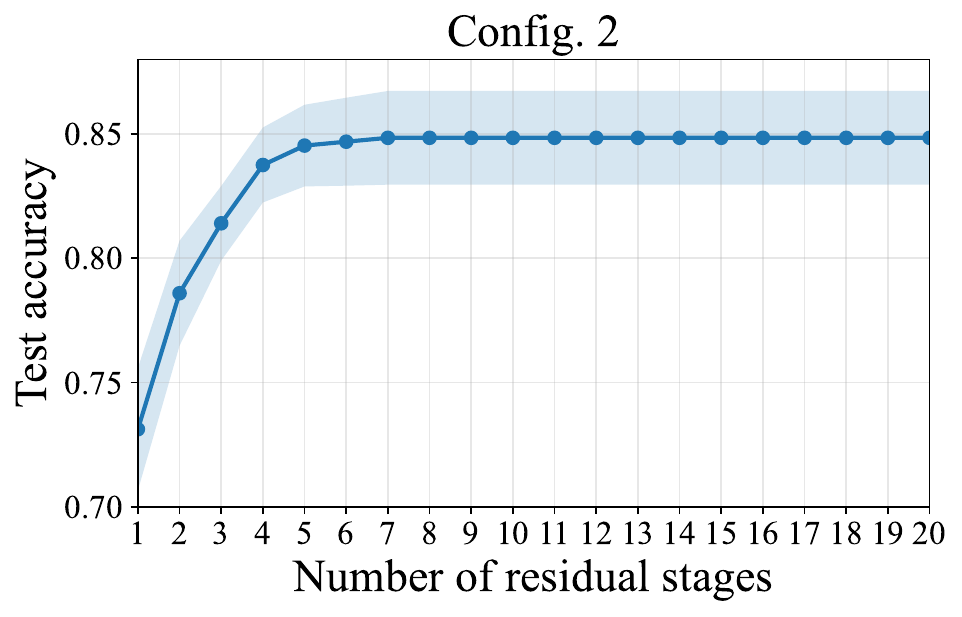}
        \caption{Config.~2.}
        \label{fig:alg3-accuracy-over-stages-config2}
    \end{subfigure}
    \hfill
    \begin{subfigure}[t]{0.48\textwidth}
        \centering
        \includegraphics[width=\textwidth]
        {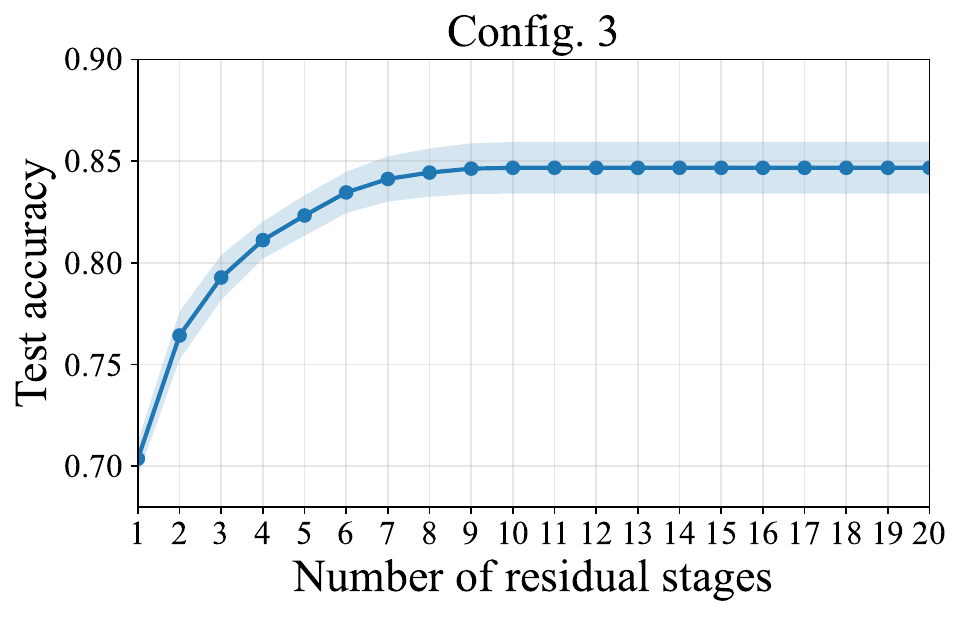}
        \caption{Config.~3.}
        \label{fig:alg3-accuracy-over-stages-config3}
    \end{subfigure}
    \caption{Test accuracy across residual stages for Configs.~2--3.
    Curves show the mean over \(20\) seeds and shaded regions show \(95\%\)
    confidence intervals for the mean.}
    \label{fig:alg3-accuracy-over-stages-2-3}
\end{figure}

\begin{figure}[t]
    \centering
    \begin{subfigure}[t]{0.48\textwidth}
        \centering
        \includegraphics[width=\textwidth]
        {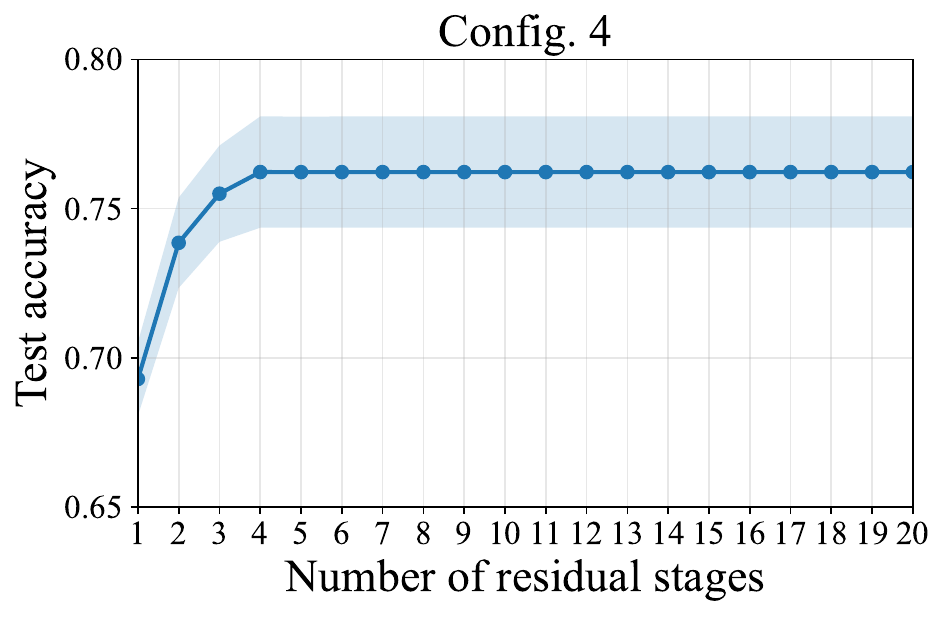}
        \caption{Config.~4.}
        \label{fig:alg3-accuracy-over-stages-config4}
    \end{subfigure}
    \hfill
    \begin{subfigure}[t]{0.48\textwidth}
        \centering
        \includegraphics[width=\textwidth]
        {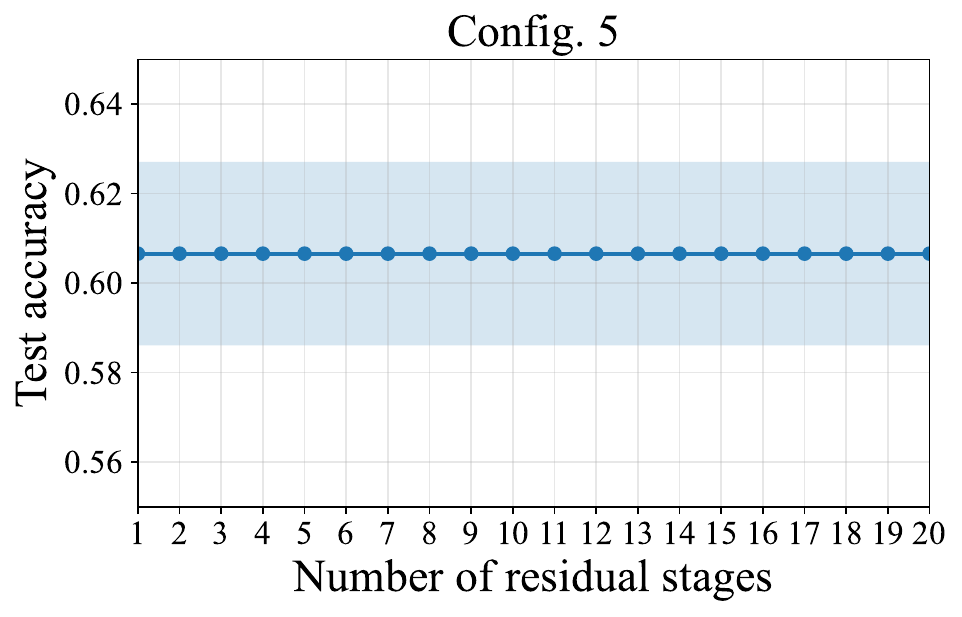}
        \caption{Config.~5.}
        \label{fig:alg3-accuracy-over-stages-config5}
    \end{subfigure}
    \caption{Test accuracy across residual stages for Configs.~4--5.
    Curves show the mean over \(20\) seeds and shaded regions show \(95\%\)
    confidence intervals for the mean.}
    \label{fig:alg3-accuracy-over-stages-4-5}
\end{figure}

\begin{figure}[t]
    \centering
    \begin{subfigure}[t]{0.48\textwidth}
        \centering
        \includegraphics[width=\textwidth]
        {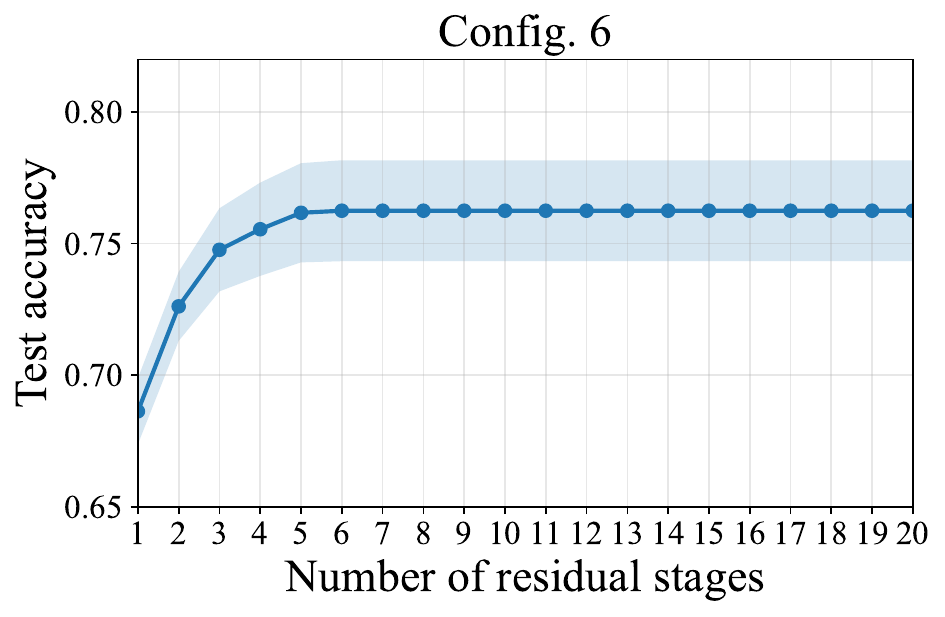}
        \caption{Config.~6.}
        \label{fig:alg3-accuracy-over-stages-config6}
    \end{subfigure}
    \hfill
    \begin{subfigure}[t]{0.48\textwidth}
        \centering
        \includegraphics[width=\textwidth]
        {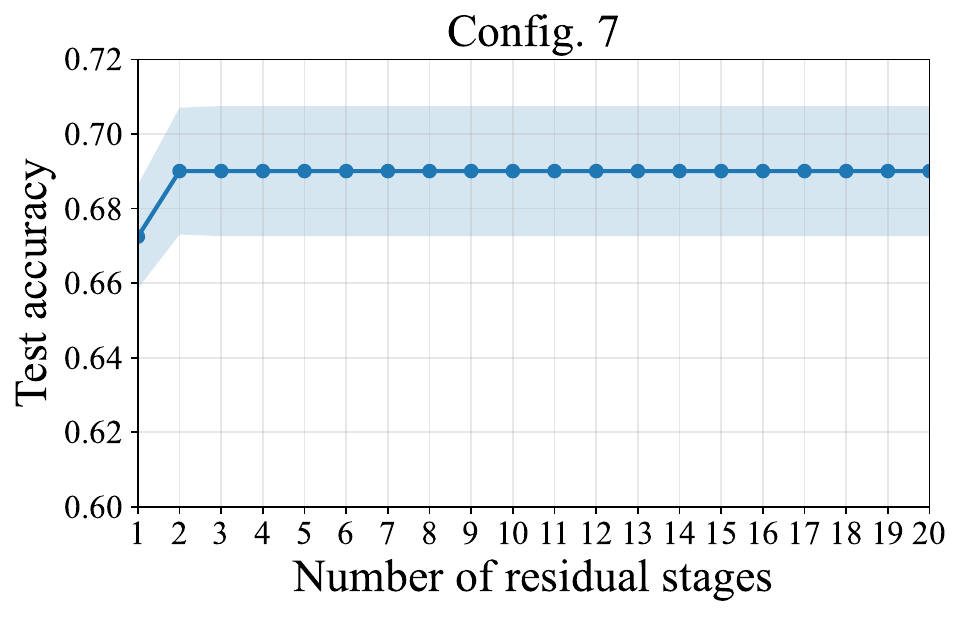}
        \caption{Config.~7.}
        \label{fig:alg3-accuracy-over-stages-config7}
    \end{subfigure}
    \caption{Test accuracy across residual stages for Configs.~6--7.
    Curves show the mean over \(20\) seeds and shaded regions show \(95\%\)
    confidence intervals for the mean.}
    \label{fig:alg3-accuracy-over-stages-6-7}
\end{figure}

\begin{figure}[t]
    \centering
    \begin{subfigure}[t]{0.48\textwidth}
        \centering
        \includegraphics[width=\textwidth]
        {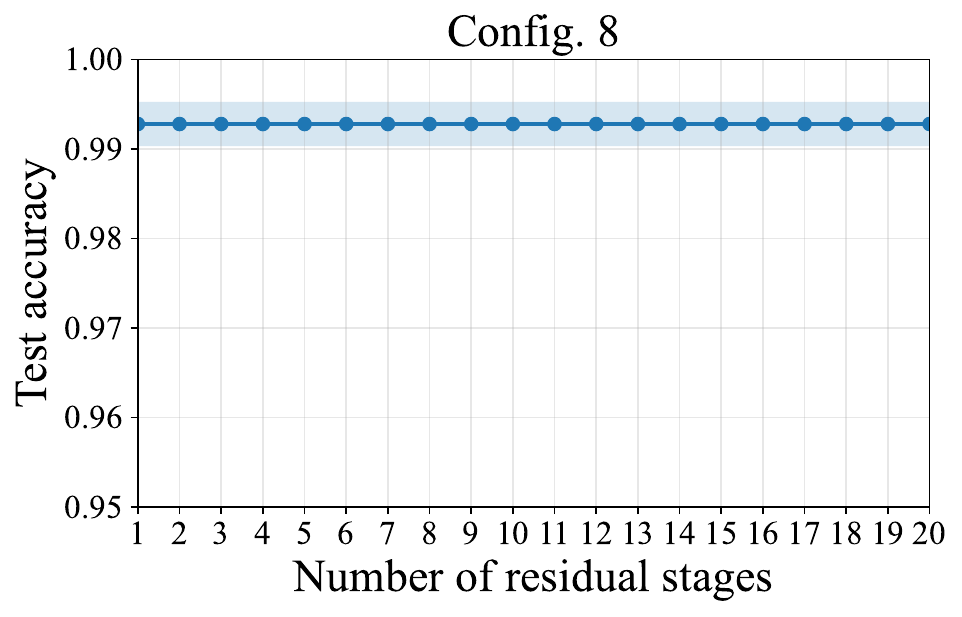}
        \caption{Config.~8.}
        \label{fig:alg3-accuracy-over-stages-config8}
    \end{subfigure}
    \hfill
    \begin{subfigure}[t]{0.48\textwidth}
        \centering
        \includegraphics[width=\textwidth]
        {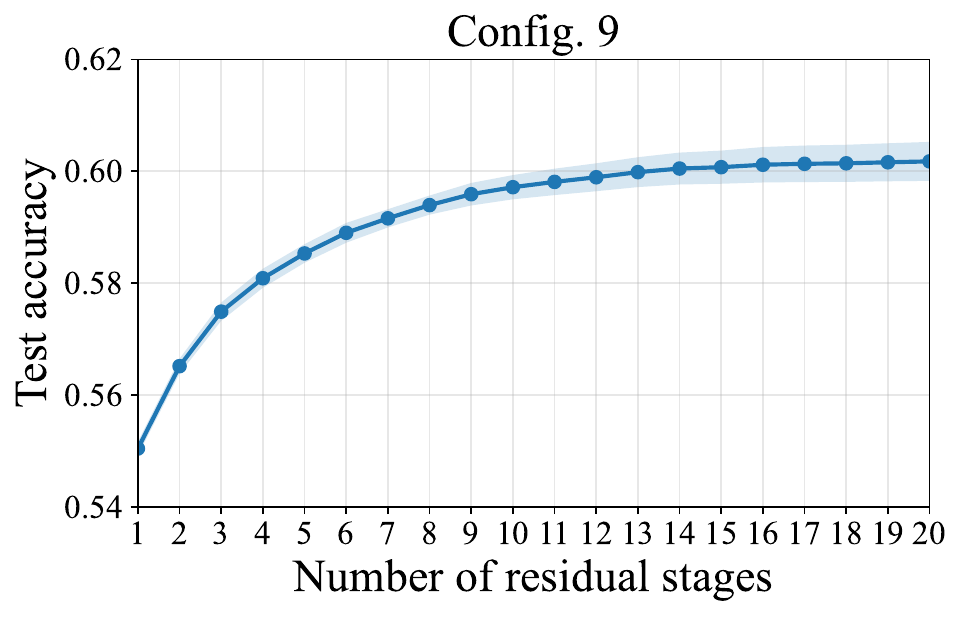}
        \caption{Config.~9.}
        \label{fig:alg3-accuracy-over-stages-config9}
    \end{subfigure}
    \caption{Test accuracy across residual stages for Configs.~8--9.
    Curves show the mean over \(20\) seeds and shaded regions show \(95\%\)
    confidence intervals for the mean.}
    \label{fig:alg3-accuracy-over-stages-8-9}
\end{figure}

\begin{figure}[t]
    \centering
    \begin{subfigure}[t]{0.48\textwidth}
        \centering
        \includegraphics[width=\textwidth]
        {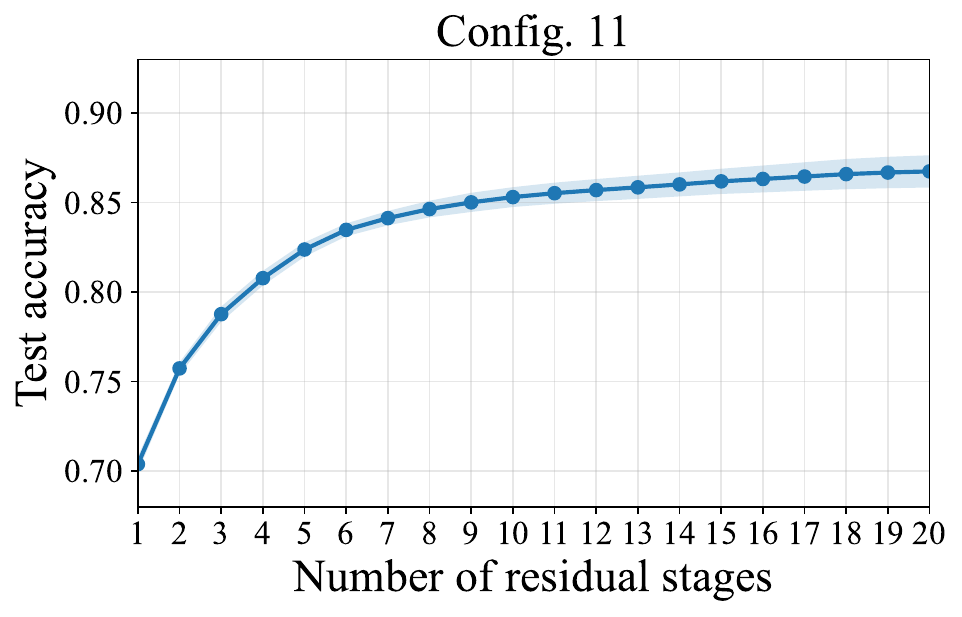}
        \caption{Config.~11.}
        \label{fig:alg3-accuracy-over-stages-config11}
    \end{subfigure}
    \caption{Test accuracy across residual stages for Config.~11.
    The curve shows the mean over \(20\) seeds and the shaded region shows the
    \(95\%\) confidence interval for the mean.}
    \label{fig:alg3-accuracy-over-stages-11}
\end{figure}


\subsection{Influence Selection versus Random Selection:
Additional Configurations}
\label{app:additional-influence-results}

Table~\ref{tab:influence-random-results} in the main text reports three
representative configurations from the influence-selection ablation.
Table~\ref{tab:additional-influence-random-results} provides the corresponding
results for the remaining configurations. The same qualitative accuracy trend
persists: influence-based selection achieves higher mean test accuracy in every
configuration, whereas the relative runtime varies across settings.

\begin{table}[t]
\centering
\caption{Influence-selection ablation for the remaining configurations at
the stage-\(20\) reporting point. Results are mean \(\pm\) standard
deviation over \(20\) seeds. Boldface denotes the better value within each
configuration and metric.}
\label{tab:additional-influence-random-results}
\small
\begin{tabular}{clcc}
\hline
Config. & Selection rule & Test accuracy & Runtime (s)\\
\hline
Config.~1
& Influence top-\(K\)
& \(\mathbf{0.842\pm0.032}\)
& \(0.42\pm0.04\)\\
& Random \(K\)
& \(0.606\pm0.039\)
& \(\mathbf{0.41\pm0.02}\)\\
\hline
Config.~2
& Influence top-\(K\)
& \(\mathbf{0.848\pm0.043}\)
& \(0.34\pm0.02\)\\
& Random \(K\)
& \(0.677\pm0.126\)
& \(\mathbf{0.30\pm0.02}\)\\
\hline
Config.~4
& Influence top-\(K\)
& \(\mathbf{0.763\pm0.049}\)
& \(1.79\pm0.13\)\\
& Random \(K\)
& \(0.536\pm0.032\)
& \(\mathbf{1.68\pm0.11}\)\\
\hline
Config.~5
& Influence top-\(K\)
& \(\mathbf{0.607\pm0.051}\)
& \(\mathbf{1.07\pm0.19}\)\\
& Random \(K\)
& \(0.528\pm0.030\)
& \(1.63\pm0.09\)\\
\hline
Config.~7
& Influence top-\(K\)
& \(\mathbf{0.691\pm0.035}\)
& \(\mathbf{0.22\pm0.08}\)\\
& Random \(K\)
& \(0.521\pm0.018\)
& \(0.26\pm0.01\)\\
\hline
Config.~8
& Influence top-\(K\)
& \(\mathbf{0.993\pm0.005}\)
& \(\mathbf{0.24\pm0.07}\)\\
& Random \(K\)
& \(0.538\pm0.016\)
& \(0.85\pm0.05\)\\
\hline
Config.~9
& Influence top-\(K\)
& \(\mathbf{0.606\pm0.006}\)
& \(25.93\pm1.10\)\\
& Random \(K\)
& \(0.509\pm0.004\)
& \(\mathbf{21.00\pm1.01}\)\\
\hline
Config.~11
& Influence top-\(K\)
& \(\mathbf{0.904\pm0.011}\)
& \(1570.23\pm141.92\)\\
& Random \(K\)
& \(0.632\pm0.013\)
& \(\mathbf{957.27\pm76.14}\)\\
\hline
\end{tabular}
\end{table}


\section{Auxiliary Results}
\label{app:auxiliary-results}

This appendix collects extensions and refinements of the statistical results in Section~\ref{s:Main__sec:statistical-guarantees}. These results are not required
for the proofs of the main theorems. We first record a refined counting bound
for juntas whose relevant coordinates have nonnegligible influence. We then
extend the influence-recovery and prediction guarantees from the uniform
Boolean cube to a general input distribution.

\subsection{Refined Counting for Influential Juntas}
\label{app:influential-junta-counting}

The cardinality bound in Lemma~\ref{lem:junta-cardinality} treats all \(S\)-juntas equally. A more refined count is possible when every relevant coordinate is required to have influence bounded away from zero.  For \(\rho\in(0,1]\), define
\[
\mathcal J^{\mathrm{inf}}_{B,S,\rho}
 \eqdef  
\left\{
h:\{0,1\}^{B}\to\{0,1\}:
|\operatorname{Rel}(h)|\leq S,\,
\operatorname{Inf}_i(h)\geq\rho
\ \text{for every } i\in\operatorname{Rel}(h)
\right\}.
\]
\begin{proposition}[Counting influential juntas]
\label{prop:influential-junta-count}
For each \(s\in[S]\), let $
n_s \eqdef  2^{s-1}$ and $q_{\rho,s} \eqdef  \lceil \rho n_s\rceil$; then
\begin{align}
|\mathcal J^{\mathrm{inf}}_{B,S,\rho}|
&\leq
2+
\sum_{s=1}^{S}
\binom{B}{s}
\left[
2^{2^s}
-
2^{n_s}
\sum_{k=0}^{q_{\rho,s}-1}\binom{n_s}{k}
\right],
\label{eq:influential-junta-upper}\\
|\mathcal J^{\mathrm{inf}}_{B,S,\rho}|
&\geq
2+
\sum_{s=1}^{S}
\binom{B}{s}
\left[
2^{2^s}
-
s\,2^{n_s}
\sum_{k=0}^{q_{\rho,s}-1}\binom{n_s}{k}
\right]_{+},
\label{eq:influential-junta-lower}
\end{align}
where \([a]_{+} \eqdef  \max\{a,0\}\).
\end{proposition}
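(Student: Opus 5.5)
Partition $\mathcal J^{\mathrm{inf}}_{B,S,\rho}$ according to the exact relevant set $R=\operatorname{Rel}(h)$. The case $|R|=0$ gives exactly the two constant functions, which accounts for the additive $2$. For $|R|=s\geq1$, a function with relevant set exactly $R$ corresponds bijectively to some $g:\{0,1\}^s\to\{0,1\}$ that depends on all $s$ of its coordinates. Influences are preserved under this correspondence, because under the uniform measure the coordinates outside $R$ are independent and irrelevant. So for a fixed $R$ we count functions $g$ on $\{0,1\}^s$ with $\operatorname{Inf}_i(g)\geq\rho$ for every $i\in[s]$. A coordinate with positive influence is automatically relevant, so this count equals $N_s\eqdef|\{g:\operatorname{Inf}_i(g)\geq\rho\ \forall i\in[s]\}|$. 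Summing over the $\binom{B}{s}$ choices of $R$ gives $|\mathcal J^{\mathrm{inf}}_{B,S,\rho}|=2+\sum_{s=1}^S\binom{B}{s}N_s$.

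The key computation is the number of $g$ violating the condition at a fixed coordinate $i$. The cube $\{0,1\}^s$ splits into $n_s=2^{s-1}$ edges in direction $i$, and $\operatorname{Inf}_i(g)=k/n_s$, where $k$ is the number of such edges along which $g$ changes value. To specify $g$, choose the set of disagreeing edges, then the $n_s$ values of $g$ on the endpoints with $x_i=0$; the values on the other endpoints are then determined. Hence exactly $\binom{n_s}{k}2^{n_s}$ functions have influence $k/n_s$ at coordinate $i$. Since $k$ is an integer, $\operatorname{Inf}_i(g)<\rho$ iff $k<\rho n_s$ iff $k\leq q_{\rho,s}-1$. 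So the bad set $A_i$ has cardinality $a_s\eqdef 2^{n_s}\sum_{k=0}^{q_{\rho,s}-1}\binom{n_s}{k}$, which is the same for every $i$.

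Then $N_s=2^{2^s}-|\bigcup_{i=1}^s A_i|$. For the upper bound, use $|\bigcup_i A_i|\geq|A_1|=a_s$, which gives $N_s\leq 2^{2^s}-a_s$ and hence \eqref{eq:influential-junta-upper}. For the lower bound, use the union bound $|\bigcup_i A_i|\leq s\,a_s$ together with $N_s\geq0$, which gives $N_s\geq[2^{2^s}-s a_s]_+$ and hence \eqref{eq:influential-junta-lower}.

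The only delicate point is the reduction step. Exact relevant sets partition the class, so there is no double counting across different $R$. Within a fixed $R$, the correspondence between $h$ and $g$ must preserve the uniform influences. If $\rho$ is so small that $q_{\rho,s}=1$, the bad set is exactly the set of functions with $\operatorname{Inf}_i=0$, which is consistent with requiring every coordinate to be relevant. I expect everything else to be routine.
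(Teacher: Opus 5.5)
Your proof is correct and follows essentially the same route as the paper. Both decompose by the exact relevant set, use the count $2^{n_s}\binom{n_s}{k}$ of functions on $\{0,1\}^s$ with exactly $k$ bichromatic edges in a fixed direction, and then exclude a single low-influence event for the upper bound and apply a union bound over the $s$ coordinates for the lower bound. You also spell out two steps the paper leaves implicit: that influences are preserved under the lift $g\mapsto g\circ\operatorname{proj}_R$, and that $k<\rho n_s$ is equivalent to $k\le q_{\rho,s}-1$.
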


\begin{proof}
We decompose the class according to the number of relevant coordinates.
There are exactly two functions with no relevant coordinates, namely the two
constant Boolean functions. Now fix \(s\in[S]\) and a relevant-coordinate set
\(J\subseteq[B]\) with \(|J|=s\). Functions with relevant-coordinate set $J$
may be identified with Boolean functions on $\{0,1\}^{s}$ that have
influence at least $\rho$ in every coordinate. Since $\rho>0$, this
condition itself guarantees that every coordinate in $J$ is relevant.

Fix $j\in[s]$. The $s$-dimensional Boolean cube contains
$n_s=2^{s-1}$ disjoint $j$-edges, where a $j$-edge is a pair
$\{x,x^{\oplus j}\}$. We call such an edge \emph{bichromatic} if
$h(x)\neq h(x^{\oplus j})$. If exactly $k$ of the $j$-edges are
bichromatic, then
$\operatorname{Inf}_j(h)=\frac{k}{n_s}$.

There are $\binom{n_s}{k}$ ways to choose the bichromatic edges. Once
these edges are selected, one endpoint value may be chosen freely on each
edge, after which the other endpoint is determined by whether that edge is
monochromatic or bichromatic. Hence, the number of functions having exactly
$k$ bichromatic $j$-edges is
$2^{n_s}\binom{n_s}{k}$.

It follows that the number satisfying
$\operatorname{Inf}_j(h)<\rho$ is
\[
2^{n_s}
\sum_{k=0}^{q_{\rho,s}-1}\binom{n_s}{k}.
\]

Requiring all $s$ coordinate influences to be at least $\rho$ excludes
the union of these low-influence events. Excluding the event for a single
coordinate gives the upper bound
\[
2^{2^s}
-
2^{n_s}
\sum_{k=0}^{q_{\rho,s}-1}\binom{n_s}{k},
\]
while a union bound over the $s$ coordinates gives the lower bound
\[
\left[
2^{2^s}
-
s\,2^{n_s}
\sum_{k=0}^{q_{\rho,s}-1}\binom{n_s}{k}
\right]_{+}.
\]
There are $\binom{B}{s}$ possible supports of size $s$. Summing over
$s=1,\ldots,S$ and adding the two constant functions proves the result.
\end{proof}

\subsection{Extension to General Input Distributions}
\label{app:general-distributions}

The main text measures both influence and prediction error under the uniform
distribution on the Boolean cube. We now show that the same analysis extends
to an arbitrary probability distribution
\(\mu\in\mathcal P(\{0,1\}^{B})\). 
Recall from Definition~\ref{def:influence} that
$
\operatorname{Inf}_i^\mu(h)
 \eqdef  
\Pr_{X\sim\mu}\!\left[h(X)\neq h(X^{\oplus i})\right]
$.  
In the paired-sampling model, draw independently
\(X_n\sim\mu\) and \(I_n\sim\operatorname{Unif}([B])\), and let
\(M_i \eqdef  \sum_{n=1}^{N}\mathbbm{1}\{I_n=i\}\). If \(M_i>0\), define
\[
\widehat{\operatorname{Inf}}_{i}^{\mu,\mathrm{pair}}(h)
 \eqdef  
\frac{1}{M_i}
\sum_{n:I_n=i}
\mathbbm{1}\!\left\{
h(X_n)\neq h(X_n^{\oplus i})
\right\}.
\]
If \(M_i=0\), set
$\widehat{\operatorname{Inf}}_{i}^{\mu,\mathrm{pair}}(h)\eqdef0$.

\begin{corollary}[Simultaneous recovery of \(\mu\)-influences]
\label{cor:mu-influence-recovery}
Let \(\mathcal H\) denote a finite Boolean function class, and let
\(\varepsilon,\delta\in(0,1)\). There exists a
universal constant \(C>0\) such that
\[
N
\geq
C
\frac{B}{\varepsilon^2}
\left(
\log|\mathcal H|
+
\log\frac{2B}{\delta}
\right)
\]
implies, with probability at least \(1-\delta\),
\[
\max_{i\in[B]}
\sup_{h\in\mathcal H}
\left|
\widehat{\operatorname{Inf}}_{i}^{\mu,\mathrm{pair}}(h)
-
\operatorname{Inf}_i^\mu(h)
\right|
\leq
\varepsilon.
\]
\end{corollary}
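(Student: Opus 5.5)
The plan is to rerun the proof of Theorem~\ref{thm:simultaneous-influence-recovery} with \(\nu\) replaced by \(\mu\), checking that nothing in that argument used uniformity. Uniformity was used only in Lemma~\ref{lem:zero-influence-preservation}, which plays no role here.

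For \(h\in\mathcal H\) and \(i\in[B]\), set \(\phi_{h,i}(x)\eqdef\mathbbm{1}\{h(x)\neq h(x^{\oplus i})\}\), so that \(\operatorname{Inf}_i^\mu(h)=\mathbb E_{X\sim\mu}[\phi_{h,i}(X)]\). The inputs \(X_1,\ldots,X_N\) are i.i.d.\ from \(\mu\) and independent of \(I_1,\ldots,I_N\). Hence, conditional on \(M_i=q\), the inputs \(\{X_n:I_n=i\}\) are still \(q\) i.i.d.\ draws from \(\mu\). The estimator \(\widehat{\operatorname{Inf}}_i^{\mu,\mathrm{pair}}(h)\) is therefore the empirical mean of \(q\) independent \(\{0,1\}\)-valued variables with mean \(\operatorname{Inf}_i^\mu(h)\). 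Hoeffding's inequality and a union bound over \(\mathcal H\) give the conditional failure bound \(2|\mathcal H|e^{-2q\varepsilon^2}\), exactly as before.

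Next, I take the same \(q_{\mathrm{req}}=\lceil(2\varepsilon^2)^{-1}\log(4B|\mathcal H|/\delta)\rceil\), which makes the per-coordinate conditional failure probability at most \(\delta/(2B)\). Lemma~\ref{lem:uniform-occupancy} concerns only the coordinate labels \(I_n\), which are still uniform on \([B]\) and do not depend on \(\mu\). Applied with \(q=q_{\mathrm{req}}\) and failure probability \(\delta/2\), it shows that every \(M_i\geq q_{\mathrm{req}}\) with probability at least \(1-\delta/2\). The same constant absorption as before shows that the stated condition on \(N\) suffices for this.

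On the occupancy event, which is measurable with respect to the \(I_n\) alone and hence independent of the \(X_n\), a union bound over the \(B\) coordinates gives conditional failure at most \(\delta/2\). Adding the two failure probabilities yields the claim with probability at least \(1-\delta\). On the occupancy event every \(M_i\geq1\), so the estimators are well defined there.

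There is no real obstacle; this is a verbatim transfer of the earlier proof. The one point to check carefully is the conditioning step: the argument needs the selected subsample to remain i.i.d.\ from \(\mu\) given the \(I_n\). This holds because the \(X_n\) and the \(I_n\) are drawn independently, and it is the only place the sampling distribution enters.
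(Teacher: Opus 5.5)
Your proposal is correct and takes the same approach as the paper. The paper also argues that, conditional on the coordinate labels, the selected inputs are i.i.d.\ draws from \(\mu\) with mean \(\operatorname{Inf}_i^\mu(h)\), so the proof of Theorem~\ref{thm:simultaneous-influence-recovery} transfers unchanged: occupancy depends only on the \(I_n\), and Hoeffding's inequality and the finite-class union bound are distribution-free. You make explicit the steps (the choice of \(q_{\mathrm{req}}\), the occupancy lemma, and the two \(\delta/2\) failure budgets) that the paper leaves implicit.
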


\begin{proof}
Conditional on \(I_n=i\), the variables \(X_n\) are independent draws from
\(\mu\), and
\[
\mathbb E_{X\sim\mu}
\left[
\mathbbm{1}
\{h(X)\neq h(X^{\oplus i})\}
\right]
=
\operatorname{Inf}_i^\mu(h).
\]
The proof of
Theorem~\ref{thm:simultaneous-influence-recovery} therefore applies
unchanged. The occupancy argument concerns only the coordinate indices
\(I_n\), while Hoeffding's inequality and the finite-class union bound are
distribution-free.
\end{proof}

Combining Corollary~\ref{cor:mu-influence-recovery} with Lemma~\ref{lem:junta-cardinality} gives the corresponding
\(S\)-junta bound:
\[
N
\geq
C
\frac{B}{\varepsilon^2}
\left(
2^S
+
S\log\frac{eB}{S}
+
\log\frac{2B}{\delta}
\right)
\]
is sufficient for simultaneous recovery of all \(\mu\)-influences uniformly
over \(\mathcal J_{B,S}\).
The remaining statistical arguments extend in the same manner. Define
\[
C_{t,\mu}^\star
 \eqdef  
\left\{
i\in[B]:
\operatorname{Inf}_i^\mu(r_t)>\tau
\right\},
\qquad
s_{t,\mu}^\star
 \eqdef  
\min\{K,|C_{t,\mu}^\star|\},
\]
and let
\(J_{t,\mu}^\star\subseteq C_{t,\mu}^\star\)
contain the \(s_{t,\mu}^\star\) coordinates with the largest
\(\mu\)-influences.
Assume the \(\mu\)-analogue of
Assumption~\ref{ass:stagewise-influence-accuracy}. For coordinate recovery,
assume additionally that, for every completed stage \(t\),
\[
\min_{i\in J_{t,\mu}^\star}
\operatorname{Inf}_i^\mu(r_t)
>
\tau+\varepsilon_{\mathrm{inf}}.
\]
If \(|C_{t,\mu}^\star|\leq K\), require, for every
\(j\notin C_{t,\mu}^\star\),
\[
\operatorname{Inf}_j^\mu(r_t)=0
\qquad\text{or}\qquad
\operatorname{Inf}_j^\mu(r_t)
<
\tau-\varepsilon_{\mathrm{inf}}.
\]
If \(|C_{t,\mu}^\star|>K\), require
\[
\min_{i\in J_{t,\mu}^\star}
\operatorname{Inf}_i^\mu(r_t)
-
\max_{j\in C_{t,\mu}^\star\setminus J_{t,\mu}^\star}
\operatorname{Inf}_j^\mu(r_t)
>
2\varepsilon_{\mathrm{inf}}.
\]

The paired \(\mu\)-influence estimator also preserves zero influence:
if
\(\operatorname{Inf}_i^\mu(h)=0\), then
$\widehat{\operatorname{Inf}}_i^{\mu,\mathrm{pair}}(h)=0$
almost surely. Indeed, each disagreement
indicator is nonnegative and has expectation
\(\operatorname{Inf}_i^\mu(h)=0\), and hence equals zero almost surely.
Hence, on a probability-one event, the proof of
Lemma~\ref{lem:coordinate-selection-recovery} carries over after replacing
uniform influences by \(\mu\)-influences, without changing the stated
failure probabilities.

For each stage define
\[
\alpha_{t,\mu}
\eqdef
\inf_{g:\{0,1\}^{|J_{t,\mu}^\star|}\to\{0,1\}}
\Pr_{X\sim\mu}
\left[
g(\operatorname{proj}_{J_{t,\mu}^\star}(X))
\neq r_t(X)
\right].
\]
For any \(\delta_{\mathrm{gen}}\in(0,1)\), the proof of
Theorem~\ref{thm:recovery-to-accuracy} then yields, for \(T\geq m\),

\[
\Pr_{X\sim\mu}
[H_t(X)\neq f(X)]
\leq
\alpha_{t,\mu}
+
C
\sqrt{
\frac{
m\left(
2^K
+
K\log\frac{eB}{K}
+
\log\frac{m}{\delta_{\mathrm{gen}}}
\right)
}{T}
}
\]
simultaneously over all completed stages, with probability at least
\(1-\delta_{\mathrm{inf}}-\delta_{\mathrm{gen}}\).
Similarly, if \(r_t\) depends only on the coordinates in
\(J_{t,\mu}^\star\) and
$|J_{t,\mu}^\star|\leq S\leq K$,
then the argument of
Theorem~\ref{thm:junta-residual-accuracy} gives
\[
\Pr_{X\sim\mu}
[H_t(X)\neq f(X)]
\leq
C
\sqrt{
\frac{
m\left(
2^S
+
S\log\frac{eB}{S}
+
\log\frac{m}{\delta_{\mathrm{gen}}}
\right)
}{T}
}.
\]

\begin{remark}[Off-support perturbations]
No closure assumption on \(\operatorname{supp}(\mu)\) under bit flips is
required because every Boolean function considered here is defined on the
full cube. If \(x^{\oplus i}\notin\operatorname{supp}(\mu)\), however,
\(\operatorname{Inf}_i^\mu(h)\) measures sensitivity to a perturbation that
leaves the support of the observed input distribution. This should be kept in
mind when interpreting distribution-dependent influences.
\end{remark}


The distribution-dependent and uniform influences can be compared directly
when the input measures are related by a bounded density ratio.

\begin{proposition}[Change-of-measure comparison]
\label{prop:influence-change-of-measure}
Let
\(\nu=\operatorname{Unif}(\{0,1\}^{B})\) and let
\(\mu\) be any probability measure on \(\{0,1\}^{B}\). Define
$
\xi(x) \eqdef  \frac{\mu(\{x\})}{\nu(\{x\})}
=2^B\mu(\{x\})
$; then, for every Boolean function \(h\) and every \(i\in[B]\),
\[
\left(\min_x\xi(x)\right)\operatorname{Inf}_i(h)
\leq
\operatorname{Inf}_i^\mu(h)
\leq
\left(\max_x\xi(x)\right)\operatorname{Inf}_i(h).
\]
\end{proposition}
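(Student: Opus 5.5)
The plan is to write both influences as weighted sums over the Boolean cube and compare the weights pointwise. For fixed \(h\) and \(i\), set \(\phi_{h,i}(x)\eqdef\mathbbm{1}\{h(x)\neq h(x^{\oplus i})\}\in\{0,1\}\), as in the proof of Theorem~\ref{thm:simultaneous-influence-recovery}. Since \(\{0,1\}^B\) is finite, Definition~\ref{def:influence} gives
\[
\operatorname{Inf}_i^\mu(h)=\sum_{x\in\{0,1\}^B}\mu(\{x\})\,\phi_{h,i}(x),
\qquad
\operatorname{Inf}_i(h)=\sum_{x\in\{0,1\}^B}2^{-B}\phi_{h,i}(x).
\]

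The key step is the substitution \(\mu(\{x\})=\xi(x)\nu(\{x\})=\xi(x)2^{-B}\). This holds for every \(x\) because \(\nu\) has full support, so \(\xi\) is well defined everywhere with no absolute-continuity issue. We then obtain \(\operatorname{Inf}_i^\mu(h)=\sum_x \xi(x)\,2^{-B}\phi_{h,i}(x)\).

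Each summand \(2^{-B}\phi_{h,i}(x)\) is nonnegative. Hence replacing \(\xi(x)\) by \(\min_y\xi(y)\) can only decrease the sum, and replacing it by \(\max_y\xi(y)\) can only increase it. Pulling these constants out of the sum leaves exactly \(\operatorname{Inf}_i(h)\), which yields both inequalities at once.

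There is no real obstacle. The only points to check are that the minimum and maximum exist, which holds because the cube is finite, and that nonnegativity of \(\phi_{h,i}\) justifies the monotone bounding. The lower bound is trivial when \(\mu\) lacks full support, since then \(\min_x\xi(x)=0\).
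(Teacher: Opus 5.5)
Your proof is correct and is essentially the paper's argument. The paper writes \(\operatorname{Inf}_i^\mu(h)=\mathbb{E}_{X\sim\nu}[\xi(X)\phi_i(X)]\) and bounds \(\xi\) above by its maximum (and below by its minimum) using \(\phi_i\geq 0\), which is your weighted-sum comparison written as an expectation.
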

\begin{proof}
Set $\phi_i(x)
 \eqdef  
\mathbbm{1}
\left\{
h(x)\neq h(x^{\oplus i})
\right\}$.  
Since \(\phi_i\geq0\),
\begin{align*}
\operatorname{Inf}_i^\mu(h)
=
\mathbb E_{X\sim\nu}
[\xi(X)\phi_i(X)]\leq
\left(\max_x\xi(x)\right)
\mathbb E_{X\sim\nu}[\phi_i(X)]=
\left(\max_x\xi(x)\right)
\operatorname{Inf}_i(h).
\end{align*}
The lower bound follows analogously using
\(\min_x\xi(x)\).
\end{proof}

If
\(\mu(\{x\})=w_x\), then
$\xi(x)=2^B w_x$.
Hence, Proposition~\ref{prop:influence-change-of-measure} becomes
\[
\left(
2^B\min_x w_x
\right)
\operatorname{Inf}_i(h)
\leq
\operatorname{Inf}_i^\mu(h)
\leq
\left(
2^B\max_x w_x
\right)
\operatorname{Inf}_i(h).
\]

\subsection{Auxiliary Results for the Certifiably Interpretable Neural Realization}
\label{app:aux-neural-realization}

This appendix records neural-representation facts that are not needed for the
main circuit-to-network compilation. They clarify limitations of very small
ReLU modules and give alternative exact realizations when a different
depth--width tradeoff is desired.

\subsubsection{Additional Gate Realizations and Impossibility Results}
\label{app:aux-gate-realizations}

The AND module used in Algorithm~\ref{alg:relu-realization} follows the same
width-two template as OR, which simplifies the layerwise construction.
Conjunction itself can be represented more narrowly.

\begin{lemma}[Width-one realization of multi-input AND]
\label{lem:and-width-one-appendix}
For every \(r\geq1\) and \(a\in\{0,1\}^r\),
$
\bigwedge_{j=1}^{r}a_j
=
\sigma\left(\sum_{j=1}^{r}a_j-r+1\right)$.
\end{lemma}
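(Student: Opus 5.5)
The plan is to argue exactly as in the AND case of Proposition~\ref{prop:relu-gate-modules}, reducing everything to the integer $s\eqdef\sum_{j=1}^{r}a_j\in\{0,1,\ldots,r\}$. Because each $a_j$ is Boolean, the conjunction $\bigwedge_{j=1}^{r}a_j=\min_j a_j$ equals $1$ if and only if every coordinate is $1$, that is, if and only if $s=r$. So it suffices to show that $\sigma(s-r+1)=\mathbbm{1}\{s=r\}$ for every integer $s\in\{0,\ldots,r\}$.

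I would split into two cases.
\begin{itemize}
\item If $s=r$, the preactivation is $s-r+1=1$, and $\sigma(1)=1$.
\item If $s\le r-1$, the preactivation satisfies $s-r+1\le 0$, so $\sigma(s-r+1)=0$.
\end{itemize}
Together these give $\sigma(s-r+1)=\mathbbm{1}\{s=r\}=\bigwedge_{j=1}^{r}a_j$. The $r=1$ case, $\sigma(a_1)=a_1$, is included automatically.

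I would also remark why this is consistent with the width-two module. In that module, the subtracted term $\sigma(s-r)$ vanishes identically on $\{0,\ldots,r\}$, since $s-r\le 0$ there. Dropping it therefore changes nothing on Boolean inputs.

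There is no real obstacle here. The only point needing care is integrality: the argument relies on the sum $s$ taking only integer values, so that the affine threshold $s-r+1$ separates $s=r$ from $s\le r-1$ exactly at the ReLU kink. I would state explicitly that the identity is claimed only on $\{0,1\}^r$, consistent with the paper's convention of assigning no meaning off the cube.
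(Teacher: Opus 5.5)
Your proof is correct and is essentially the paper's argument. Both reduce to the integer sum $s=\sum_{j} a_j$ and split into the case $s=r$, where the preactivation is $1$, and the case $s\le r-1$, where the preactivation is nonpositive and the output is $0$.
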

\begin{proof}
If all inputs equal one, the ReLU argument equals one. Otherwise
\(\sum_{j=1}^{r}a_j\leq r-1\); thus, the argument is non-positive and the
output is zero.
\end{proof}

Although multi-input AND admits a width-one realization, the width-two OR
module used in Proposition~\ref{prop:relu-gate-modules} cannot, in general, be
reduced to a single affine--ReLU unit.

\begin{lemma}[A single ReLU unit cannot realize multi-input OR]
\label{lem:or-single-relu-impossibility}
Let \(r\geq2\). There are no \(w\in\mathbb{R}^r\) and \(b\in\mathbb{R}\)
such that
$
\sigma(w^\top a+b)=\bigvee_{j=1}^{r}a_j
$ for every $a\in\{0,1\}^r$.
\end{lemma}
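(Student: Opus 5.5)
We argue by contradiction. Suppose some \(w\in\mathbb{R}^r\) and \(b\in\mathbb{R}\) satisfy \(\sigma(w^\top a+b)=\bigvee_{j=1}^{r}a_j\) on \(\{0,1\}^r\). Since \(r\geq2\), the evaluation points \(0\), \(e_1\), \(e_2\), and \(e_1+e_2\) all lie in the cube, and only these four are needed.

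First, at \(a=0\) the output must be \(0\), so \(\sigma(b)=0\), which forces \(b\leq0\). At \(a=e_1\) the output must be \(1\). The ReLU equals \(1\) only when its argument equals \(1\), so \(w_1+b=1\). The same reasoning at \(a=e_2\) gives \(w_2+b=1\). At \(a=e_1+e_2\) the OR is again \(1\), so \(w_1+w_2+b=1\).

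Next, substitute \(w_1=w_2=1-b\) into the last equation. This gives \(2(1-b)+b=1\), so \(b=1\). That contradicts \(b\leq0\), and the lemma follows.

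The only point requiring care is the step "\(\sigma(u)=1\) implies \(u=1\)." It holds because \(\sigma(u)=\max\{u,0\}\) is either \(0\) or \(u\) itself. With that, the argument is purely combinatorial and has no real obstacle. One could alternatively phrase it via convexity: \(\sigma(w^\top a+b)\) is convex along the segment from \(e_1\) to \(e_2\), but that framing is unnecessary given the direct linear system.
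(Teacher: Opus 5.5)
Your proof is correct and is essentially the paper's argument: both evaluate at \(0\), \(e_1\), \(e_2\), \(e_1+e_2\) to get \(b\le 0\) and \(w_1=w_2=1-b\). The only difference is the final step, where you solve \(w_1+w_2+b=1\) to get \(b=1\), while the paper notes directly that \(\sigma(2-b)=2-b\ge 2\neq 1\); this is the same contradiction.
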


\begin{proof}
Suppose such \(w\) and \(b\) exist. Evaluating at \(a=0\) gives
\(\sigma(b)=0\), hence \(b\leq0\). For each standard basis vector \(e_i\),
the required output is one; therefore, \(\sigma(w_i+b)=1\). Positivity implies
\(w_i+b=1\), hence \(w_i=1-b\). For distinct \(i,j\), OR again requires
output one at \(e_i+e_j\), whereas
\(\sigma(w_i+w_j+b)=\sigma(2-b)=2-b\geq2\), a contradiction.
\end{proof}

OR can nevertheless be realized with width one by adding depth.

\begin{lemma}[Depth-two, width-one realization of multi-input OR]
\label{lem:or-depth-two-width-one-appendix}
For every \(r\geq1\) and \(a\in\{0,1\}^r\),
$
\bigvee_{j=1}^{r}a_j
=
\sigma\left(
1-\sigma\left(1-\sum_{j=1}^{r}a_j\right)
\right)$.
\end{lemma}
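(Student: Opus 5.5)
The identity is verified pointwise on the Boolean cube by a case split on the integer $s\eqdef\sum_{j=1}^{r}a_j\in\{0,\ldots,r\}$, in the same spirit as the proof of Proposition~\ref{prop:relu-gate-modules}. The only property of $\sigma$ needed is that it is the identity on $[0,\infty)$ and vanishes on $(-\infty,0]$. Since every quantity involved is an integer, evaluating the nested ReLUs reduces to clipping integers at zero.

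First I evaluate the inner unit $\sigma(1-s)$. If $s=0$, its argument is $1$, so the inner unit outputs $1$. If $s\geq1$, its argument $1-s$ is nonpositive, so the inner unit outputs $0$. Hence $\sigma(1-s)=\mathbbm{1}\{s=0\}$, which is exactly the negation of the disjunction, i.e.\ a NOR gate.

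Next I evaluate the outer unit $\sigma\bigl(1-\mathbbm{1}\{s=0\}\bigr)$. Its argument $1-\mathbbm{1}\{s=0\}$ lies in $\{0,1\}$, so the outer ReLU acts as the identity and returns $\mathbbm{1}\{s\geq1\}$. Because the $a_j$ are Boolean, $\mathbbm{1}\{s\geq1\}=\max_j a_j=\bigvee_{j=1}^{r}a_j$, which is the claimed identity. The same computation covers $r=1$, where the right-hand side reduces to $\sigma(1-\sigma(1-a_1))=a_1$.

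There is no genuine obstacle here. The only point needing care is to state explicitly that each ReLU argument is an integer, so that the boundary case $1-s=0$ correctly yields $0$. It is also worth remarking, in contrast to Lemma~\ref{lem:or-single-relu-impossibility}, that depth is what makes width one suffice: the inner unit computes NOR in a single ReLU, and the outer unit applies the NOT module of Proposition~\ref{prop:relu-gate-modules}.
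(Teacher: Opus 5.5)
Your proposal is correct and follows essentially the same route as the paper: a case split on whether $\sum_j a_j=0$ or $\sum_j a_j\geq 1$. In the first case the inner ReLU outputs $1$ and the outer outputs $0$; in the second the inner outputs $0$ and the outer outputs $1$. Your added remarks (the inner unit computes NOR, the outer unit is the NOT module, and the case $r=1$ is covered) are accurate but not needed for the argument.
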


\begin{proof}
If \(\sum_j a_j=0\), the inner ReLU equals one and the outer ReLU equals
zero. If \(\sum_j a_j\geq1\), the inner ReLU equals zero and the outer
ReLU equals one.
\end{proof}

Parity also cannot be represented by one affine--ReLU unit as soon as there
are at least two inputs.

\begin{lemma}[A single ReLU unit cannot realize multi-input XOR]
\label{lem:xor-single-relu-impossibility-appendix}
Suppose \(r\geq2\). Then there are no \(w\in\mathbb{R}^{r}\) and
\(b\in\mathbb{R}\) such that
$
\sigma(w^\top a+b)
=
\bigoplus_{j=1}^{r}a_j
$ for every $a\in\{0,1\}^{r}$.
\end{lemma}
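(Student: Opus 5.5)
Suppose, toward a contradiction, that some \(w\in\mathbb{R}^r\) and \(b\in\mathbb{R}\) satisfy \(\sigma(w^\top a+b)=\bigoplus_{j=1}^{r}a_j\) for every \(a\in\{0,1\}^r\). The plan is to restrict attention to the four inputs supported on the first two coordinates, \(0\), \(e_1\), \(e_2\), and \(e_1+e_2\), and to mirror the argument of Lemma~\ref{lem:or-single-relu-impossibility}. All other coordinates are set to zero on these inputs, so the problem reduces to the two-input case.

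First, evaluating at \(a=0\) requires output \(0\), so \(\sigma(b)=0\), which gives \(b\leq0\). Next, at \(a=e_1\) and \(a=e_2\) the parity equals \(1\). Since \(\sigma(u)=1\) forces \(u=1\), we get \(w_1+b=1\) and \(w_2+b=1\), that is, \(w_1=w_2=1-b\).

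Finally, at \(a=e_1+e_2\) the parity equals \(0\), but the preactivation is \(w_1+w_2+b=2-2b+b=2-b\geq2\). Hence the output is \(\sigma(2-b)=2-b\geq2\neq0\), which is a contradiction. (Even with \(b\) unrestricted, the output would be \(2-b>0\), still contradicting the required value \(0\).)

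There is no real obstacle in this argument. The only point needing care is the observation that \(\sigma(u)=1\) forces \(u=1\), which holds because \(\sigma(u)=u\) whenever \(\sigma(u)>0\). The hypothesis \(r\geq2\) is used exactly once, to guarantee that the two distinct coordinates \(e_1\) and \(e_2\) exist.
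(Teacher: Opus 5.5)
Your argument is correct and is the same as the paper's: evaluating at \(0\) gives \(b\le 0\), evaluating at \(e_i\) gives \(w_i=1-b\), and evaluating at \(e_i+e_j\) gives \(\sigma(2-b)=2-b\ge 2\neq 0\). One caveat: your parenthetical claim that the contradiction persists with \(b\) unrestricted is false, since for \(b\ge 2\) one has \(\sigma(2-b)=0\) (e.g.\ \(b=3\), \(w_1=w_2=-2\) is consistent with the required values at \(e_1\), \(e_2\), and \(e_1+e_2\)), so the step \(b\le 0\) obtained from \(a=0\) is genuinely needed and you should drop that aside.
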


\begin{proof}
Suppose such \(w\) and \(b\) exist. At \(a=0\), parity is zero, so
\(b\leq0\). At each \(e_i\), parity is one, implying
\(w_i+b=1\) and hence \(w_i=1-b\). For distinct \(i,j\), parity at
\(e_i+e_j\) is zero, but
\(\sigma(w_i+w_j+b)=\sigma(2-b)=2-b\geq2\), a contradiction.
\end{proof}

\subsubsection{Alternative Exact XOR Realizations}
\label{app:alternative-xor-realizations}

The XOR module in Algorithm~\ref{alg:relu-realization} is chosen to preserve
constant overall depth, at the cost of a parity layer whose width scales with
the number of inputs. Exact alternatives exchange depth for narrower or more
local parity computation.

\paragraph{A width-two composition.}
Following the triangle-map construction of
\citet{telgarsky2015representation}, define
\[
\triangle(x)
 \eqdef  
\sigma\left(
2\sigma(x)-4\sigma\left(x-\frac12\right)
\right).
\]
On \([0,1]\), this is the piecewise-linear triangle
\[
\triangle(x)
=
\begin{cases}
2x, & 0\leq x\leq \frac12,\\
2-2x, & \frac12\leq x\leq1.
\end{cases}
\]
The following lemma shows why repeated composition computes parity on an
appropriate dyadic grid. For \(k\in\mathbb{N}_{+}\), let
\(\triangle^{\circ k}\)
denote the \(k\)-fold composition of \(\triangle\) with itself; that is,
\(\triangle^{\circ 1}=\triangle\) and
\(\triangle^{\circ k}
=
\triangle\circ\triangle^{\circ(k-1)}\)
for \(k\geq2\).

\begin{lemma}[Parity on dyadic grids]
\label{lem:triangle-dyadic-parity}
For every integer \(k\geq1\) and every
\(s\in\{0,\ldots,2^k\}\),
\[
\triangle^{\circ k}\!\left(\frac{s}{2^k}\right)
=
\begin{cases}
0, & s \text{ even},\\
1, & s \text{ odd}.
\end{cases}
\]
\end{lemma}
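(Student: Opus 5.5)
I will argue by induction on \(k\), using the explicit piecewise-linear form of \(\triangle\) on \([0,1]\) recorded just above. First I check that \(\triangle\) maps \([0,1]\) into \([0,1]\). Then I show that a single application of \(\triangle\) maps the dyadic grid of level \(k\) onto the grid of level \(k-1\), in a way that respects parity. Iterating this reduces the claim to the base case.

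\textbf{Base case.} For \(k=1\), the grid is \(\{0,\tfrac12,1\}\). From the formula, \(\triangle(0)=0\), \(\triangle(\tfrac12)=1\) and \(\triangle(1)=0\). This matches the parities of \(s=0,1,2\).

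\textbf{Key reduction step.} For \(k\ge2\) and \(s\in\{0,\ldots,2^k\}\), I will compute \(\triangle(s/2^k)\) explicitly by splitting into two cases.
\begin{itemize}
\item If \(s\le 2^{k-1}\), then \(s/2^k\le\tfrac12\), so \(\triangle(s/2^k)=2s/2^k=s/2^{k-1}\).
\item If \(s\ge 2^{k-1}\), then \(\triangle(s/2^k)=2-2s/2^k=(2^k-s)/2^{k-1}\).
\end{itemize}
In both cases the image has the form \(s'/2^{k-1}\) with \(s'\in\{0,\ldots,2^{k-1}\}\), where \(s'=s\) or \(s'=2^k-s\). Since \(2^k\) is even, \(s'\equiv s \pmod 2\). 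The two formulas agree at the boundary value \(s=2^{k-1}\), so this case needs no separate treatment.

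\textbf{Induction.} I write \(\triangle^{\circ k}=\triangle^{\circ(k-1)}\circ\triangle\). This is the same function as \(\triangle\circ\triangle^{\circ(k-1)}\), because both are \(k\)-fold compositions of one map; I will note this in one line. Then
\[
\triangle^{\circ k}(s/2^k)=\triangle^{\circ(k-1)}(s'/2^{k-1}).
\]
By the induction hypothesis, the right-hand side equals the parity of \(s'\), which equals the parity of \(s\).

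\textbf{Main obstacle.} The only delicate point is bookkeeping. The reduction must stay on the grid \(\{0,\ldots,2^{k-1}\}\) of the next level down, so that the induction hypothesis applies. The two affine branches must also be checked to agree at \(x=\tfrac12\). Both follow from the explicit case split above, so I expect no genuine difficulty.
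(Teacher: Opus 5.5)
Your proposal is correct and follows essentially the same argument as the paper. Both use induction on $k$ with base case $\{0,\tfrac12,1\}$, then a single application of $\triangle$ sending $s/2^k$ to $s'/2^{k-1}$ with $s'\in\{s,\,2^k-s\}\subseteq\{0,\ldots,2^{k-1}\}$ of the same parity, and then the induction hypothesis. Your explicit remark that $\triangle^{\circ(k-1)}\circ\triangle=\triangle\circ\triangle^{\circ(k-1)}$ is a small improvement, since the paper uses this peeling order tacitly even though it defines the iterated composition the other way.
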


\begin{proof}
We use induction on \(k\). For \(k=1\), \(\triangle(0)=0\), \(\triangle(\tfrac12)=1\), and \(\triangle(1)=0\). Therefore, the claim holds for the base case.
Now assume the result holds for \(k-1\), and fix
\(s\in\{0,\ldots,2^k\}\). By the piecewise definition of
\(\triangle\),
\[
\triangle\!\left(\frac{s}{2^k}\right)
=
\begin{cases}
\displaystyle \frac{s}{2^{k-1}},
& s\leq 2^{k-1},\\[0.8em]
\displaystyle \frac{2^k-s}{2^{k-1}},
& s\geq 2^{k-1}.
\end{cases}
\]
Thus, after the first application of \(\triangle\), the argument has the
form \(s'/2^{k-1}\), where
\[
s'
=
\begin{cases}
s, & s\leq 2^{k-1},\\
2^k-s, & s\geq 2^{k-1}.
\end{cases}
\]
In either case, \(s'\in\{0,\ldots,2^{k-1}\}\). Moreover, since \(2^k\) is
even, \(s'\) has the same parity as \(s\).
Applying the remaining \(k-1\) copies of \(\triangle\) and using the
induction hypothesis therefore gives
\[
\triangle^{\circ k}\!\left(\frac{s}{2^k}\right)
=
\triangle^{\circ(k-1)}\!\left(\frac{s'}{2^{k-1}}\right)
=
\begin{cases}
0, & s' \text{ even},\\
1, & s' \text{ odd}.
\end{cases}
\]
Since \(s'\) and \(s\) have the same parity, this is exactly
\[
\triangle^{\circ k}\!\left(\frac{s}{2^k}\right)
=
\begin{cases}
0, & s \text{ even},\\
1, & s \text{ odd}.
\end{cases}
\]
This completes the induction.
\end{proof}

\begin{proposition}[Width-two XOR realization]
\label{prop:telgarsky-xor-realization}
Let \(r\geq2\), \(k \eqdef  \lceil\log_2 r\rceil\), and
\(a\in\{0,1\}^{r}\). Then
\[
\triangle^{\circ k}\!\left(
2^{-k}\sum_{j=1}^{r}a_j
\right)
=
\bigoplus_{j=1}^{r}a_j.
\]
Consequently, \(r\)-input XOR admits an exact realization with maximum width
two and depth \(2\lceil\log_2 r\rceil\).
\end{proposition}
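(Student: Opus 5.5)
Since $a\in\{0,1\}^r$, the quantity $s\eqdef\sum_{j=1}^r a_j$ is an integer in $\{0,\ldots,r\}$. With $k=\lceil\log_2 r\rceil$ we have $r\le 2^k$, so $s\in\{0,\ldots,2^k\}$. The first claim is then a direct application of Lemma~\ref{lem:triangle-dyadic-parity} at the dyadic point $s/2^k$: it gives $\triangle^{\circ k}(s/2^k)=s\bmod 2$. The parity of $s$ is exactly $\bigoplus_{j=1}^r a_j$, which proves the identity. For $r\ge2$ we have $k\ge1$, so the lemma applies.

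For the architectural claim, I will write out $\triangle$ as a small ReLU network. The map $x\mapsto(\sigma(x),\sigma(x-\tfrac12))$ is one hidden layer of width two. The outer unit $\sigma(2\sigma(x)-4\sigma(x-\tfrac12))$ is a second layer of width one. Hence each copy of $\triangle$ costs two layers of width at most two.

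The composition $\triangle^{\circ k}$ stacks $k$ such blocks. The input to the first block is the affine map $a\mapsto 2^{-k}\sum_j a_j$, which is absorbed into the first layer's preactivations. The output of each block is a single ReLU unit, and it feeds affinely into the next block's width-two layer. This yields depth $2k=2\lceil\log_2 r\rceil$ with maximum width two.

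If one wishes to avoid counting the width-one outer unit as its own layer, it can instead be merged into the next block's affine maps. This changes the constants but not the claimed bounds.

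The only point requiring care is the range check. Lemma~\ref{lem:triangle-dyadic-parity} is stated only for $s\le 2^k$, and the intermediate arguments must stay in $[0,1]$, where the piecewise formula for $\triangle$ is valid. Both conditions are guaranteed by the choice of $k$ and by the lemma's own proof. The layer counting also depends on the depth convention, and I will state it to be consistent with the paper's convention.
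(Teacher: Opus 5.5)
Your proposal is correct and follows the paper's proof essentially verbatim. Like the paper, you note $s=\sum_j a_j\in\{0,\ldots,r\}\subseteq\{0,\ldots,2^k\}$, apply Lemma~\ref{lem:triangle-dyadic-parity} at $s/2^k$, and count each copy of $\triangle$ as a width-two ReLU layer followed by a scalar ReLU layer, which gives depth $2k$ and maximum width two. Your extra checks that $k\ge 1$ and that the intermediate arguments stay in $[0,1]$ are harmless details the paper leaves implicit.
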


\begin{proof}
Let \(s(a) \eqdef  \sum_{j=1}^{r}a_j\). Since
\(s(a)\in\{0,\ldots,r\}\subseteq\{0,\ldots,2^k\}\),
Lemma~\ref{lem:triangle-dyadic-parity} gives
\(\triangle^{\circ k}(2^{-k}s(a))=s(a)\bmod2\), which is exactly
\(\bigoplus_{j=1}^{r}a_j\). Each copy of \(\triangle\) consists of a width-two ReLU layer followed by a scalar ReLU layer, meaning that stacking \(k\) copies gives depth \(2k\) and maximum width two.
\end{proof}

\paragraph{A depth-parameterized construction.}
A second alternative interpolates between one shallow high-arity XOR and a
deeper composition of smaller XOR modules.

\begin{proposition}[Depth-parameterized XOR realization]
\label{prop:depth-parameterized-xor}
Let \(r,d\geq1\) be arbitrary positive integers, and \(q \eqdef  \lceil r^{1/d}\rceil\). There exists an exact
ReLU realization of \(\bigoplus_{j=1}^{r}a_j\) on \(\{0,1\}^{r}\)
obtained by composing at most \(d\) collections of XOR modules, each having
arity at most \(q\). The resulting network has depth at most \(2d\),
maximum layer width at most \(r\), and at most \(2dr\) non-input neurons.
Equivalently, for prescribed even depth \(L=2d\), the required local XOR
arity is at most \(\lceil r^{2/L}\rceil\).
\end{proposition}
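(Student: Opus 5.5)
The construction is a balanced tree of XOR modules. Since XOR is associative, we partition the $r$ inputs into blocks of size at most $q\eqdef\lceil r^{1/d}\rceil$ and compute each block's parity with the module $\psi_{\mathrm{XOR}}^{(q')}$ of Proposition~\ref{prop:relu-gate-modules} (arity $q'\le q$). We then recurse on the resulting block parities. After $\ell$ rounds the number of surviving signals is at most $\lceil r/q^{\ell}\rceil$. Because $q^d\ge r$, after at most $d$ rounds a single signal remains, and it equals $\bigoplus_{j=1}^r a_j$.

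Correctness follows by induction on the round index. Each round's inputs are Boolean, since the previous round's outputs are parities and hence lie in $\{0,1\}$. Proposition~\ref{prop:relu-gate-modules} therefore applies at every round, and associativity and commutativity of $\oplus$ show that the parity of all block parities is the full parity. If the number of signals $n_\ell$ drops to one before round $d$, we simply stop; that is why the statement says ``at most $d$''. A block of size one can be passed through as an identity, via $\sigma(u)=u$ on $\{0,1\}$ or by using $\psi_{\mathrm{XOR}}^{(1)}$.

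For depth, each XOR module contributes two ReLU layers: the hidden parity features $\sigma(s-k)$ and the outer ReLU. The outer ReLU can be kept as a layer, or its affine map absorbed into the next round's sums. Stacking at most $d$ rounds therefore gives depth at most $2d$.

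For width, a round acting on $n_\ell$ signals uses blocks of sizes $q_{\ell,b}$ with $\sum_b q_{\ell,b}=n_\ell\le r$. A module of arity $q'$ has hidden width $q'$: it uses the $q'$ units $\sigma(s)$ and $\sigma(s-k)$ for $k=1,\ldots,q'-1$. So the hidden layer of round $\ell$ has width $n_\ell\le r$, and its output layer has width $\lceil n_\ell/q\rceil\le r$. Summing, each round uses at most $n_\ell+\lceil n_\ell/q\rceil\le 2r$ neurons, which gives at most $2dr$ neurons in total. A sharper geometric sum is available, but the crude bound suffices.

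For the last claim, set $L=2d$ and substitute $q=\lceil r^{1/d}\rceil=\lceil r^{2/L}\rceil$.

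The only step requiring care is the bookkeeping for ragged blocks: when $q\nmid n_\ell$, we must check that $n_{\ell+1}=\lceil n_\ell/q\rceil$ still reaches $1$ within $d$ rounds. This follows from $\lceil\lceil x\rceil/q\rceil=\lceil x/q\rceil$, so that $n_\ell=\lceil r/q^\ell\rceil$ and hence $n_d=\lceil r/q^d\rceil=1$. We must also keep the width and neuron counts consistent with whichever convention is used for the outer ReLU; the same convention as in Theorem~\ref{thm:exact-neural-realization} is adopted.
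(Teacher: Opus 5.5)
Your proposal is correct and follows essentially the same route as the paper: a layered tree of XOR modules from Proposition~\ref{prop:relu-gate-modules}, the recursion $n_t=\lceil n_{t-1}/q\rceil=\lceil r/q^t\rceil$ with $q^d\geq r$, and per-stage accounting of $n_{t-1}\leq r$ hidden units plus $n_t\leq r$ outputs, which gives depth at most $2d$, width at most $r$, and at most $2dr$ non-input neurons. Your explicit justification of the nested-ceiling identity and your handling of singleton blocks fill in details that the paper leaves implicit.
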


\begin{proof}
Let \(n_0 \eqdef  r\). At composition stage \(t \geq 1\), partition the
\(n_{t-1}\) current Boolean signals into groups of size at most \(q\), and
replace each group by its parity using
Proposition~\ref{prop:relu-gate-modules}. If \(n_t\) is the number of resulting
signals, then
\(n_t=\lceil n_{t-1}/q\rceil=\lceil r/q^t\rceil\). Since \(q^d\geq r\),
we have \(n_d\leq1\), ensuring that after at most \(d\) stages one parity signal
remains. Associativity and commutativity of XOR imply that this signal is
\(a_1\oplus\cdots\oplus a_r\).

Each local XOR module has depth two, and all modules at one composition stage
operate in parallel, giving total depth at most \(2d\). If the group sizes at
stage \(t\) are \(\ell_1,\ldots,\ell_{n_t}\), their hidden widths sum to
\(\sum_j\ell_j=n_{t-1}\leq r\), while the output width is
\(n_t\leq r\). Hence every layer has width at most \(r\). Each stage uses
at most \(n_{t-1}+n_t\leq2r\) non-input neurons. Therefore, the total is at most
\(2dr\). Finally, when \(L=2d\),
\(q=\lceil r^{1/d}\rceil=\lceil r^{2/L}\rceil\).
\end{proof}

The direct XOR module of Proposition~\ref{prop:relu-gate-modules} corresponds to
the constant-depth endpoint \(d=1\). Increasing \(d\) reduces the largest
local parity arity, while Proposition~\ref{prop:telgarsky-xor-realization}
pushes further toward constant global width at logarithmic depth. These
alternatives are not used in Algorithm~\ref{alg:relu-realization}; they are
included only to document exact architectural tradeoffs available without
changing the Boolean computation.

\vskip 0.2in
\bibliography{references}

\end{document}